\PassOptionsToPackage{noend}{algorithmic}
\documentclass{article}

    \PassOptionsToPackage{numbers, compress}{natbib}
\usepackage[preprint]{neurips_2026}

\usepackage[utf8]{inputenc} % allow utf-8 input
\usepackage[T1]{fontenc}    % use 8-bit T1 fonts
\usepackage{hyperref}       % hyperlinks
\usepackage{url}            % simple URL typesetting
\usepackage{booktabs}       % professional-quality tables
\usepackage{amsfonts}       % blackboard math symbols
\usepackage{nicefrac}       % compact symbols for 1/2, etc.
\usepackage{microtype}      % microtypography
\usepackage[table]{xcolor}         % colors

\title{DARLING: Detection Augmented Reinforcement Learning with Non-Stationary Guarantees}

\usepackage{amsmath}
\usepackage{amssymb}
\usepackage{mathtools}
\usepackage{amsthm}
\usepackage{tikz}
\usetikzlibrary{decorations.pathreplacing}
\usepackage{algorithm}
\usepackage{algorithmic}

\usepackage{xcolor}
\usepackage{dsfont}
\usepackage{soul} % This allows use of \st for strike-through
\usepackage{multirow}
\usepackage{subcaption}

\usepackage{pifont}
\newcommand{\cmark}{\ding{51}}%
\newcommand{\xmark}{\ding{55}}%

\theoremstyle{plain}
\newtheorem{theorem}{Theorem}[section]
\newtheorem{proposition}[theorem]{Proposition}
\newtheorem{lemma}[theorem]{Lemma}

\theoremstyle{definition}
\newtheorem{definition}[theorem]{Definition}
\newtheorem{assumption}[theorem]{Assumption}
\theoremstyle{plain}
\newtheorem{property}[theorem]{Property}

\author{%
  Argyrios Gerogiannis\:\quad Yu-Han Huang\:\quad Venugopal V. Veeravalli\\
  ECE and CSL,
  The Grainger College of Engineering\\ University of Illinois at Urbana-Champaign \\
  \texttt{\{ag91,yuhanhh2,vvv\}@illinois.edu} \\
}

\begin{document}

\maketitle

\begin{abstract}

We study model-free reinforcement learning (RL) in non-stationary finite-horizon episodic Markov decision processes (MDPs) without prior knowledge of the non-stationarity. We focus on the piecewise stationary (PS) setting, where both rewards and transition dynamics can change at unknown times. We first revisit existing state-of-the-art approaches and identify theoretical and practical limitations that change the current landscape of performance guarantees. To characterize the difficulty of the problem, we establish the first minimax lower bounds for PS-RL in tabular and linear MDPs. We then introduce \emph{Detection Augmented Reinforcement Learning} (DARLING), a modular wrapper for PS-RL that applies to both tabular and linear MDPs, without knowledge of the changes. In tabular MDPs, under change-point separability and reachability conditions, DARLING improves the best known dynamic regret bounds and matches our minimax lower bound. In linear MDPs, DARLING matches the minimax lower bound when the relevant reachability parameters are known, and our analysis clarifies the structural obstacles that distinguish this setting from the tabular case. Finally, through extensive experimentation across diverse non-stationary benchmarks, we show that DARLING consistently surpasses the state-of-the-art methods.

\end{abstract}

\section{Introduction}

Reinforcement Learning (RL) studies sequential decision making in unknown environments, typically modeled as Markov decision processes (MDPs), with the goal of maximizing cumulative reward \citep{sutton2018reinforcement}. 
Most RL algorithms assume a \emph{stationary} environment, where rewards and transition dynamics are fixed but unknown. 
In many real-world applications, however, this assumption is violated: environments evolve due to changing conditions. 
Such non-stationarity is central to applications including clinical treatment planning \cite{shortreed2011treatment}, real-time bidding \cite{cai2017realtimebidding}, inventory management \cite{agrawal2019inventory}, and traffic control \cite{chen2020trafficmanage}. 
In such settings, stationary guarantees no longer apply and performance can degrade significantly \citep{ortner2020whystationarybadinns}, motivating the development of algorithms for \emph{non-stationary} (NS) RL.

Non-stationarity is often divided into two regimes: \emph{drifting} changes, where the MDP evolves gradually, and \emph{abrupt} changes, where the environment shifts at discrete times. The latter is captured by the \emph{piecewise stationary} (PS) model, where the MDP remains stationary on the segments separated by change-points. While drifting models have received substantial attention \cite{ortner2020variational_budget_restart_tabular_modelbased,cheung2020reinforce_sliding_window_tabular_modelbased,touati2021optwlsvi,zhou2022restartlsviucb,mao2022restartqucb,domingues2020kernel_sliding_window_discounted_modelbased}, the PS setting remains under-explored in RL \cite{gajane2018sliding_window_tabular_modelbased}. Recent results in the NS bandit literature indicate that algorithms designed for PS can be empirically robust even under drift and on experiments that deviate from strict piecewise stationarity, outperforming approaches explicitly tuned for drifting settings \cite{gerogiannis2025dal}. This suggests that the PS model can yield effective methods beyond the nominal regime of validity.

Existing NS-RL algorithms differ along three dimensions: (i) \emph{prior knowledge} of the non-stationarity (e.g., change frequency or variation budgets), (ii) the \emph{adaptation mechanism} used to respond to changes, and (iii) whether the method is \emph{model-based} or \emph{model-free}. Model-based approaches attempt to track the underlying dynamics as the environment evolves; while theoretically appealing, they can incur substantial computational and memory overhead and may degrade under drift due to model mis-specification and estimation error \cite{cheung2020reinforce_sliding_window_tabular_modelbased,mao2022restartqucb}. Thus, we focus on \emph{model-free} methods.

Within NS-RL, the dominant design axis is the adaptation mechanism, which yields three widely used paradigms: (i) \emph{discounted/sliding window} methods \cite{gajane2018sliding_window_tabular_modelbased,cheung2020reinforce_sliding_window_tabular_modelbased,touati2021optwlsvi,domingues2020kernel_sliding_window_discounted_modelbased}, (ii) \emph{budget-restart} methods \cite{jaksch2010bugdget_restart_tabular_modelbased,ortner2020variational_budget_restart_tabular_modelbased,mao2022restartqucb,zhou2022restartlsviucb}, and (iii) \emph{detection-restart} methods \cite{wei2021master}. Discounted and sliding-window approaches are \emph{adaptive}, continuously down-weighting or discarding older data, whereas budget-restart and detection-restart approaches are \emph{restarting} strategies that periodically or conditionally reset the learning process. These paradigms are prevalent in the NS multi-armed bandit literature, which has served as a canonical testbed for studying non-stationarity in online learning \cite{garivier2011sw,besbes2014nsmabs,besson2022efficient}. An additional discussion on the NS bandit literature is given in the Appendix. Among these paradigms, detection-restart methods are distinctive in enabling \emph{prior-free} design with optimal guarantees: they do not require knowledge of the timing, frequency, or magnitude of changes, in contrast to discounted, sliding-window, and budget-restart methods whose performance depends on tuned parameters that encode such prior information. Recent results further suggest that restarting strategies can enjoy more favorable worst-case complexity guarantees than fully adaptive schemes \citep{peng_papadim_2024}.

Despite the appeal of \emph{prior-free}, \emph{model-free} detection--restart, a theory--practice gap remains. To our knowledge, MASTER \citep{wei2021master} is the only algorithm in this class with performance guarantees, yet recent empirical work shows that its internal detection can be practically unreliable, leading to performance far worse than competing alternatives \citep{gerogiannis2024blackboxfeas,gerogiannis2025dal}. On the theory side, to our knowledge, minimax lower bounds for PS episodic MDPs have been unavailable, obscuring the difficulty of PS-RL and the optimality landscape. Importantly, after revisiting MASTER we discovered some errors in its analysis, which could possibly explain its poor performance. These gaps motivate new PS-RL methods that are simultaneously prior-free, theoretically grounded, and empirically robust.

\begin{table*}[t]
\small
\caption{Dynamic regret comparison of algorithms in PS episodic, finite-horizon tabular and linear MDPs, under Assumptions \ref{assum:cp_assum},  \ref{assum:reachability} (tabular) and \ref{assum:reachability_lin_mdps}, \ref{assum:cp_assum_lin_mdps} (linear). $S$ is the number of states, $A$ is the number of actions, $d$ is the dimension of the feature space for the linear case, $T$ is the number of episodes, $H$ is the number of steps per episode and $N_T$ is the number of changes. Prior-free means no knowledge about the number of changes $N_T$. Gray cells denote results from this work.}
\label{table:reg_bounds}
\vskip 0.11in
\centering
\begin{tabular}{@{}l|c|c|c@{}}
\toprule
    \shortstack{\textbf{Setting}} & \shortstack{\textbf{Algorithm}} & \shortstack{\textbf{Regret} }& \shortstack{\textbf{Prior-Free}} \\
\midrule
\multirow{7}{*}{\shortstack{Tabular\\MDPs}}
& RestartQ-UCB \citep{mao2022restartqucb}& $\tilde{\mathcal{O}}(S^{3/4} A^{3/4}H^{5/3} N_T^{1/3} T^{2/3})$ &  \xmark \\ 
& Double-Restart Q-UCB \citep{mao2022restartqucb} & $\tilde{\mathcal{O}}(S^{1/3} A^{1/3} H^{5/3} N_T^{1/3}T^{2/3}
+ H^{6/4} T^{3/4})$ & \cmark \\ 
& \cellcolor{gray!40} DARLING + UCMQ \cite{menard2021ucbmq} & $\tilde{\mathcal{O}}(\sqrt{S AH^{3} N_T T})$ & \cmark \\ 
& \cellcolor{gray!40} Lower Bound & $\Omega(\sqrt{SAH^3N_TT})$
 &  \\ 
\midrule
\multirow{8}{*}{\shortstack{Linear\\MDPs}}
& OPT-WLSVI \citep{touati2021optwlsvi} & $\tilde{\mathcal{O}}(d^{5/4} H^{2} N_T^{1/4} T^{3/4})$ &  \xmark \\ 
& LSVI-UCB-Restart \citep{zhou2022restartlsviucb} & $\tilde{\mathcal{O}}(d^{4/3} H^{2} N_T^{1/3} T^{2/3})$ &  \xmark \\ 
& ADA-LSVI-UCB-Restart \citep{zhou2022restartlsviucb} & $\tilde{\mathcal{O}}(d^{5/4} H^{2} N_T^{1/4} T^{3/4})$ &  \cmark \\ 
& \cellcolor{gray!40} DARLING + LSVI-UCB$++$ \citep{he2023lsviucbplusplus} & $\tilde{\mathcal{O}}(d\sqrt{ H^3N_T T})$ & \cmark \\
& \cellcolor{gray!40} Lower Bound & $\Omega(d\sqrt{H^3N_TT})$
 &  \\ 
\bottomrule
\end{tabular}
\end{table*}

\textbf{Contributions} We revisit the analysis of MASTER and identify flaws in its proof, along with other issues in the NS-RL literature. We then establish the \emph{first}, to our knowledge, minimax lower bounds for PS episodic MDPs in both tabular and linear settings. We propose DARLING, a modular and \emph{prior-free} detection-restart framework for PS episodic MDPs, which can augment \emph{any} base RL algorithm with order-optimal stationary regret, lifting them to the PS setting. We instantiate DARLING for tabular and linear MDPs and show that DARLING is the first, to our knowledge, \emph{nearly-optimal} algorithm, improving upon the best known prior-free guarantees. Finally, we evaluate extensively on PS and drifting benchmarks against state-of-the-art prior-free and prior-based baselines, where DARLING consistently outperforms all alternatives and remains robust beyond its nominal regime.

\section{Problem Formulation}
\label{sec:problem_form}
Let $[n]\coloneqq \{1, \dots, n\}$.
We study episodic RL over $T$ episodes with horizon $H$. We index time by $(t,h)$ for episode $t\in[T]$ and step $h\in[H]$. The environment is an episodic MDP with state space $\mathcal{S}$ ($|\mathcal{S}|=S$), action space $\mathcal{A}$ ($|\mathcal{A}|=A$), and step-dependent reward and transition functions $\{r_h^t, P_h^t\}$. At $(t,h)$, after taking action $a_h^t$ in state $s_h^t$, the agent observes $R_h^t(s_h^t,a_h^t)\in[0,1]$ with mean $r_h^t(s_h^t,a_h^t)$ and transitions to $s_{h+1}^t \sim P_h^t(\cdot\mid s_h^t,a_h^t)$. The episode ends at $s_{H+1}^t$.

\textbf{Value Functions and Bellman Equations} A deterministic policy $\pi: [T] \times [H] \times \mathcal{S} \rightarrow \mathcal{A}$ maps the time index and the current state to the selected action; we let $\pi_h^t(s)$ denote the chosen action at time $(t, h)$ when the current state is $s$. Under policy $\pi$, the value function $V_h^{t,\pi}: \mathcal{S} \rightarrow \mathbb{R}$ and the corresponding state-action value function $Q_h^{t,\pi}: \mathcal{S} \times \mathcal{A} \rightarrow \mathbb{R}$ at time $(t, h)$ are:
\begin{align*}
&V_{h}^{t,\pi}(s):=\mathbb{E}\left[\sum_{h^{\prime}=h}^{H} r^t_{h^{\prime}}\left(s^{t}_{h^{\prime}}, \pi_{h^{\prime}}^t\left(s^{t}_{h^{\prime}}\right)\right) \Bigg| s^{t}_{h}=s\right],\\
&Q_{h}^{t,\pi}(s, a):= r^t_{h}(s, a) +
\mathbb{E}\left[\sum_{h^{\prime}=h+1}^{H} r_{h^{\prime}}^t\left(s^{t}_{h^{\prime}}, \pi_{h^{\prime}}^t\left(s^{t}_{h^{\prime}}\right)\right) \mid s^{t}_{h}=s, a^{t}_{h}=a\right]
\end{align*}
where $s_{h^{\prime}+1}^{t} \sim P_{h^{\prime}}^{t}\left(\cdot \mid s_{h^{\prime}}^{t}, a_{h^{\prime}}^{t}\right)$. For brevity, let $P^t_h V_{h+1}^{t,\pi}(s, a) := \mathbb{E}_{s' \sim P^t_h(\cdot \mid s, a)}[V^{t,\pi}_{h+1}(s')]$. The Bellman equations give $V^{t,\pi}_h(s) = Q_h^{t,\pi}(s, \pi_h^t(s))$ and $Q_h^{t,\pi}(s, a) = (r^t_h + P_h^t V^{t,\pi}_{h+1})(s, a)$, with $V_{H+1}^{t,\pi}(s) = 0$ for all $s \in \mathcal{S}$.  There exists an optimal policy $\pi^\star$ that leads to the optimal value function $V_h^{t,\star}(s) := \sup_\pi V_h^{t,\pi}(s)$ for all $(s, t, h)$. From the Bellman optimality equation, $V^{t,\star}_h(s) = \max_{a \in \mathcal{A}} Q_h^{t,\star}(s, a)$; $Q_h^{t,\star}(s, a) := (r^t_h + P_h^t V^{t,\star}_{h+1})(s, a)$.

\textbf{Linear MDP.} We also consider a class of MDPs called \textit{linear MDPs}~\citep{jin2020lsviucb}. Linear MDPs assume both $P^t_h$ and $r^t_h$ are linear in a known feature map $\phi:\mathcal{S}\times\mathcal{A}\to\mathbb{R}^d$, such that for any $(t,h)\in[T]\times[H]$, there exist $d$ unknown measures $\mu_{h,t}=(\mu_{h,t}^1,\ldots,\mu_{h,t}^d)^\top$ on $\mathcal{S}$ and $\theta_{h,t}\in\mathbb{R}^d$ 
$
    P_h^t(s' \mid s, a) = \phi(s, a)^\top \mu_{h, t}(s'), \:\: r_h^t(s, a) = \phi(s, a)^\top \theta_{h, t}.
$
Without loss of generality, we assume $\|\phi(s, a)\|_2 \leq 1$ for all $(s, a)$, and $\max\{\|\mu_{h, t}(\mathcal{S})\|_2, \|\theta_{h, t}\|_2\} \leq \sqrt{d}$ for all $(h, t)$. For linear MDPs, the state space is possibly countably infinite ($S = \infty$), while the action space is finite.

\textbf{Dynamic Regret}  We evaluate performance using \emph{dynamic regret} \citep{cheung2020reinforce_sliding_window_tabular_modelbased, mao2022restartqucb}, which compares the agent's policy $\pi$ against the optimal policy for each episode in hindsight:
\begin{equation*}
\mathcal{R}(\pi, T) := \sum_{t=1}^{T} \left( V_{1}^{t,\star}(s_{1}^{t}) - V_{1}^{t,\pi}(s_{1}^{t}) \right),
\end{equation*}
where the initial state $s_1^t$ for each episode is selected by an oblivious adversary \citep{cheung2020reinforce_sliding_window_tabular_modelbased,mao2022restartqucb}. Thereupon, the goal of the agent is to minimize the dynamic regret with respect to the time-dependent policy $\pi$.

\textbf{Non-Stationarity Measure} In stationary MDPs, $r^{t}_{h}$ and $P^{t}_{h}$ remain the same with respect to episodes, i.e., with respect to $t$. In the PS setting, the MDP undergoes abrupt changes at $N_{T}$ unknown episodes, termed as change-points. Specifically, let
\begin{align*}
1 =: \nu_0 < \nu_1 < \dots < \nu_{N_T} < \nu_{N_T + 1} := T+1,
\end{align*}
denote the change-points. Then, 
$r_h^t$ and $P_h^t$ for each step remain the same across all $t\in \{\nu_k, \dots, \nu_{k+1}-1\}$ and at least either $r_h^t$ or $P_h^t$ at some step $h$ changes at $\nu_{k+1}$, i.e., $r_{h}^{t} = r_{h}^{t'}$ and $P_{h}^{t} = P_{h}^{t'}$ for all $h \in [H]$ and $t \in \{\nu_k, \dots, \nu_{k+1}-1\}$, and there exists a step $h \in [H]$ such that $r_{h}^{\nu_{k+1}} \neq r_{h}^{\nu_{k}}$ or $P_{h}^{\nu_{k+1}} \neq P_{h}^{\nu_{k}}$.
In the PS setting, any prior-free method aims aims to solve the problem without knowledge of $N_T$, which is a central goal of our work.

\subsection{Issues with Prior-free State-of-the-Art Approaches}
\label{sec:issues}

In tabular and linear NS-MDPs, MASTER \cite{wei2021master} is the state-of-the-art method that achieves the best regret performance theoretically without using knowledge about the non-stationarity. It converts an algorithm satisfying certain properties into a non-stationary procedure by restarting its learning process whenever its non-stationary tests raise an alarm. By applying the proof of Theorem 1 in \cite{gerogiannis2024blackboxfeas}, we find that MASTER requires at least $1.442 \times 10^{19}$ episodes in tabular MDPs and at least $7.7317 \times 10^{20}$ episodes in our simplest experimental settings in order for its detection mechanism to possibly trigger. 
%In addition, the empirical performance of MASTER is eclipsed by other methods by a huge margin as shown later in the experiment section. 
Looking deeper into MASTER's analysis, we discover an error in the regret analysis that, to our knowledge, can not be fixed, which we delineate in Appendix \ref{sec:error_master}. To briefly explain the error, the authors construct an i.i.d. sequence of Bernoulli random variables for scheduling multiple algorithm instances (Algorithm 2 in \cite{wei2021master}). When they compute the probability of these Bernoulli random variables in Lemma 17 in \cite{wei2021master}, they condition on an event that changes the distribution. However, they still treat the distribution as the same as the one without conditioning. We demonstrate that even if they remove the conditioning, the probability is different from what they computed. Since Lemma 17 is the foundation for the regret analysis, this error renders the regret upper bound invalid. 

A related line of work studies non-stationary low-rank MDPs~\citep{cheng2023lowrankreach} and proposes a prior-free, order-optimal algorithm. Although this setting is outside the scope of our main results, it faces the same core difficulty as our linear-MDP extension: the transition model has linear structure while the state space may be infinite.  A key condition in their analysis is the following reachability assumption.
\begin{assumption}\label{assum:strongest_assum}
For each round $t$ and step $h$, the transition kernel $P_h^t$ satisfies that for any $(s,a,s')\in \mathcal{S}\times \mathcal{A}\times\mathcal{S}$, $P_h^t(s'|s,a)\geq p_{\mathrm{m}}>0$. 
\end{assumption}
The issue with Assumption~\ref{assum:strongest_assum} is that it becomes vacuous in infinite state spaces, as if $\mathcal S$ is countably infinite this would make the total transition mass diverge. 
Thus, the condition can only hold with $p_{\mathrm m}=0$, while the bounds in~\citep{cheng2023lowrankreach} depend on $1/p_{\mathrm m}$. At the same time, the intuition behind the assumption is unavoidable: to detect changes in either the tabular or linear setting, the states or directions where the environment may change must be visited with non-zero probability. 
The difficulty is that, when the state space is infinite, visiting every state is impossible, so uniform state-wise reachability is too strong.  This is the main design challenge for our linear-MDP extension.  We address it by replacing uniform reachability over states with a structure-aware reachability condition, which captures only the directions needed to identify changes in the linear model.

\section{Performance Bounds of PS-RL}
\label{sec:lower_bounds}

The current literature lacks a minimax regret lower bound for the PS setting, which is essential to establish optimality. While a minimax regret lower bound exists for drifting non-stationarity in \cite{mao2022restartqucb,zhou2022restartlsviucb}, we cannot extend these results to the PS setting, as these two settings do not subsume each other. To this end, we provide information-theoretic lower bounds of the dynamic regret to characterize the fundamental limits in PS-RL in finite-horizon tabular and linear MDPs.

\begin{theorem}
    \label{thrm:ps-mdp-lower}
    For any algorithm, there exists an episodic, finite-horizon, tabular PS-MDP such that the dynamic regret of the algorithm is at least $\Omega(\sqrt{SAH^3 N_T T})$.
\end{theorem}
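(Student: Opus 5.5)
The plan is to reduce the piecewise stationary problem to $N_T+1$ independent stationary problems and apply a known stationary minimax lower bound on each segment. We use the stationary episodic lower bound for tabular MDPs with step-dependent transitions: for any algorithm and any number of episodes $K \gtrsim SAH$, there is a stationary episodic MDP on which the regret over $K$ episodes is $\Omega(\sqrt{SAH^3K})$. This is the bound of Domingues et al. (2021), or equivalently the $\sqrt{H^2SAT_{\text{steps}}}$ bound with $T_{\text{steps}}=HK$. Its hard instances are built from a family $\{M_{(\ell^\star,a^\star)}\}$: a tree of depth $\approx\log_A S$ leads to $\Theta(S)$ leaves, and at one leaf-action pair $(\ell^\star,a^\star)$ and one waiting-step index $h^\star$ the probability of reaching a rewarding absorbing state is raised by $\epsilon$. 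There is also a reference instance $M_0$ with no bias. The lower bound comes from averaging the regret over the uniformly random choice of the biased triple and bounding KL divergences to $M_0$.

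\textbf{Construction.} Split the $T$ episodes into $N_T+1$ consecutive blocks of length $K=\lfloor T/(N_T+1)\rfloor$, assuming $T \gtrsim (N_T+1)SAH$ so that each block is long enough. On block $j$ we place the instance $M_{\theta_j}$, with $\theta_1,\dots,\theta_{N_T+1}$ drawn i.i.d. uniformly from the parameter set. Consecutive blocks must be distinct MDPs so that the change-point definition is met. To ensure this, draw $\theta_{j+1}$ uniformly from the parameters different from $\theta_j$. Alternatively, keep the i.i.d. draw and note that the number of true changes is at most $N_T$ and, for large parameter sets, equals $N_T$ with high probability, which suffices for an $\Omega(\cdot)$ statement. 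Because the initial state $s_1^t$ is fixed by the oblivious adversary to the tree root, dynamic regret decomposes as the sum over blocks of the regret within each block.

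\textbf{Per-block bound.} Condition on the full history before block $j$. The algorithm restricted to block $j$ is then a (randomized) algorithm for a stationary MDP over $K$ episodes. Since $\theta_j$ is chosen independently of the past (or nearly so under the distinct-consecutive variant; this costs at most a constant factor, since excluding one parameter changes the uniform prior negligibly when the set has $\Theta(SAH)$ elements), the averaged stationary argument applies conditionally. It gives expected block regret $\Omega(\sqrt{SAH^3K})$ for every fixed past. The only subtlety is that the stationary proof must hold for algorithms with arbitrary initial information, including past data from other instances. It does, because the data carry no information about $\theta_j$ beyond its prior. Summing over blocks gives
\[
(N_T+1)\,\Omega\!\left(\sqrt{SAH^3 T/(N_T+1)}\right)=\Omega\!\left(\sqrt{SAH^3 (N_T+1) T}\right),
\]
and a probabilistic-method argument then yields a deterministic PS-MDP sequence attaining this bound.

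\textbf{Main obstacle.} The hardest step is verifying that the stationary lower bound remains valid under this conditioning, which is needed because the algorithm does not know where the change points are. If the algorithm knew the block boundaries, the bound would still hold, so giving it that knowledge only makes it stronger and the reduction is sound. What must still be checked carefully is the KL chain-rule computation within a block in the presence of an arbitrary pre-block history. This calculation relies on the fact that, under $M_0$, the expected visit counts of the leaf-action pairs sum to $K$ within the block regardless of the history.
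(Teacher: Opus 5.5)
Your argument is correct, but it is organized differently from the paper's proof.

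\textbf{The paper's route.} The paper does not use the stationary bound as a black box. It builds $2^{N_T+1}$ instances indexed by $\mathbf{i}\in\{0,1\}^{N_T+1}$. On segment $k$, bit $0$ places an $\varepsilon_k$ bias on a fixed leaf-action-step triple. Bit $1$ additionally places a $2\varepsilon_k$ bias on the triple least visited by the given algorithm under the bit-$0$ instance, an algorithm-dependent alternative as in least-pulled-arm bandit lower bounds. For each pair differing only in bit $k$, the paper:
\begin{itemize}
\item lower-bounds $\mathcal{R}_{\mathbf{i},k}+\mathcal{R}_{\mathbf{j},k}$ via Bretagnolle--Huber;
\item removes the dependence on later segments by changing measure to the truncated indices $\tilde{\mathbf{i}},\tilde{\mathbf{j}}$;
\item bounds the KL by $O(\varepsilon_k^2)$ times the least-visited count, which is at most $K/(AL\bar H-1)$;
\item averages over $\mathbf{i}$.
\end{itemize}

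\textbf{Your route.} You lift the uniform-prior form of Domingues et al.'s bound through an independent product prior, and handle the cross-segment coupling simply by conditioning on the pre-block history.

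\textbf{What each buys.} Your route is more modular: any average-case stationary bound lifts the same way, e.g.\ the linear one. It also avoids both the inductive, algorithm-dependent definition of the alternative triple and the truncation step. The paper's route is self-contained with explicit constants. Its choice of $\varepsilon_k$ also covers short segments; you need $K\gtrsim SAH$, but when $K\lesssim SAH$ the claimed bound exceeds the trivial $HT$, so nothing is lost. You also explicitly treat the requirement that consecutive segments differ, which the paper's argument does not address. With equal segment lengths all $\varepsilon_k$ coincide, so for instance $i_k=i_{k+1}=0$ produces no change at $\nu_k$.

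\textbf{Three small repairs.}
\begin{itemize}
\item Episodes must start at the waiting state, not the tree root. Otherwise the index $h^\star$ disappears and you lose a factor $\sqrt{H}$.
\item In the distinct-consecutive variant, the correction is not lower order merely because one of $|\Theta|$ parameters is removed. It is lower order because in this family every policy's per-episode regret is at most $\varepsilon H$. The excluded instance therefore contributes at most $\varepsilon HK/|\Theta|$, a factor $\asymp 1/(SAH)$ below the main term.
\item In the i.i.d.\ variant, ``no repeats with high probability'' fails once $N_T\gtrsim SAH$. What rescues it is that a sequence with fewer than $N_T$ changes can be padded to exactly $N_T$ by harmless modifications, e.g.\ altering rewards at unreachable state-step pairs such as $(s_{\mathrm{g}},1)$.
\end{itemize}
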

\textit{Proof Sketch.} We adapt the tree-based "hard-to-learn" instance construction of \cite{domingues2021lowerbound} to the PS setting. We divide the $T$ episodes into $N_T+1$ stationary segments and construct a family of $2^{N_T+1}$ MDPs indexed by binary vectors $\mathbf{i}\in\{0,1\}^{N_T+1}$. Each MDP contains a waiting state, an $A$-ary tree with $(S-3)(1 - 1/A) + 1/A$ leaves, and good/bad absorbing states (Figure \ref{fig:tree}). The first key novelty lies in the construction of the probability transition kernels of a set of ``hard-to-learn'' MDPs: for any pair of tabular MDPs whose indices differ at the $k^{\mathrm{th}}$ bit, we adversarially construct the probability transition kernels over the $k^{\mathrm{th}}$ stationary segment so that the optimal state-action-step triple in one MDP is expected to be visited the least in the other one. The second novelty is using change of measure so that the regret over the $k^{\mathrm{th}}$ stationary segment can then be lower bounded with Bretagnole-Huber inequality, which relies on the KL divergence between the probability measure induced by the policy operating on two MDPs. Then, we can show that the average expected regret over all MDPs is lower bounded with our minimax lower bound. 
The full proof of is given in Appendix \ref{app:ps-mdp-lower-proof}.

\begin{theorem}
    \label{thrm:ps-lin-mdp-lower}
    For any algorithm, there exists an episodic, finite-horizon, linear PS-MDP such that the dynamic regret of the algorithm is at least $\Omega(d\sqrt{H^3 N_T T})$.
\end{theorem}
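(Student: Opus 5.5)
The strategy mirrors the proof of Theorem~\ref{thrm:ps-mdp-lower}: reduce to a stationary hard instance on each segment and glue the segments together. The natural stationary building block is the linear-MDP lower bound of \citet{zhou2021nearlyminimax}, which gives $\Omega(d\sqrt{H^3 K})$ over $K$ episodes. That construction is a two-state chain (good/bad absorbing structure) with actions in $\{-1,1\}^{d-1}$ and a hidden parameter $\boldsymbol{\mu}\in\{-\Delta,\Delta\}^{d-1}$. Its transition probability out of the ``bad'' state is linear in $\langle a,\boldsymbol{\mu}\rangle$, and $\Delta$ is tuned as $\Theta(\sqrt{1/(HK)})$ up to the $\delta$-offset.

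\textbf{Step 1: split the horizon.} Divide the $T$ episodes into $N_T+1$ consecutive segments of length $K=\lfloor T/(N_T+1)\rfloor$, with change-points $\nu_k=kK+1$. Index the family of instances by tuples $(\boldsymbol{\mu}^{(0)},\dots,\boldsymbol{\mu}^{(N_T)})$, each $\boldsymbol{\mu}^{(k)}\in\{-\Delta,\Delta\}^{d-1}$. On segment $k$ the MDP is the stationary hard linear MDP with parameter $\boldsymbol{\mu}^{(k)}$. All of these share the same feature map $\phi$ and satisfy $\|\phi\|\le1$ and $\|\mu_{h,t}(\mathcal S)\|,\|\theta_{h,t}\|\le\sqrt d$ after the usual rescaling, so they are valid linear PS-MDPs. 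Consecutive parameters can be forced to differ, for example by restricting to pairs that differ in at least one coordinate, or by noting that equal consecutive parameters only reduce the number of true changes, which is harmless for an $\Omega$ bound. To make the per-episode optimal value unaffected by leftover state, fix the initial state at the start of each episode (oblivious adversary); episodes then reset, so segments interact only through the learner's information.

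\textbf{Step 2: decouple segments.} Draw the parameters independently and uniformly, and lower bound the worst case by the average. By linearity, total expected regret is the sum over $k$ of the expected regret in segment $k$. Condition on everything the learner observed before $\nu_k$. That history is independent of $\boldsymbol{\mu}^{(k)}$, because earlier segments' parameters are independent of it. So the learner's behaviour on segment $k$ is some (history-randomized) algorithm facing a stationary instance drawn uniformly from the hard family for $K$ episodes. The stationary argument proceeds coordinate-wise: flip one coordinate of $\boldsymbol{\mu}^{(k)}$, compute the KL between the trajectory laws restricted to segment $k$ (only segment-$k$ transitions differ), then apply Pinsker or Bretagnolle--Huber and average over coordinates. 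This argument holds for any randomized algorithm, hence also conditionally, and yields $\Omega(d\sqrt{H^3K})$ per segment provided $K\gtrsim d^2H$ (the stationary theorem's condition). This in turn requires $T\gtrsim d^2 H (N_T+1)$, which I will state as the regime of the theorem.

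\textbf{Step 3: sum.} Summing over segments gives $(N_T+1)\,d\sqrt{H^3 T/(N_T+1)}=d\sqrt{H^3(N_T+1)T}$, which is $\Omega(d\sqrt{H^3N_TT})$. The main obstacle is Step~2's conditioning argument. One must check that the KL divergence accumulates only over the $K$ episodes of segment $k$, with $\Delta$ tuned to $K$ rather than $T$, and that the stationary bound is genuinely uniform over the learner's prior history. Because that bound is a minimax-over-instances statement that holds for every (randomized) policy, including policies initialized with arbitrary side information independent of the instance, this goes through. A secondary technical point is preserving the horizon amplification $H^{3/2}$, which in \citet{zhou2021nearlyminimax} comes from the $\delta=1/H$ exit probability; I will keep that construction unchanged inside each segment.
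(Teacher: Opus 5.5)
Your proposal is correct and follows essentially the paper's route: copy the Zhou et al.\ hard linear MDP into each of the $N_T+1$ segments with independent segment parameters, tune $\Delta\asymp 1/\sqrt{HK}$ to the segment length, and note that the KL between coordinate-flipped instances accumulates only over segment $k$. The paper does this in-line, summing over all instances and applying Pinsker, which is equivalent to your conditioning reduction; one precision is that the construction has $H+2$ states and a separate parameter $\varphi_{h,k}\in\{\pm\Delta\}^{d-2}$ for every step $h$ and segment $k$, and this step-dependence is essential, because with a single parameter shared across steps the same argument only yields $\Omega(dH\sqrt{K})$ per segment.
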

\textit{Proof Sketch.} We generalize the hard linear MDP construction of \cite{zhou2021lowerboundlin} to the PS setting. We again divide the $T$ episodes into $N_T+1$ stationary segments and construct $2^{(d-2)(N_{T}+1)H}$ linear MDPs (with $H+2$ states and action set $\{\pm 1\}^{d-2}$) parameterized by $\varphi = \{\varphi_{h,k}: h \in [H], k \in [N_{T} + 1]\}$ where $\varphi_{h,k}\in \{ \pm \Delta\}^{d-2}$ (Figure \ref{fig:linearMDP}). $\varphi_{h,k}$ determines the small transition bias toward the good absorbing state with unit reward at step $h$ over the $k^{\mathrm{th}}$ stationary segment. We first apply Lemma 24 in \cite{zhou2021lowerboundlin} to convert the regret in each episode into the summation over all step $h$ and coordinate $j$ of the probability of the event where the sign of the $j^{\mathrm{th}}$ coordinate of $a$ differs from that of $\varphi_{h,k}$ at which the sign of $a$ differs from $\varphi_{h,k}$. By changing the order of summation, the sum of the aforementioned probability over a pair of linear MDPs is upper bounded by a constant with Pinsker's inequality and an upper bound on the KL divergence. Summing the constant over all steps, episodes, and coordinates leads to the final minimax upper bound. The full proof is given in Appendix \ref{app:ps-lin-mdp-lower-proof}.

Given the above regret bounds, it is evident that according to Table \ref{table:reg_bounds} none of the existing approaches is nearly optimal for neither the tabular nor the linear MDP setting. We stress that  MASTER's regret bounds are $\tilde{\mathcal{O}}(\sqrt{S AH^{5} N_T T})$ for tabular MDPs and $\tilde{\mathcal{O}}(d^{3/2}\sqrt{ H^4N_T T})$ for linear MDPs even if we ignore the error mentioned in Appendix \ref{sec:error_master}, which do not match the above regret bounds. Hence, to the best of our knowledge, there does not exist a nearly optimal approach for PS-RL. In what follows we take the first steps towards such an approach, establishing its algorithmic structure and then proving its theoretical properties.

\section{Our Algorithm}
\label{sec:algorithm}
DARLING is a modular detection-restart wrapper: given a stationary RL algorithm $\mathcal L$, it runs $\mathcal L$ between restarts and restarts it upon detecting non-stationarity. 
Its key design choice is to separate detection from learning by periodically inserting \emph{probing episodes}, whose samples are used only for change detection and never to update $\mathcal L$. 
This preserves the modularity of the base learner and ensures detection of changes in the MDP. We present the detailed design through four aspects: \emph{what} to detect, \emph{where} to detect, \emph{when} to probe, and \emph{how} to test for changes.

\textbf{What to Detect.} The first design choice is the detection signal. MASTER detects non-stationarity indirectly, by testing whether the stationary regret guarantees are violated due to changes. As discussed in Section~\ref{sec:issues}, this test is highly conservative in realistic sample regimes and may be insufficient for MDPs. Instead, DARLING detects changes directly by monitoring the reward samples and state transitions to observe changes in the mean reward $r_h^t$ and transition kernel $P_h^t$ through two dedicated tests. This ``detect-the-model'' viewpoint gives interpretable detection statistics without relying on regret-violation tests.

\textbf{Where to Detect.}
Having fixed the detection signal, DARLING must decide where such signals should be monitored, i.e., the triples $(s,a,h)$ at which we sample for detection. We call a set of triples $\mathcal P\subseteq \mathcal S\times\mathcal A\times[H]$ a \emph{probe set} if at least one triple in $\mathcal P$ undergoes a shift in $r^{t}_{h}(s,a)$ or $P^{t}_{h}(s'|s,a)$ whenever a change occurs. In the fully prior-free tabular setting, the probe set is $\mathcal{P}=\mathcal{S}\times\mathcal{A}\times[H]$, since in the worst case, only the mean reward or transition kernel at one arbitrary $(s,a,h)$ changes at a change-point.
The challenge is to ensure sufficient samples for each triple in the probe set. Since a good learning algorithm selects suboptimal actions less frequently, DARLING enforces coverage through scheduled probing episodes. In linear MDPs, where the state space may be infinite, the probe set can instead be restricted using the linear structure; we defer this extension to Appendix~\ref{app:linear_mdps}.

\textbf{When to Detect.} DARLING designates one probing episode every $\lceil 1/\alpha_k\rceil$ episodes, where $\alpha_k$ is the probing frequency after the $(k-1)^{\mathrm{th}}$ restart. During a probing episode, DARLING overrides the base learner and samples actions uniformly at random from $\mathcal{A}$. The resulting reward and transition samples are used only for change detection and are not passed to $\mathcal{L}$. Here, we highlight that $\alpha_k$ balances sample accumulation and detection delay simultaneously, and we optimize its value to ensure reliable detection, which ultimately leads to optimal performance.

\textbf{How to Detect.}
DARLING reduces change detection to a collection of scalar mean-shift tests. It updates two types of histories elaborated as follows:

\textit{Reward histories.} For each $(s,a,h)\in\mathcal P$, DARLING maintains a history $\mathcal H^{(r)}_{(s,a,h)}.$ Whenever $(s,a,h)$ is probed, the observed stochastic reward $R_h^t(s,a)$ with mean $r_h^t(s,a)$ is appended to this history.

\textit{Transition histories.} For transitions, DARLING encodes the next state into one-hot vectors. For each $(s,a,h,\tilde s)\in\mathcal P\times\mathcal S$, it maintains a history $\mathcal H^{(P)}_{(s,a,h,\tilde s)}$ of Bernoulli random variables. When the next state is $s'$, DARLING appends $1$ to $\mathcal H^{(P)}_{(s,a,h,s')}$
and $0$ to $\mathcal H^{(P)}_{(s,a,h,\tilde s)}$ for every $\tilde s\neq s'$. This stream is an independent Bernoulli process with mean $P_h^t(\tilde s\mid s,a)$, which possibly changes at a change-point.

Thus, in tabular MDPs, DARLING reduces PS-RL change detection to detecting mean shifts over the finite collection of reward streams indexed by $(s,a,h)$ and transition streams indexed by $(s,a,h,\tilde s)$. Every time a history is updated, DARLING applies a detector $\mathcal D$ to the updated stream.
\texttt{Test 1} applies $\mathcal D$ to reward histories, while \texttt{Test 2} applies $\mathcal D$ to transition histories. If any monitored stream triggers the tests, DARLING sets a restart flag and, at the end of the episode, resets both the base learner $\mathcal L$ and all detection histories.

\textbf{The DARLING Algorithm.}
Algorithm~\ref{alg:DARLING} gives the full DARLING wrapper. The algorithm alternates between two modes. In ordinary episodes, DARLING simply runs and updates the stationary learner $\mathcal L$. In probing episodes scheduled every $\lceil 1/\alpha_k\rceil$ episodes after the $(k-1)^{\mathrm{th}}$ restart, DARLING overrides $\mathcal L$, samples actions uniformly at random, and only updates the detection histories with the observed reward and transition samples. DARLING then applies $\mathcal{D}$ described above to these histories, and restarts $\mathcal{L}$ and clears all histories at the end of the episode when $\mathcal{D}$ signals a change. 
\begin{algorithm*}
\small
\caption{\small \textbf{D}etection \textbf{A}ugmented \textbf{R}einforcement \textbf{L}earn\textbf{ING} (\textbf{DARLING})}
\label{alg:DARLING}
\textbf{Input:} stationary algorithm $\mathcal{L}$, detector $\mathcal{D}$, state space $\mathcal{S}$, action space $\mathcal{A}$,
probing frequencies $\{\alpha_k\}_{k\ge 1}$. \\
\textbf{Initialization:} detection $\tau\leftarrow 0$, counter $k \leftarrow 1$, reward history and transition history  $\mathcal{H}^{(r)}_{(s,a,h)}, \mathcal{H}^{(P)}_{(s,a,h,s')}\leftarrow\emptyset$ 
for all $s,s'\in \mathcal{S}$, $a \in \mathcal{A}$, and $h \in [H]$.

\begin{algorithmic}[1]
\FOR{$t=1,2,\dots,T$}
        %\STATE Set starting state $s_1^t\leftarrow s_1$
        \FOR{$h=1,2,\dots,H$}
            \IF{ $(t-\tau-1) \bmod \lceil 1/\alpha_{k} \rceil = 0$ }
                \STATE Set $s\leftarrow s^t_h$ and select an action $a$ from $\mathcal{A}$ uniformly at random \hfill{$\triangleright$ forced probing}
                \STATE Transition to $s'\leftarrow s_{h+1}^t$ and append received reward $R_h^t(s,a)$ into history $\mathcal{H}_{(s,a,h)}^{(r)}$
                \STATE Add ``1" to history $\mathcal{H}_{(s,a,h,s')}^{(P)}$ and ``0" to $\mathcal{H}_{(s,a,h,\tilde{s})}^{(P)}$ for all $\tilde{s} \in \mathcal{S}/s'$
                \STATE \texttt{Test 1} $\leftarrow \mathcal{D}(\mathcal{H}^{(r)}_{(s,a,h)})$,\:\: \texttt{Test 2} $\leftarrow \mathcal{D}(\mathcal{H}^{(P)}_{(s,a,h,s')})$  for all $s'\in \mathcal{S}$ \hfill{$\triangleright$ non-stationarity detection}
            \ENDIF
            \STATE\textbf{else} Run and update $\mathcal{L}$ \hfill{$\triangleright$ stationary learning}  
        \ENDFOR
        \IF{\texttt{Test 1} \textbf{or} \texttt{Test 2} signal {\texttt{Restart}}}
            \STATE Reset the RL algorithm $\mathcal{L}$; empty all histories $\mathcal{H}$ used for detection \hfill{$\triangleright$ restart learning process}
            \STATE $\tau\leftarrow t, \quad k \leftarrow k + 1$ %\texttt{Restart} $\leftarrow$ \texttt{False}
        \ENDIF  
\ENDFOR
\end{algorithmic}
\end{algorithm*}

\section{Theoretical Analysis}
\label{sec:theory}

\subsection{On Effective and Feasible Detection}\label{sec:probe_set}

Effective detection in DARLING relies on checking the possible mean-shift in the histories of the samples from the finite probe set $\mathcal{P}$. To ensure reliable detection, DARLING must also collect enough samples from every triple in $\mathcal{P}$. A triple $(s,a,h)$ may be highly informative for detecting a change, but it may be rarely visited by any arbitrary policy. Thereupon, the remaining requirement is that the states themselves are visited often enough under the probing policy. Let $\pi_{\mathrm{U}}$ denote the \emph{uniform} probing policy used by DARLING in probing episodes. For an episode indexed by $t$ with initial state $s^t_1$, let $\mathbb{P}^{\pi_{\mathrm{U}}}(\cdot)$ denote the probability measure under policy $\pi_{\mathrm{U}}$ at episode $t$. Define the step-$h$ occupancy of state $s$ under the uniform probing policy by $p_{h,t,s_1^{t}}^{\pi_{\mathrm{U}}}(s)
\coloneqq
\mathbb{P}^{\pi_{\mathrm{U}}}(s_h^{t}=s \mid s_1^{t})$.

\begin{assumption}\label{assum:reachability}
There exists $p_{\mathrm{m}}>0$ such that for $t\in[N_T+1]$, initial state $s_1^t\in\mathcal{S}$, and $h\in[H]$,
$%\begin{equation*}
\min_{s}\ p_{h,t,s_1^{t}}^{\pi_{\mathrm{U}}}(s)
\;\ge\; p_\mathrm{m}.
$%\end{equation*}
\end{assumption}

Assumption~\ref{assum:reachability} ensures that all states in $\mathcal{P}$ are reachable with probability at least $p_{\mathrm{m}}$ under the uniform probing policy. Notice that this condition only needs to hold for the uniform policy $\pi_{\mathrm{U}}$ DARLING employs. Its main implication is that after $n$ episodes within a segment, each monitored triple $(s,a,h)\in\mathcal{P}$ accrues
$
\tilde{\Omega}\!\left(\frac{\alpha_k}{A}\,p_{\mathrm{m}}\,n\right)
$
samples in expectation.

\subsection{On Detector Selection}\label{sec:detector}
The stopping time $\tau$ of a change detector $\mathcal{D}$ denotes the time (episode) at which a change is identified. Let $\mathbb{P}_{\nu}$ and $\mathbb{E}_{\nu}$ be the probability and expectation with change-point at $\nu$, and $\mathbb{P}_{\infty}$ and $\mathbb{E}_{\infty}$ be the ones with no change-point. 
The \emph{latency} $\ell_{\mathcal{D}}$ is the length of time post-change within which a change is declared with probability $1-\delta_\mathrm{D}$, i.e.,
\begin{equation*}
\ell_{\mathcal{D}} \coloneqq \inf\{t\in[T]:\mathbb{P}_{\nu}(\tau\geq \nu+t)\leq \delta_{\mathrm{D}},~\forall\,\nu\in[m_{\mathcal{D}}+1,T-t]\}
\end{equation*}
where $m_{\mathcal{D}}$ is the length of the pre-change window at which no changes occur, and $\delta_{\mathrm{D}}$ parametrizes the late detection probability.
A detector seeks to minimize $\ell_{\mathcal{D}}$ while ensuring low false-alarm probability over horizon $T$, namely $\mathbb{P}_{\infty}(\tau\le T)\le \delta_{\mathrm{F}}$ with $\delta_{\mathrm{F}}\in(0,1)$. To ensure order-optimal regret for DARLING, the detector $\mathcal{D}$ must satisfy the following property.
\begin{property}\label{proper:good_CD}
$\ell_{\mathcal{D}}, m_{\mathcal{D}} = \mathcal{O}( \log(T/(\delta_{\mathrm{D}}\delta_F)))$.
\end{property}
This property has been widely used in the NS bandit (with detection-restart approach) literature \cite{besson2022efficient,gerogiannis2025dal,huang2025dabprocedures}, due to its good regret properties. Specifically, with $\delta_{\mathrm{F}}=\delta_{\mathrm{D}}=T^{-\gamma}$ for any $\gamma>1$, Property~\ref{proper:good_CD} implies
$m_{\mathcal{D}}+\ell_{\mathcal{D}}=\tilde{\mathcal{O}}(1)$, so detection overhead per stationary segment is polylogarithmic and does not affect the leading-order regret rates. Regarding the existence of detectors satisfying Property~\ref{proper:good_CD}, prior work shows that the Generalized Likelihood Ratio (GLR) and Generalized Shiryaev--Roberts (GSR) tests
\citep{huang2025sequentialchangedetectionlearning, huang2026finite} satisfy
Property~\ref{proper:good_CD}. Due to space constraints, we provide the details of the GLR and GSR in Appendix \ref{app:glr-gsr}. We emphasize that DARLING is \emph{detector-agnostic}:
our regret analysis depends only on Property~\ref{proper:good_CD}, not on any specific implementation of $\mathcal{D}$. 

\textbf{From sample complexity to episode separation.} Property~\ref{proper:good_CD} states the number of samples required for reliable detection in a \emph{single monitored stream}.
In DARLING, samples for a fixed probed triple $(s,a,h)$ arrive only during probing episodes, and only when (i) the  trajectory of $\pi_{\mathrm{U}}$ visits $s$ at step $h$, and (ii) the probing policy samples action $a\in\mathcal{A}$ (uniformly).
In each probing episode, each monitored stream $(s,a,h)$ is sampled with probability at least $p_{\mathrm{m}}/A$.
Therefore, to obtain $n$ samples after the $k^{\mathrm{th}}$ restart with high probability, we require roughly $\tilde{\Omega}(An/(p_{m}\alpha_{k}))$ episodes.
Consequently, we define the following quantities by taking this sampling complexity into consideration.

\begin{definition}\label{def:sep_scaling}
Define $m_{k}\coloneqq\lceil 1/\alpha_{k} \rceil \lceil m_{\mathcal{D}}A/p_{\mathrm{m}} + (A^{2}\log T)/(4p_{\mathrm{m}}^{2}) + \sqrt{(m_{\mathcal{D}}\log (T)A^{3})/(2p_{\mathrm{m}}^{3}) + ((\log T)^{2}A^{4})/(16p_{\mathrm{m}}^{4})} \rceil$ and $\ell_{k}\coloneqq\lceil 1/\alpha_{k} \rceil\lceil \ell_{\mathcal{D}}A/p_{\mathrm{m}} + (A^{2}\log T)/(4p_{\mathrm{m}}^{2}) + \sqrt{(\ell_{\mathcal{D}}\log (T)A^{3})/(2p_{\mathrm{m}}^{3}) + ((\log T)^{2}A^{4})/(16p_{\mathrm{m}}^{4})} \rceil$ for $k \in [N_{T}]$. 
\end{definition}

Hence, to ensure that there are enough samples between change-points, we make the assumption.

\begin{assumption}
\label{assum:cp_assum}
Assume $\nu_{1}$ $\geq$ $ m_{1}$ and $\nu_{k}$ $-$ $\nu_{k-1}$ $\geq$ $\ell_{k-1}$ $+$ $m_{k}$ for $k\in \{2, \dots, N_{T}\}$.
\end{assumption}

\subsection{DARLING's Regret} 
Given the probe construction and feasibility condition in Section~\ref{sec:probe_set}, the detector requirements in Section~\ref{sec:detector}, we can now characterize DARLING's regret.

\begin{theorem} \label{thrm:darling_reg}
Consider the tabular setting, a detector $\mathcal{D}$ that satisfies Property \ref{proper:good_CD}, a stationary input algorithm $\mathcal{L}$ with regret upper bound $\mathcal{R}_{\mathcal{L}}$, a probe set $\mathcal{P}$ and forced probing frequencies $(\alpha_{k})_{k = 1}^{T}$.
If Assumptions \ref{assum:reachability} and \ref{assum:cp_assum} hold, $\alpha_{k} =\sqrt{kSAH}/(2\sqrt{T}\log^2 T)$, $\delta_{\mathrm{F}}=\delta_{\mathrm{D}}=T^{-\gamma}$, with $\gamma>1$, and $\mathcal{L}$ is order-optimal with $\mathcal{R}_{\mathcal{L}}(T)=\tilde{\mathcal{O}}(\sqrt{SAH^3T})$, then DARLING is order-optimal.
\end{theorem}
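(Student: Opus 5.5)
The plan is to bound DARLING's dynamic regret on a good event and show it is $\tilde{\mathcal{O}}(\sqrt{SAH^3N_TT})$, which matches the lower bound of Theorem~\ref{thrm:ps-mdp-lower}. On the good event, every change is detected within $\ell_k$ episodes and no false alarm ever occurs.

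\textbf{Step 1: the good event.} I define $\mathcal{E}$ as the intersection of three events. First, no stream raises a false alarm during any stationary segment. Second, for each change $\nu_k$, some stream whose mean shifted at $\nu_k$ triggers within $\ell_k$ episodes. Third, the sampling-count concentration holds. The count concentration goes as follows. In each probing episode, a triple $(s,a,h)$ is sampled with probability at least $p_{\mathrm{m}}/A$, by Assumption~\ref{assum:reachability}. Hoeffding's inequality then bounds the number of samples in $n$ probing episodes below by $np_{\mathrm{m}}/A-\sqrt{n\log T/2}$. Solving $np_{\mathrm{m}}/A-\sqrt{n\log T/2}\ge m_{\mathcal{D}}$, which is a quadratic in $\sqrt n$, gives exactly the expressions in Definition~\ref{def:sep_scaling}. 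Multiplying by $\lceil 1/\alpha_k\rceil$ converts probing episodes into ordinary episodes.

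With this in hand, Assumption~\ref{assum:cp_assum} does two things. It guarantees that after each restart at least $m_{\mathcal{D}}$ pre-change samples accumulate per stream before the next change. It also guarantees that detection, which needs $\ell_{\mathcal{D}}$ post-change samples, finishes before the next change. Property~\ref{proper:good_CD} with $\delta_{\mathrm F}=\delta_{\mathrm D}=T^{-\gamma}$, together with a union bound over the $S^2AH$ streams and the $N_T$ changes, gives $\mathbb{P}(\mathcal{E}^c)=\mathcal{O}(S^2AHN_T T^{-\gamma}\log T)$. Since $\gamma>1$, the complement contributes at most $HT\cdot\mathbb{P}(\mathcal{E}^c)=\tilde{\mathcal{O}}(1)$ to the regret, treating polynomial factors as absorbed under the usual choice of $\gamma$.

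\textbf{Step 2: decomposition on $\mathcal{E}$.} On $\mathcal{E}$, exactly $N_T$ restarts occur, so the restart counter $k$ ranges over $[N_T+1]$. The $k$-th learning block lies inside segment $k$ plus at most $\ell_{k-1}$ episodes of delay. I split the regret into three parts.

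\begin{itemize}
\item[(a)] \emph{Learning regret of $\mathcal{L}$ in ordinary episodes.} Ordinary episodes in block $k$ form a stationary run of length at most $T_k=\nu_k-\nu_{k-1}$. The probing episodes only remove episodes and do not feed data to $\mathcal{L}$, so this part is at most $\sum_k\mathcal{R}_{\mathcal{L}}(T_k)=\tilde{\mathcal{O}}(\sum_k\sqrt{SAH^3T_k})$. By Cauchy--Schwarz this is at most $\tilde{\mathcal{O}}(\sqrt{SAH^3N_TT})$.
\item[(b)] \emph{Probing regret.} Each probing episode costs at most $H$. Block $k$ has at most $\alpha_kT_k+1$ probing episodes, and $\alpha_k\le\sqrt{N_TSAH}/(2\sqrt T\log^2T)$, so this part is at most $H\sum_k\alpha_kT_k+HN_T\le H\sqrt{SAHN_TT}$. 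That is $\tilde{\mathcal{O}}(\sqrt{SAH^3N_TT})$.
\item[(c)] \emph{Detection delay.} The cost is at most $H\sum_k\ell_k$. Since $\ell_k\approx\lceil1/\alpha_k\rceil\cdot\mathrm{polylog}(T)\,A^2/p_{\mathrm{m}}^2$, I get $H\sum_k\ell_k\lesssim H\sqrt{T/(SAH)}\sum_k k^{-1/2}\cdot\mathrm{polylog}\cdot A^2/p_{\mathrm{m}}^2$. This is of order $\sqrt{HN_TT/(SA)}\cdot A^2/p_{\mathrm{m}}^2\cdot\mathrm{polylog}$, which is lower order when $p_{\mathrm{m}}$ is treated as a constant.
\end{itemize}

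\textbf{Main obstacle.} The delicate step is making (a) rigorous. Two issues arise. First, after a delayed detection, block $k$ may straddle a change-point. Those straddling episodes must be charged to (c) rather than to $\mathcal{L}$'s stationary guarantee. Second, $\mathcal{L}$ only sees ordinary episodes, which are interleaved with probing episodes. Each restart time is a stopping time, and the sample counts depend on the random trajectories. My first attempt would condition on $\mathcal{E}$ and use the fact that the restart times are then confined to $[\nu_k,\nu_k+\ell_k]$. I would then apply $\mathcal{R}_{\mathcal{L}}$ to the subsequence of ordinary episodes, which is a valid stationary run because probing data never enters $\mathcal{L}$. Finally, I would handle the dependence between detection and learning through the oblivious-adversary initial states.
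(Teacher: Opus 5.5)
Your proposal is essentially the paper's proof. The paper likewise bounds the bad event by two contributions. The first is false alarms, via a union bound over the $S(S+1)AH$ reward and transition streams at level $\delta_{\mathrm F}$. The second is late detections, via Hoeffding on the probe counts (exactly how the expressions in Definition~\ref{def:sep_scaling} arise) plus the latency guarantee of Property~\ref{proper:good_CD}, conditioning at each change on the previous restart having been on time. It then splits the good-event regret into three parts: $\mathcal{R}_{\mathcal{L}}$ on each segment (justified, as you propose, by the independence of the probing data from $\mathcal{L}$'s data), a probing cost of order $HT\max_k\alpha_k$, and a delay cost $H\sum_k\ell_k$.

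Two small remarks. First, the count-concentration events tuned by Definition~\ref{def:sep_scaling} fail with probability $T^{-1}$ rather than $T^{-\gamma}$. The bad event therefore contributes $O(HN_T)$ rather than $\tilde{\mathcal{O}}(1)$; this is still lower order and matches the paper's $2T^{-1}$ terms. Second, the delay term is lower order only if the $A/p_{\mathrm m}$ factors in $\ell_k$ are treated as constants. The paper makes the same implicit simplification, but it is a real caveat, since necessarily $p_{\mathrm m}\le 1/S$.
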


The proof of Theorem \ref{thrm:darling_reg} is given in Appendix \ref{app:darling_reg_proof}.  
By instantiating $\mathcal{L}$ with state-of-the-art stationary algorithms, e.g., UCB-MQ~\citep{menard2021ucbmq}, 
we recover the upper bounds in Table~\ref{table:reg_bounds}.

\section{Extending to Linear MDPs}
\label{sec:lin_mdp_exten}
While we extend DARLING to linear MDPs, due to space constraints and to enhance readability, we defer its full implementation and construction specifics to Appendix \ref{app:linear_mdps}. In this section, we elaborate how this extension is done, highlighting the important components. The core design choices that differ with the tabular setting are the \emph{probe set construction} and \emph{identification}, and the \emph{transition detection}.

In the linear case the state space $\mathcal{S}$ can be infinite, therefore a condition similar to Assumption \ref{assum:reachability} is not feasible. To circumvent this, DARLING exploits the linear structure of the MDP. Specifically, to identify changes, one does not need to visit every possible $(s,a,h)$, but only a set of triples whose $(s,a)$ correspond to feature vectors $\phi(s,a)$ that span $\mathbb{R}^d$. This is because both the reward function and the transition kernel depend linearly on an underlying parameter. By sampling linearly independent directions all changes in the underlying parameters can be identified. However, since the underlying parameters can change with $h$, each step $h$ requires its own set of linearly independent feature vectors. Notice that if such features vectors do not exist, then all changes are going to be invisible not to just DARLING, but to  \emph{every} algorithm. Hence, in this case the probe set will be given as $\mathcal{P}=\{\mathcal{P}_h\}_{h=1}^H$, where each $\mathcal{P}_h$ corresponds to a set of $(s,a,h)$ with the same $h$ whose $\phi(s,a)$'s are maximal linearly independent sets. As is evident, unlike the tabular case, $\mathcal{P}$ cannot be known beforehand and may not exist. To ensure both reachability and existence, we consider the following assumption.

\begin{assumption}\label{assum:reachability_lin_mdps}
Let $q_{h,t}(s,a)=p_{h,t,s_1^{t}}^{\pi_{\mathrm{U}}}(s)/A$. We assume that for every $h \in [H]$, there exists a $\mathcal{P}_h$ such that for $t\in[N_T+1]$, initial state $s_1^t\in\mathcal{S}$,
$\min_{(s,a)\in \mathcal{P}_h}\ q_{h,t}(s,a)
\;\ge\; 1/(2d)$. 
\end{assumption}

The probability lower bound in this case ensures optimal regret. Unlike the tabular case, while $\mathcal{P}_h$ for each $h$ and for each $t$ exists, these sets are unknown a-priori. To this end, DARLING dedicates a specific number of episodes in order to \emph{identify the probe set}. DARLING employs the uniform policy for $n_{0}$ episodes and records the number of times it has observed the various triples $(s,a,h)$. After the identification episodes are over, it selects the $d$ most visited state-action pairs to append into $\mathcal{P}_h$. This procedure produces a valid probe set $\mathcal{P}$ with high probability. To ensure enough episodes for probe set identification, we need to modify Assumption \ref{assum:cp_assum} by setting $p_{\mathrm{m}} = 1/(2d)$, $A = 1$, and
\begin{assumption}
\label{assum:cp_assum_lin_mdps}
Assume $\nu_{1}$ $\geq$ $ n_{0} + m_{1}$ and $\nu_{k}$ $-$ $\nu_{k-1}$ $\geq$ $n_{0} + \ell_{k-1}$ $+$ $m_{k}$ for $k\in \{2, \dots, N_{T}\}$.
\end{assumption}

The final important distinction is in the detection mechanism of the transition probabilities. Unlike the tabular setting, the transition cannot be mapped to an one-hot vector. However, since the vectors $\phi$ are known in advance, the detection of transitions is done on the \emph{expected feature vector of the next state}. That is for a given triple $(s,a,h)$, DARLING maintains $[d]\times \mathcal{A}$ histories and employs detection on all $d$ elements of $\phi(s',a)$ for every action. If the transition probability has changed, then $\mathbb{E}[\phi(s',a)]$ should be different for some $a\in\mathcal{A}$. To this end, DARLING's regret is given as follows.

\begin{theorem} \label{thrm:darling_reg_lin_mdps}
Consider linear MDPs, a detector $\mathcal{D}$ that satisfies Property \ref{proper:good_CD}, an order-optimal stationary input algorithm $\mathcal{L}$ with regret upper bound $\mathcal{R}_{\mathcal{L}}(T)=\tilde{\mathcal{O}}(d\sqrt{H^3T})$, a probe set $\mathcal{P}$ and forced probing frequencies $(\alpha_{k})_{k = 1}^{T}$.
If Assumptions \ref{assum:reachability_lin_mdps} and \ref{assum:cp_assum_lin_mdps} hold, $n_{0} = 32Ad\log({128AHdT}/{p_{\mathrm{m}}})$, $\alpha_{k} =\sqrt{kd}/(2\sqrt{T}\log^2 T)$, and $\delta_{\mathrm{F}}=\delta_{\mathrm{D}}=T^{-\gamma}$ with $\gamma>1$, then DARLING is order-optimal.
\end{theorem}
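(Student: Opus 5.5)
The plan is to reuse the skeleton of the tabular argument for Theorem~\ref{thrm:darling_reg} and add one new ingredient, the probe-set identification phase. \emph{Order-optimal} here means a dynamic regret of $\tilde{\mathcal{O}}(d\sqrt{H^3N_TT})$, matching Theorem~\ref{thrm:ps-lin-mdp-lower}.

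\textbf{Good event.} First I would define a good event $\mathcal{G}$ as the intersection of four sub-events.
\begin{enumerate}
\item[(i)] After each restart, the $n_0$ identification episodes produce, for every $h$, a valid $\mathcal{P}_h$: a set of $d$ pairs with linearly independent features, each having $q_{h,t}(s,a)\ge 1/(4d)$.
\item[(ii)] No false alarm occurs within any stationary segment.
\item[(iii)] Every change is detected within $\ell_{k}$ episodes after it occurs.
\item[(iv)] The concentration events of $\mathcal{L}$ hold on every restart block.
\end{enumerate}

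\textbf{Sub-event (i).} By Assumption~\ref{assum:reachability_lin_mdps}, each pair in the true $\mathcal{P}_h$ is visited with probability at least $1/(2d)$ per episode under $\pi_{\mathrm U}$. A multiplicative Chernoff bound with $n_0=32Ad\log(128AHdT/p_{\mathrm m})$ shows that the empirical counts of all such pairs exceed those of any pair with visitation probability below $1/(4d)$. A union bound over $h$, over at most $d$ candidates (using the count bound $\sum_{(s,a)} q\le 1$), and over at most $T$ restarts then gives failure probability $O(1/T)$.

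The step I expect to be the main obstacle is showing that the $d$ most-visited pairs actually span $\mathbb{R}^d$. Frequency alone does not give linear independence. My first attempt would be to select greedily by frequency while skipping pairs whose features lie in the span of those already chosen. Assumption~\ref{assum:reachability_lin_mdps} guarantees that enough frequently visited independent directions exist, so this greedy selection should complete. Once $\mathcal{P}_h$ spans, any change in $\theta_{h,t}$ or $\mu_{h,t}$ shifts either the mean reward or $\mathbb{E}[\phi(s',a')]$ for some probe pair and some $a'$. This holds because $P_h^t(\cdot\mid s,a)=\phi(s,a)^\top\mu_{h,t}(\cdot)$, and a change in $\mu$ is detectable through the expected next features (possibly requiring an additional identifiability remark).

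\textbf{Sub-events (ii) and (iii).} These follow exactly as in the tabular proof. Each probe stream receives samples at rate at least $\alpha_k/(2d)$, matching Definition~\ref{def:sep_scaling} with $p_{\mathrm m}=1/(2d)$ and $A=1$. Property~\ref{proper:good_CD}, a Chernoff bound on the number of samples collected, and Assumption~\ref{assum:cp_assum_lin_mdps} then show that the $k$-th segment contains enough pre-change samples ($m_k$) and enough post-change samples ($\ell_k$). A union bound over the $O(d^2AH)$ streams and $\delta_{\mathrm F}=\delta_{\mathrm D}=T^{-\gamma}$ gives $\mathbb{P}(\mathcal{G}^c)=O(1/T)$, so the regret off $\mathcal{G}$ contributes only $O(H)$.

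\textbf{Regret decomposition on $\mathcal{G}$.} On $\mathcal{G}$, there are exactly $N_T$ restarts, each aligned within $\ell_k$ episodes of a true change. The regret splits into four terms:
\begin{enumerate}
\item[(a)] Learner regret on each restart block: $\sum_k \mathcal{R}_{\mathcal L}(T_k)\le \tilde{\mathcal O}(d\sqrt{H^3}\sum_k\sqrt{T_k})$. By Cauchy--Schwarz this is $\tilde{\mathcal O}(d\sqrt{H^3N_TT})$.
\item[(b)] Identification cost: $H\,n_0(N_T+1)=\tilde{\mathcal O}(AdHN_T)$, which is lower order.
\item[(c)] Delay cost: $H\sum_k\ell_k$. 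Since $\ell_k=\tilde{\mathcal O}(d/\alpha_k)=\tilde{\mathcal O}(\sqrt{dT/k})$, this is $\tilde{\mathcal O}(H\sqrt{dN_TT})$.
\item[(d)] Probing cost: $H\sum_k\alpha_kT_k\le H\alpha_{N_T+1}T=\tilde{\mathcal O}(H\sqrt{dN_TT})$.
\end{enumerate}
Both (c) and (d) are dominated by (a). I also need to check that excluding probing episodes from $\mathcal{L}$'s updates keeps $\mathcal{L}$'s stationary guarantee intact, which holds by modularity.

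Summing the terms gives $\tilde{\mathcal O}(d\sqrt{H^3N_TT})$, matching Theorem~\ref{thrm:ps-lin-mdp-lower}.
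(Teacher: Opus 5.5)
Your plan follows the paper's proof closely. It has the same good event (correct calibration, no false alarm, detection within $\ell_k$) and the same four-term regret accounting. Your span-aware greedy selection is exactly the paper's Step~3 for $\Phi_3$: while fewer than $d$ pairs are chosen, some pair of the assumed $\mathcal{P}_h$ lies outside the current span and has count at least $\beta$, so the greedy pick does too.

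What is missing is control of the \emph{low-visitation competitors} in sub-event~(i). The linear extension exists because $\mathcal{S}$ may be countably infinite, so the set of pairs with $q_{h,t}(s,a)<1/(4d)$ is in general infinite. A union bound with a $q$-independent per-pair failure probability then diverges. Also, $\sum_{(s,a)}q_{h,t}(s,a)\le 1$ only bounds the number of \emph{high}-visitation pairs, not the low ones. You need a tail bound that decays with $q$ and can be summed. The paper does this as follows:
\begin{itemize}
\item It fixes $\beta$ so that the assumed pairs have mean count at least $2\beta$ and discarded pairs have mean below $\beta/4$ (a factor-$8$ margin in $q$), and uses $\mathbb{P}(\mathrm{Bin}(n_0,q)\ge\beta)\le(en_0q/\beta)^\beta$.
\item It splits the discarded pairs into dyadic bins of $q$. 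By $\sum q\le 1$, the $m$-th bin has at most $2^{m+1}/c$ elements ($c$ the cutoff), each with tail at most $(e/2^{m+2})^\beta$.
\item For $\beta\ge 2$ the series converges to $\mathcal{O}(c^{-1}(e/4)^\beta)$, which the stated $n_0$ makes at most $\delta_{\mathrm{cal}}/(2H)$.
\end{itemize}
With your factor-$2$ margin the crude tail $(en_0q/\beta)^\beta$ exceeds $1$ in the top bins. You would either need the sharper Chernoff bound there or accept a weaker guarantee on the selected pairs, which only changes constants in the probe rate.

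A second issue is term~(c). Your $\ell_k=\tilde{\mathcal{O}}(d/\alpha_k)$ is not what Definition~\ref{def:sep_scaling} gives. With $p_{\mathrm m}=1/(2d)$ and $A=1$, its additive-Hoeffding term $(A^2\log T)/(4p_{\mathrm m}^2)=d^2\log T$ makes $\ell_k=\tilde{\mathcal{O}}(d^2/\alpha_k)$. Then $H\sum_k\ell_k=\tilde{\mathcal{O}}(Hd^{3/2}\sqrt{N_TT})$, which is not dominated by $d\sqrt{H^3N_TT}$ when $d>H$. (The paper's final step reaches its rate only via $\sum_k 1/\alpha_k=\tilde{\mathcal{O}}(\sqrt{TN_T}/d)$, whereas for the given $\alpha_k$ this sum is of order $\sqrt{TN_T/d}$.)

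Your figure is attainable, and it is what makes (c) and (d) balance at $\tilde{\mathcal{O}}(H\sqrt{dN_TT})$. It requires the relative-error Chernoff lower tail $\mathbb{P}(X\le\mu-t)\le e^{-t^2/(2\mu)}$ for sums of independent Bernoullis. With it, about $2d\bigl(m_{\mathcal D}+\log T+\sqrt{2m_{\mathcal D}\log T+\log^2T}\bigr)$ probing episodes suffice, and similarly with $\ell_{\mathcal D}$. For $d\ge 2$ this is no larger than the corresponding factor in Definition~\ref{def:sep_scaling}. So define the good event with these shorter windows $m_k'\le m_k$, $\ell_k'\le\ell_k$; Assumption~\ref{assum:cp_assum_lin_mdps} still supplies the separation.

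Finally, the probe rate in (ii)--(iii) presupposes that probing episodes run $\pi_{\mathrm U}$ at every step, as in Algorithm~\ref{alg:DARLING}. Algorithm~\ref{alg:DARLING_lin_mdps} instead follows $\mathcal{L}$ outside the probe set during probing episodes. Under it, neither the occupancy bound of Assumption~\ref{assum:reachability_lin_mdps} nor the independence of the detection streams from $\mathcal{L}$'s data applies directly; the paper also passes over this point.
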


\section{Experimental Study}
\label{sec:experiments}

\textbf{Baselines and tuning.} We compare DARLING against the state-of-the-art PS-RL methods summarized in Table~\ref{table:reg_bounds}, including both prior-free and prior-based approaches. Even though MASTER's analysis indicates a flaw, we still compare with it as a prior-free baseline. All baselines are tuned following their respective original papers. DARLING employs the sub-Bernoulli GLR \citep{besson2022efficient} as the detector $\mathcal{D}$ and uses a threshold, $\beta_{\mathrm{GLR}}(n,\delta_F)=\log(n^{3/2}/{\delta_\mathrm{F}})$ with $\delta_F=1/\sqrt{T}$. Finally, we set $\alpha_k$ according to Theorems \ref{thrm:darling_reg}, \ref{thrm:darling_reg_lin_mdps}. We instantiate DARLING with an order-optimal stationary base learner in each regime.
For tabular MDPs we use UCB-MQ~\citep{menard2021ucbmq}, and for linear MDPs we use LSVI-UCB++~\citep{he2023lsviucbplusplus}. Rewards are already bounded in $[0,1]$. For transition detection, we feed successor-feature coordinates into the detector after mapping each feature value to $[0,1]$.

\textbf{Environments.}
We evaluate on $10$ different benchmarks, $5$ tabular MDPs and $5$ linear MDPs. For tabular MDPs, we evaluate on the NS variant of Bidirectional Diabolical Combination Lock from \cite{mao2022restartqucb}, and our NS versions of DeepSea \cite{osband2020deepsea}, FourRoom \cite{sutton1999fourroom}, NRoom \cite{rlberry2021nroom} and Forked RiverSwim \cite{russo2023forkedriverswim}. For Linear MDPs, we evaluate on the NS Chain Lock of \cite{zhou2022restartlsviucb}, and on our NS versions of a Simplex-based linear MDP \cite{jin2020lsviucb,zhou2022restartlsviucb}, GARNET \cite{archibald1995garnet,bhatnagar2009garnet}, Anchor-feature MDP \cite{yang2019anchor} and a Block-structured low-rank linear MDP \cite{agarwal2020lowrank}. The full environment details are provided in the Appendix \ref{app:exp_envs}.

\textbf{Non-stationarity protocols and horizon.}
We test under both PS and drifting non-stationarity for a total of $T=50000$ episodes.
In the PS setting, we adopt a geometric change-point model~\citep{gerogiannis2024blackboxfeas} to stress-test prior-free adaptation:
segment lengths are i.i.d.\ geometric with parameter $T^{-\xi}$ for $\xi\in\{0.4,0.6,0.8\}$, yielding an average of up to $659$ changes over the horizon.
This is substantially more challenging than the settings used in~\cite{mao2022restartqucb} (5 changes) and~\cite{zhou2021lowerboundlin} (20 changes).
For drifting experiments, we use a linear/smooth drifting schedule for all cases. The analytical non-stationarity protocols are given in Appendix \ref{app:exp_envs}.
Performance is reported in terms of cumulative reward.

\textbf{Probe-set construction.} In tabular MDPs, we set $\mathcal{P}=\mathcal{S}\times \mathcal{A}\times [H]$. On the other hand for linear MDPs, we found out that the selection of the probe set had very little effect to the performance of the algorithm. To this end, we just greedily select as many $(s,a)$ pairs for each $h$ such that their $\phi(s,a)$'s are linearly independent. Theoretically, any maximal independent probe set yields identical asymptotic guarantees. When feature vectors have larger norms, the detector triggers faster due to larger shift magnitude in the mean reward or transition kernel. However, we cannot optimize the probe set in prior-free settings since the reward/transition structure is unknown. Still, DARLING's performance is not affected by probe set choice in practice: varying probe sets across random seeds did not meaningfully affect performance.

Figure~\ref{fig:experiments} reports cumulative reward across all benchmarks. Due to space constraints, higher resolution images of the plots are provided in Appendix \ref{app:enhanced_plots}.
DARLING achieves the highest cumulative reward in both tabular and linear MDPs across all PS configurations, and remains strong even under drifting non-stationarity.
Among prior-free methods, DARLING consistently outperforms MASTER highlighting the advantage of directly detecting changes in the MDP rather than relying on regret-violation tests.
Notably, in the drifting tabular regimes, DARLING also surpasses the best prior-based baseline, Restart-Q-UCB. Despite DARLING's multiple detection tests, it is computationally efficient: it runs in $0.83$ ms/episode in tabular MDPs and $1.67$ ms/episode in linear MDPs, faster than MASTER and comparable with other methods. Finally, it is important to emphasize that Assumptions \ref{assum:cp_assum}, \ref{assum:cp_assum_lin_mdps}, \ref{assum:reachability} and \ref{assum:reachability_lin_mdps} are only necessary for theoretical analyses. None of our experiments enforce these constraints, and, in fact, violate them in almost all cases considered.

\begin{figure*}[t]
\centering

% -------- content row (plot + table) --------
\begin{minipage}[t]{0.81\linewidth}
  \vspace{0pt}
  \centering
  \includegraphics[width=\linewidth]{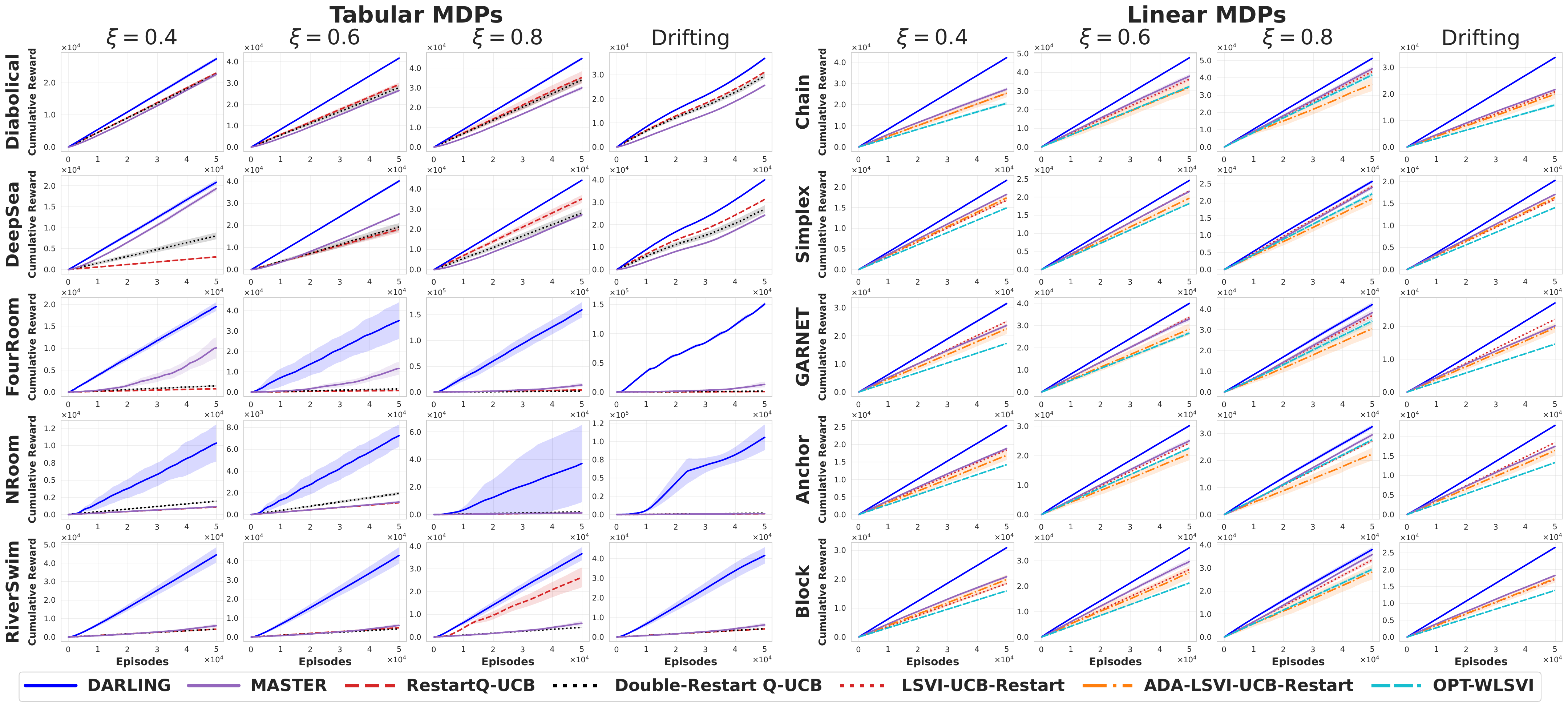}
\end{minipage}\hfill
\begin{minipage}[t]{0.18\linewidth}
  \vspace{10pt}
  \centering

  {\scriptsize\centering\emph{Runtime averaged per episode (mean over seeds/settings).}\par}

  \vspace{3pt}

  {\tiny
  \renewcommand{\arraystretch}{1.1}
  \setlength{\tabcolsep}{2pt}

  \begin{tabular}{@{}l r@{}}
    \hline
    \textbf{Method} & \textbf{ms/ep.} \\
    \hline
    \multicolumn{2}{@{}c@{}}{\textbf{Tabular MDPs}} \\
    \hline
    DARLING & 0.825 \\
    Double-RestartQ & 0.224\\
    MASTER & 1.644 \\
    \shortstack[l]{RestartQ-UCB} & 0.403 \\
    \hline
    \multicolumn{2}{@{}c@{}}{\textbf{Linear MDPs}} \\
    \hline
    DARLING & 1.672 \\
    MASTER & 3.464 \\
    LSVI-UCB-Restart & 1.064 \\
    \shortstack[l]{ADA-LSVI-UCB} & 1.271 \\
    OPT-WLSVI & 1.136 \\
    \hline
  \end{tabular}
  }
\end{minipage}
% -------- full-width caption --------
\caption{Cumulative reward results for the experiments (higher=better). Left: Tabular MDPs, Right: Linear MDPs. DARLING outperforms all state-of-the-art baselines in every scenario.}
\label{fig:experiments}
\end{figure*}
\section{Summary and Outlook}
\label{sec:summary}
In this work, we studied PS-RL in the episodic, finite-horizon setting under both tabular and linear structures, without knowledge of the changes. We identified issues with current state-of-the-art methods, and provided the first, to our knowledge, performance bounds for both linear and tabular settings, to characterize the difficulty of the problem and the state of the literature. To this end, we introduced DARLING, a modular, \emph{prior-free} detection-restart framework for PS-RL. DARLING \emph{detects the model} by monitoring mean shifts in probed reward streams and transition streams, and can wrap \emph{any} stationary RL algorithm with optimal regret. Under certain conditions, DARLING is the first algorithm, to our knowledge, that attains near-optimal dynamic regret in both PS tabular and linear MDPs. Importantly, DARLING consistently outperforms all alternative state-of-the-art baselines in PS benchmarks and remains robust under drifting non-stationarity, while retaining practical runtime.

While DARLING improves the current state-of-the-art, it also has limitations. Irrespective of its good performance, its theoretical analysis relies on change-point separation and reachability assumptions which nonetheless limit the extent to which DARLING achieves fully prior-free theoretical optimality. At the same time, in its current structure, DARLING cannot be applied to an infinite action setting due to its need for finite memory. Given MASTER's shortcomings, an interesting direction would be to investigate whether DARLING's assumptions can be circumvented. On the other hand, future work also includes the extension of DARLING to infinite-horizon MDPs.

\bibliographystyle{plain}
\bibliography{main}

@article{huang2026finite,
  title={Finite-horizon quickest change detection balancing latency with false alarm probability},
  author={Huang, Yu-Han and Veeravalli, Venugopal V},
  journal={Sequential Analysis},
  pages={1--29},
  year={2026},
  publisher={Taylor \& Francis}
}

@inproceedings{cai2017realtimebidding,
	title        = {Real-Time Bidding by Reinforcement Learning in Display Advertising},
	author       = {Cai, Han and Ren, Kan and Zhang, Weinan and Malialis, Kleanthis and Wang, Jun and Yu, Yong and Guo, Defeng},
	year         = 2017,
	booktitle    = {Proceedings of the Tenth ACM International Conference on Web Search and Data Mining},
	location     = {Cambridge, United Kingdom},
	publisher    = {Association for Computing Machinery},
	address      = {New York, NY, USA},
	series       = {WSDM '17},
	pages        = {661–670},
	doi          = {10.1145/3018661.3018702},
	isbn         = 9781450346757,
	url          = {https://doi.org/10.1145/3018661.3018702},
	numpages     = 10
}

@article{shortreed2011treatment,
	title        = {Informing sequential clinical decision-making through reinforcement learning: an empirical study},
	author       = {Shortreed, Susan M. and Laber, Eric and Lizotte, Daniel J. and Stroup, T. Scott and Pineau, Joelle and Murphy, Susan A.},
	year         = 2011,
	month        = jul,
	journal      = {Mach. Learn.},
	publisher    = {Kluwer Academic Publishers},
	address      = {USA},
	volume       = 84,
	number       = {1–2},
	pages        = {109–136},
	doi          = {10.1007/s10994-010-5229-0},
	issn         = {0885-6125},
	url          = {https://doi.org/10.1007/s10994-010-5229-0},
	issue_date   = {July      2011},
	numpages     = 28
}

@article{chen2020trafficmanage,
	title        = {Toward A Thousand Lights: Decentralized Deep Reinforcement Learning for Large-Scale Traffic Signal Control},
	author       = {Chen, Chacha and Wei, Hua and Xu, Nan and Zheng, Guanjie and Yang, Ming and Xiong, Yuanhao and Xu, Kai and Li, Zhenhui},
	year         = 2020,
	month        = 4,
	journal      = {Proceedings of the AAAI Conference on Artificial Intelligence},
	volume       = 34,
	number       = {04},
	pages        = {3414--3421},
	DOI          = {10.1609/aaai.v34i04.5744},
	url          = {https://ojs.aaai.org/index.php/AAAI/article/view/5744},
	abstractNote = {&lt;p&gt;Traffic congestion plagues cities around the world. Recent years have witnessed an unprecedented trend in applying reinforcement learning for traffic signal control. However, the primary challenge is to control and coordinate traffic lights in large-scale urban networks. No one has ever tested RL models on a network of more than a thousand traffic lights. In this paper, we tackle the problem of multi-intersection traffic signal control, especially for large-scale networks, based on RL techniques and transportation theories. This problem is quite difficult because there are challenges such as scalability, signal coordination, data feasibility, etc. To address these challenges, we (1) design our RL agents utilizing ‘pressure’ concept to achieve signal coordination in region-level; (2) show that implicit coordination could be achieved by individual control agents with well-crafted reward design thus reducing the dimensionality; and (3) conduct extensive experiments on multiple scenarios, including a real-world scenario with 2510 traffic lights in Manhattan, New York City &lt;sup&gt;1&lt;/sup&gt; &lt;sup&gt;2&lt;/sup&gt;.&lt;/p&gt;}
}

@inproceedings{agrawal2019inventory,
	title        = {Learning in Structured MDPs with Convex Cost Functions: Improved Regret Bounds for Inventory Management},
	author       = {Agrawal, Shipra and Jia, Randy},
	year         = 2019,
	booktitle    = {Proceedings of the 2019 ACM Conference on Economics and Computation},
	location     = {Phoenix, AZ, USA},
	publisher    = {Association for Computing Machinery},
	address      = {New York, NY, USA},
	series       = {EC '19},
	pages        = {743–744},
	doi          = {10.1145/3328526.3329565},
	isbn         = 9781450367929,
	url          = {https://doi.org/10.1145/3328526.3329565},
	numpages     = 2
}

@book{sutton2018reinforcement,
  title={Reinforcement learning: An introduction},
  author={Sutton, Richard S and Barto, Andrew G},
  year={2018},
  publisher={MIT press}
}

@misc{gerogiannis2025dal,
      title={DAL: A Practical Prior-Free Black-Box Framework for Non-Stationary Bandits}, 
      author={Argyrios Gerogiannis and Yu-Han Huang and Subhonmesh Bose and Venugopal V. Veeravalli},
      year={2025},
      eprint={2501.19401},
      archivePrefix={arXiv},
      primaryClass={cs.LG},
      url={https://arxiv.org/abs/2501.19401}, 
}

@InProceedings{domingues2021lowerbound,
  title = 	 {Episodic Reinforcement Learning in Finite MDPs: Minimax Lower Bounds Revisited},
  author =       {Domingues, Omar Darwiche and M{\'e}nard, Pierre and Kaufmann, Emilie and Valko, Michal},
  booktitle = 	 {Proceedings of the 32nd International Conference on Algorithmic Learning Theory},
  pages = 	 {578--598},
  year = 	 {2021},
  editor = 	 {Feldman, Vitaly and Ligett, Katrina and Sabato, Sivan},
  volume = 	 {132},
  series = 	 {Proceedings of Machine Learning Research},
  month = 	 {16--19 Mar},
  publisher =    {PMLR},
  url = 	 {https://proceedings.mlr.press/v132/domingues21a.html}
}

@misc{mao2022restartqucb,
      title={Model-Free Non-Stationary RL: Near-Optimal Regret and Applications in Multi-Agent RL and Inventory Control}, 
      author={Weichao Mao and Kaiqing Zhang and Ruihao Zhu and David Simchi-Levi and Tamer Başar},
      year={2022},
      eprint={2010.03161},
      archivePrefix={arXiv},
      primaryClass={cs.LG},
      url={https://arxiv.org/abs/2010.03161}, 
}

@InProceedings{menard2021ucbmq,
  title = 	 {UCB Momentum Q-learning: Correcting the bias without forgetting},
  author =       {Menard, Pierre and Domingues, Omar Darwiche and Shang, Xuedong and Valko, Michal},
  booktitle = 	 {Proceedings of the 38th International Conference on Machine Learning},
  pages = 	 {7609--7618},
  year = 	 {2021},
  editor = 	 {Meila, Marina and Zhang, Tong},
  volume = 	 {139},
  series = 	 {Proceedings of Machine Learning Research},
  month = 	 {18--24 Jul},
  publisher =    {PMLR},
  url = 	 {https://proceedings.mlr.press/v139/menard21b.html}
}

@misc{touati2021optwlsvi,
      title={Efficient Learning in Non-Stationary Linear Markov Decision Processes}, 
      author={Ahmed Touati and Pascal Vincent},
      year={2021},
      eprint={2010.12870},
      archivePrefix={arXiv},
      primaryClass={cs.LG},
      url={https://arxiv.org/abs/2010.12870}, 
}

@article{
zhou2022restartlsviucb,
title={Nonstationary Reinforcement Learning with Linear Function Approximation},
author={Huozhi Zhou and Jinglin Chen and Lav R. Varshney and Ashish Jagmohan},
journal={Transactions on Machine Learning Research},
issn={2835-8856},
year={2022},
url={https://openreview.net/forum?id=nS8A9nOrqp},
note={}
}

@InProceedings{he2023lsviucbplusplus,
  title = 	 {Nearly Minimax Optimal Reinforcement Learning for Linear {M}arkov Decision Processes},
  author =       {He, Jiafan and Zhao, Heyang and Zhou, Dongruo and Gu, Quanquan},
  booktitle = 	 {Proceedings of the 40th International Conference on Machine Learning},
  pages = 	 {12790--12822},
  year = 	 {2023},
  editor = 	 {Krause, Andreas and Brunskill, Emma and Cho, Kyunghyun and Engelhardt, Barbara and Sabato, Sivan and Scarlett, Jonathan},
  volume = 	 {202},
  series = 	 {Proceedings of Machine Learning Research},
  month = 	 {23--29 Jul},
  publisher =    {PMLR},
  url = 	 {https://proceedings.mlr.press/v202/he23d.html}
}

@InProceedings{jin2020lsviucb,
  title = 	 {Provably efficient reinforcement learning with linear function approximation},
  author =       {Jin, Chi and Yang, Zhuoran and Wang, Zhaoran and Jordan, Michael I},
  booktitle = 	 {Proceedings of Thirty Third Conference on Learning Theory},
  pages = 	 {2137--2143},
  year = 	 {2020},
  editor = 	 {Abernethy, Jacob and Agarwal, Shivani},
  volume = 	 {125},
  series = 	 {Proceedings of Machine Learning Research},
  month = 	 {09--12 Jul},
  publisher =    {PMLR},
  url = 	 {https://proceedings.mlr.press/v125/jin20a.html}
}

@InProceedings{ortner2020whystationarybadinns,
  title = 	 {Variational Regret Bounds for Reinforcement Learning},
  author =       {Ortner, Ronald and Gajane, Pratik and Auer, Peter},
  booktitle = 	 {Proceedings of The 35th Uncertainty in Artificial Intelligence Conference},
  pages = 	 {81--90},
  year = 	 {2020},
  editor = 	 {Adams, Ryan P. and Gogate, Vibhav},
  volume = 	 {115},
  series = 	 {Proceedings of Machine Learning Research},
  month = 	 {22--25 Jul},
  publisher =    {PMLR},
  url = 	 {https://proceedings.mlr.press/v115/ortner20a.html}
}

@inproceedings{gerogiannis2024blackboxfeas,
	title        = {{Is Prior-Free Black-Box Non-Stationary Reinforcement Learning Feasible?}},
	author       = {Gerogiannis, Argyrios and Huang, Yu-Han and Veeravalli, Venugopal},
	year         = 2025,
	month        = {03--05 May},
	booktitle    = {Proceedings of The 28th International Conference on Artificial Intelligence and Statistics},
	publisher    = {PMLR},
	series       = {Proceedings of Machine Learning Research},
	volume       = 258,
	pages        = {2692--2700},
	url          = {https://proceedings.mlr.press/v258/gerogiannis25a.html},
	editor       = {Li, Yingzhen and Mandt, Stephan and Agrawal, Shipra and Khan, Emtiyaz}
}

@misc{huang2025dabprocedures,
      title={Detection Augmented Bandit Procedures for Piecewise Stationary MABs: A Modular Approach}, 
      author={Yu-Han Huang and Argyrios Gerogiannis and Subhonmesh Bose and Venugopal V. Veeravalli},
      year={2025},
      eprint={2501.01291},
      archivePrefix={arXiv},
      primaryClass={cs.AI},
      url={https://arxiv.org/abs/2501.01291}, 
}

@InProceedings{zhou2021lowerboundlin,
  title = 	 {Nearly Minimax Optimal Reinforcement Learning for Linear Mixture Markov Decision Processes},
  author =       {Zhou, Dongruo and Gu, Quanquan and Szepesvari, Csaba},
  booktitle = 	 {Proceedings of Thirty Fourth Conference on Learning Theory},
  pages = 	 {4532--4576},
  year = 	 {2021},
  editor = 	 {Belkin, Mikhail and Kpotufe, Samory},
  volume = 	 {134},
  series = 	 {Proceedings of Machine Learning Research},
  month = 	 {15--19 Aug},
  publisher =    {PMLR},
  url = 	 {https://proceedings.mlr.press/v134/zhou21a.html}
}

@INPROCEEDINGS{huang2025sequentialchangedetectionlearning,
  author={Huang, Yu-Han and Veeravalli, Venugopal V.},
  booktitle={2025 IEEE International Symposium on Information Theory (ISIT)}, 
  title={Sequential Change Detection for Learning in Piecewise Stationary Bandit Environments}, 
  year={2025},
  volume={},
  number={},
  pages={1-5},
  doi={10.1109/ISIT63088.2025.11195515}}

@article{jaksch2010bugdget_restart_tabular_modelbased,
  title={Near-optimal Regret Bounds for Reinforcement Learning},
  author={Jaksch, Thomas and Ortner, Ronald and Auer, Peter},
  journal={Journal of Machine Learning Research},
  volume={11},
  pages={1563--1600},
  year={2010}
}

@article{gajane2018sliding_window_tabular_modelbased,
  title={A sliding-window algorithm for {M}arkov decision processes with arbitrarily changing rewards and transitions},
  author={Gajane, Pratik and Ortner, Ronald and Auer, Peter},
  journal={arXiv preprint arXiv:1805.10066},
  year={2018}
}

@inproceedings{ortner2020variational_budget_restart_tabular_modelbased,
  title={Variational regret bounds for reinforcement learning},
  author={Ortner, Ronald and Gajane, Pratik and Auer, Peter},
  booktitle={Uncertainty in Artificial Intelligence},
  pages={81--90},
  year={2019}
}

@InProceedings{cheung2020reinforce_sliding_window_tabular_modelbased,
  title = 	 {Reinforcement Learning for Non-Stationary {M}arkov Decision Processes: The Blessing of ({M}ore) Optimism},
  author =       {Cheung, Wang Chi and Simchi-Levi, David and Zhu, Ruihao},
  booktitle = 	 {Proceedings of the 37th International Conference on Machine Learning},
  pages = 	 {1843--1854},
  year = 	 {2020},
  editor = 	 {III, Hal Daumé and Singh, Aarti},
  volume = 	 {119},
  series = 	 {Proceedings of Machine Learning Research},
  month = 	 {13--18 Jul},
  publisher =    {PMLR},
  url = 	 {https://proceedings.mlr.press/v119/cheung20a.html}
}

@inproceedings{domingues2020kernel_sliding_window_discounted_modelbased,
  title={A kernel-based approach to non-stationary reinforcement learning in metric spaces},
  author={Domingues, Omar Darwiche and M{\'e}nard, Pierre and Pirotta, Matteo and Kaufmann, Emilie and Valko, Michal},
  booktitle={International Conference on Artificial Intelligence and Statistics},
  pages={3538--3546},
  year={2021}
}

@inproceedings{besbes2014nsmabs,
	title        = {{Stochastic Multi-Armed-Bandit Problem with Non-stationary Rewards}},
	author       = {Besbes, Omar and Gur, Yonatan and Zeevi, Assaf},
	year         = 2014,
	booktitle    = {Advances in Neural Information Processing Systems},
	publisher    = {Curran Associates, Inc.},
	volume       = 27,
	pages        = {},
	url          = {https://proceedings.neurips.cc/paper_files/paper/2014/file/903ce9225fca3e988c2af215d4e544d3-Paper.pdf},
	editor       = {Z. Ghahramani and M. Welling and C. Cortes and N. Lawrence and K.Q. Weinberger}
}

@inproceedings{wei2021master,
	title        = {{Non-stationary Reinforcement Learning without Prior Knowledge: an Optimal Black-box Approach}},
	author       = {Wei, Chen-Yu and Luo, Haipeng},
	year         = 2021,
	month        = {15--19 Aug},
	booktitle    = {Proceedings of Thirty Fourth Conference on Learning Theory},
	publisher    = {PMLR},
	series       = {Proceedings of Machine Learning Research},
	volume       = 134,
	pages        = {4300--4354},
	url          = {https://proceedings.mlr.press/v134/wei21b.html},
	editor       = {Belkin, Mikhail and Kpotufe, Samory}
}

@inproceedings{peng_papadim_2024,
	title        = {{The complexity of non-stationary reinforcement learning}},
	author       = {Peng, Binghui and Papadimitriou, Christos},
	year         = 2024,
	month        = {25--28 Feb},
	booktitle    = {Proceedings of The 35th International Conference on Algorithmic Learning Theory},
	publisher    = {PMLR},
	series       = {Proceedings of Machine Learning Research},
	volume       = 237,
	pages        = {972--996},
	editor       = {Vernade, Claire and Hsu, Daniel}
}

@inproceedings{cao2019nearly,
	title        = {{Nearly Optimal Adaptive Procedure with Change Detection for Piecewise-Stationary Bandit}},
	author       = {Cao, Yang and Wen, Zheng and Kveton, Branislav and Xie, Yao},
	year         = 2019,
	month        = {16--18 Apr},
	booktitle    = {Proceedings of the Twenty-Second International Conference on Artificial Intelligence and Statistics},
	publisher    = {PMLR},
	series       = {Proceedings of Machine Learning Research},
	volume       = 89,
	pages        = {418--427},
	url          = {https://proceedings.mlr.press/v89/cao19a.html},
	editor       = {Chaudhuri, Kamalika and Sugiyama, Masashi}
}

@article{liu2018change,
	title        = {{A Change-Detection Based Framework for Piecewise-Stationary Multi-Armed Bandit Problem}},
	author       = {Liu, Fang and Lee, Joohyun and Shroff, Ness},
	year         = 2018,
	month        = {Apr.},
	journal      = {Proceedings of the AAAI Conference on Artificial Intelligence},
	volume       = 32,
	number       = 1,
	doi          = {10.1609/aaai.v32i1.11746},
	url          = {https://ojs.aaai.org/index.php/AAAI/article/view/11746},
	abstractnote = {&lt;p&gt; The multi-armed bandit problem has been extensively studied under the stationary assumption. However in reality, this assumption often does not hold because the distributions of rewards themselves may change over time. In this paper, we propose a change-detection (CD) based framework for multi-armed bandit problems under the piecewise-stationary setting, and study a class of change-detection based UCB (Upper Confidence Bound) policies, CD-UCB, that actively detects change points and restarts the UCB indices. We then develop CUSUM-UCB and PHT-UCB, that belong to the CD-UCB class and use cumulative sum (CUSUM) and Page-Hinkley Test (PHT) to detect changes. We show that CUSUM-UCB obtains the best known regret upper bound under mild assumptions. We also demonstrate the regret reduction of the CD-UCB policies over arbitrary Bernoulli rewards and Yahoo! datasets of webpage click-through rates. &lt;/p&gt;}
}

@article{besson2022efficient,
	title        = {{Efficient Change-Point Detection for Tackling Piecewise-Stationary Bandits}},
	author       = {Lilian Besson and Emilie Kaufmann and Odalric-Ambrym Maillard and Julien Seznec},
	year         = 2022,
	journal      = {Journal of Machine Learning Research},
	volume       = 23,
	number       = 77,
	pages        = {1--40}
}

@inproceedings{auer2019adswitch,
	title        = {{Adaptively Tracking the Best Bandit Arm with an Unknown Number of Distribution Changes}},
	author       = {Auer, Peter and Gajane, Pratik and Ortner, Ronald},
	year         = 2019,
	month        = {25--28 Jun},
	booktitle    = {Proceedings of the Thirty-Second Conference on Learning Theory},
	publisher    = {PMLR},
	series       = {Proceedings of Machine Learning Research},
	volume       = 99,
	pages        = {138--158},
	editor       = {Beygelzimer, Alina and Hsu, Daniel}
}

@inproceedings{russac2019weighted,
	title        = {{Weighted Linear Bandits for Non-Stationary Environments}},
	author       = {Russac, Yoan and Vernade, Claire and Capp\'{e}, Olivier},
	year         = 2019,
	booktitle    = {Advances in Neural Information Processing Systems},
	publisher    = {Curran Associates, Inc.},
	volume       = 32,
	pages        = {},
	url          = {https://proceedings.neurips.cc/paper_files/paper/2019/file/263fc48aae39f219b4c71d9d4bb4aed2-Paper.pdf},
	editor       = {H. Wallach and H. Larochelle and A. Beygelzimer and F. d\textquotesingle Alch\'{e}-Buc and E. Fox and R. Garnett}
}

@inproceedings{kocsis2006discounted,
	title        = {{Discounted ucb}},
	author       = {Kocsis, Levente and Szepesv{\'a}ri, Csaba},
	year         = 2006,
	booktitle    = {2nd PASCAL Challenges Workshop},
	volume       = 2,
	pages        = {51--134}
}

@inproceedings{garivier2011sw,
	title        = {{On Upper-Confidence Bound Policies for Switching Bandit Problems}},
	author       = {Garivier, Aur{\'e}lien and Moulines, Eric},
	year         = 2011,
	booktitle    = {Algorithmic Learning Theory},
	publisher    = {Springer Berlin Heidelberg},
	address      = {Berlin, Heidelberg},
	pages        = {174--188},
	isbn         = {978-3-642-24412-4},
	editor       = {Kivinen, Jyrki and Szepesv{\'a}ri, Csaba and Ukkonen, Esko and Zeugmann, Thomas}
}

@inproceedings{wang2023weight,
	title        = {{Revisiting Weighted Strategy for Non-stationary Parametric Bandits}},
	author       = {Wang, Jing and Zhao, Peng and Zhou, Zhi-Hua},
	year         = 2023,
	month        = {25--27 Apr},
	booktitle    = {Proceedings of The 26th International Conference on Artificial Intelligence and Statistics},
	publisher    = {PMLR},
	series       = {Proceedings of Machine Learning Research},
	volume       = 206,
	pages        = {7913--7942},
	url          = {https://proceedings.mlr.press/v206/wang23k.html},
	editor       = {Ruiz, Francisco and Dy, Jennifer and van de Meent, Jan-Willem}
}

@misc{russac2020algnsglbs,
	title        = {{Algorithms for Non-Stationary Generalized Linear Bandits}},
	author       = {Yoan Russac and Olivier Cappé and Aurélien Garivier},
	year         = 2020,
	url          = {https://arxiv.org/abs/2003.10113},
	eprint       = {2003.10113},
	archiveprefix = {arXiv},
	primaryclass = {cs.LG}
}

@misc{faury2021regboundsnsglbs,
	title        = {{Regret Bounds for Generalized Linear Bandits under Parameter Drift}},
	author       = {Louis Faury and Yoan Russac and Marc Abeille and Clément Calauzènes},
	year         = 2021,
	url          = {https://arxiv.org/abs/2103.05750},
	eprint       = {2103.05750},
	archiveprefix = {arXiv},
	primaryclass = {cs.LG}
}

@inproceedings{russac2021nsweightscbs,
	title        = {{ Self-Concordant Analysis of Generalized Linear Bandits with Forgetting }},
	author       = {Russac, Yoan and Faury, Louis and Capp{\'e}, Olivier and Garivier, Aur{\'e}lien},
	year         = 2021,
	month        = {13--15 Apr},
	booktitle    = {Proceedings of The 24th International Conference on Artificial Intelligence and Statistics},
	publisher    = {PMLR},
	series       = {Proceedings of Machine Learning Research},
	volume       = 130,
	pages        = {658--666},
	url          = {https://proceedings.mlr.press/v130/russac21a.html},
	editor       = {Banerjee, Arindam and Fukumizu, Kenji}
}

@inproceedings{cheung2019swlbs,
	title        = {{Learning to Optimize under Non-Stationarity}},
	author       = {Cheung, Wang Chi and Simchi-Levi, David and Zhu, Ruihao},
	year         = 2019,
	month        = {16--18 Apr},
	booktitle    = {Proceedings of the Twenty-Second International Conference on Artificial Intelligence and Statistics},
	publisher    = {PMLR},
	series       = {Proceedings of Machine Learning Research},
	volume       = 89,
	pages        = {1079--1087},
	url          = {https://proceedings.mlr.press/v89/cheung19b.html},
	editor       = {Chaudhuri, Kamalika and Sugiyama, Masashi}
}

@inproceedings{zhao2020rnsglbs,
	title        = {{A Simple Approach for Non-stationary Linear Bandits}},
	author       = {Zhao, Peng and Zhang, Lijun and Jiang, Yuan and Zhou, Zhi-Hua},
	year         = 2020,
	month        = {26--28 Aug},
	booktitle    = {Proceedings of the Twenty Third International Conference on Artificial Intelligence and Statistics},
	publisher    = {PMLR},
	series       = {Proceedings of Machine Learning Research},
	volume       = 108,
	pages        = {746--755},
	url          = {https://proceedings.mlr.press/v108/zhao20a.html},
	editor       = {Chiappa, Silvia and Calandra, Roberto}
}

@inproceedings{hong2023opkb,
	title        = {{An Optimization-based Algorithm for Non-stationary Kernel Bandits without Prior Knowledge}},
	author       = {Hong, Kihyuk and Li, Yuhang and Tewari, Ambuj},
	year         = 2023,
	month        = {25--27 Apr},
	booktitle    = {Proceedings of The 26th International Conference on Artificial Intelligence and Statistics},
	publisher    = {PMLR},
	series       = {Proceedings of Machine Learning Research},
	volume       = 206,
	pages        = {3048--3085},
	url          = {https://proceedings.mlr.press/v206/hong23b.html},
	editor       = {Ruiz, Francisco and Dy, Jennifer and van de Meent, Jan-Willem}
}

@misc{huang2025cdbppsmabs,
	title        = {{Change Detection-Based Procedures for Piecewise Stationary MABs: A Modular Approach}},
	author       = {Yu-Han Huang and Argyrios Gerogiannis and Subhonmesh Bose and Venugopal V. Veeravalli},
	year         = 2025,
	url          = {https://arxiv.org/abs/2501.01291},
	eprint       = {2501.01291},
	archiveprefix = {arXiv},
	primaryclass = {cs.AI}
}

@inproceedings{zhou2021rgpucb-swgpucb,
	title        = {{No-Regret Algorithms for Time-Varying Bayesian Optimization}},
	author       = {Zhou, Xingyu and Shroff, Ness},
	year         = 2021,
	booktitle    = {2021 55th Annual Conference on Information Sciences and Systems (CISS)},
	volume       = {},
	number       = {},
	pages        = {1--6},
	doi          = {10.1109/CISS50987.2021.9400292}
}

@inproceedings{deng2022wgpucb,
	title        = {{ Weighted Gaussian Process Bandits for Non-stationary Environments }},
	author       = {Deng, Yuntian and Zhou, Xingyu and Kim, Baekjin and Tewari, Ambuj and Gupta, Abhishek and Shroff, Ness},
	year         = 2022,
	month        = {28--30 Mar},
	booktitle    = {Proceedings of The 25th International Conference on Artificial Intelligence and Statistics},
	publisher    = {PMLR},
	series       = {Proceedings of Machine Learning Research},
	volume       = 151,
	pages        = {6909--6932},
	url          = {https://proceedings.mlr.press/v151/deng22b.html},
	editor       = {Camps-Valls, Gustau and Ruiz, Francisco J. R. and Valera, Isabel}
}

@InProceedings{chen2019adailtcbp,
  title = 	 {A New Algorithm for Non-stationary Contextual Bandits: Efficient, Optimal and Parameter-free},
  author =       {Chen, Yifang and Lee, Chung-Wei and Luo, Haipeng and Wei, Chen-Yu},
  booktitle = 	 {Proceedings of the Thirty-Second Conference on Learning Theory},
  pages = 	 {696--726},
  year = 	 {2019},
  editor = 	 {Beygelzimer, Alina and Hsu, Daniel},
  volume = 	 {99},
  series = 	 {Proceedings of Machine Learning Research},
  month = 	 {25--28 Jun},
  publisher =    {PMLR},
  url = 	 {https://proceedings.mlr.press/v99/chen19b.html}
}

@InProceedings{luo2018adailtcb,
  title = 	 {Efficient Contextual Bandits in Non-stationary Worlds},
  author =       {Luo, Haipeng and Wei, Chen-Yu and Agarwal, Alekh and Langford, John},
  booktitle = 	 {Proceedings of the 31st  Conference On Learning Theory},
  pages = 	 {1739--1776},
  year = 	 {2018},
  editor = 	 {Bubeck, Sébastien and Perchet, Vianney and Rigollet, Philippe},
  volume = 	 {75},
  series = 	 {Proceedings of Machine Learning Research},
  month = 	 {06--09 Jul},
  publisher =    {PMLR},
  url = 	 {https://proceedings.mlr.press/v75/luo18a.html}
}

@inproceedings{cheng2023lowrankreach,
 author = {Cheng, Yuan and Yang, Jing and Liang, Yingbin},
 booktitle = {Advances in Neural Information Processing Systems},
 editor = {A. Oh and T. Naumann and A. Globerson and K. Saenko and M. Hardt and S. Levine},
 pages = {6330--6372},
 publisher = {Curran Associates, Inc.},
 title = {Provably Efficient Algorithm for Nonstationary Low-Rank MDPs},
 volume = {36},
 year = {2023}
}

@misc{nguyen2025nonstationarylipschitzbandits,
      title={Non-Stationary Lipschitz Bandits}, 
      author={Nicolas Nguyen and Solenne Gaucher and Claire Vernade},
      year={2025},
      eprint={2505.18871},
      archivePrefix={arXiv},
      primaryClass={stat.ML},
      url={https://arxiv.org/abs/2505.18871}, 
}

@inproceedings{
osband2020deepsea,
title={Behaviour Suite for Reinforcement Learning},
author={Ian Osband and Yotam Doron and Matteo Hessel and John Aslanides and Eren Sezener and Andre Saraiva and Katrina McKinney and Tor Lattimore and Csaba Szepesvari and Satinder Singh and Benjamin Van Roy and Richard Sutton and David Silver and Hado Van Hasselt},
booktitle={International Conference on Learning Representations},
year={2020},
url={https://openreview.net/forum?id=rygf-kSYwH}
}

@article{sutton1999fourroom,
title = {Between MDPs and semi-MDPs: A framework for temporal abstraction in reinforcement learning},
journal = {Artificial Intelligence},
volume = {112},
number = {1},
pages = {181-211},
year = {1999},
issn = {0004-3702},
doi = {https://doi.org/10.1016/S0004-3702(99)00052-1},
url = {https://www.sciencedirect.com/science/article/pii/S0004370299000521},
author = {Richard S. Sutton and Doina Precup and Satinder Singh}
}

@misc{rlberry2021nroom,
    author = {Domingues, Omar Darwiche and Flet-Berliac, Yannis and Leurent, Edouard and M{\'e}nard, Pierre and Shang, Xuedong and Valko, Michal},
    doi = {10.5281/zenodo.5544540},
    month = {10},
    title = {{rlberry - A Reinforcement Learning Library for Research and Education}},
    url = {https://github.com/rlberry-py/rlberry},
    year = {2021}
}

@inproceedings{
russo2023forkedriverswim,
title={Model-Free Active Exploration in Reinforcement Learning},
author={Alessio Russo and Alexandre Proutiere},
booktitle={Thirty-seventh Conference on Neural Information Processing Systems},
year={2023},
url={https://openreview.net/forum?id=YEtstXIpP3}
}

@article{archibald1995garnet,
 ISSN = {01605682, 14769360},
 URL = {http://www.jstor.org/stable/2584329},
 author = {T. W. Archibald and K. I. M. McKinnon and L. C. Thomas},
 journal = {The Journal of the Operational Research Society},
 number = {3},
 pages = {354--361},
 publisher = {Palgrave Macmillan Journals},
 title = {On the Generation of Markov Decision Processes},
 urldate = {2026-05-06},
 volume = {46},
 year = {1995}
}

@article{bhatnagar2009garnet,
title = {Natural actor–critic algorithms},
journal = {Automatica},
volume = {45},
number = {11},
pages = {2471-2482},
year = {2009},
issn = {0005-1098},
doi = {https://doi.org/10.1016/j.automatica.2009.07.008},
url = {https://www.sciencedirect.com/science/article/pii/S0005109809003549},
author = {Shalabh Bhatnagar and Richard S. Sutton and Mohammad Ghavamzadeh and Mark Lee}
}

@InProceedings{yang2019anchor,
  title = 	 {Sample-Optimal Parametric Q-Learning Using Linearly Additive Features},
  author =       {Yang, Lin and Wang, Mengdi},
  booktitle = 	 {Proceedings of the 36th International Conference on Machine Learning},
  pages = 	 {6995--7004},
  year = 	 {2019},
  editor = 	 {Chaudhuri, Kamalika and Salakhutdinov, Ruslan},
  volume = 	 {97},
  series = 	 {Proceedings of Machine Learning Research},
  month = 	 {09--15 Jun},
  publisher =    {PMLR},
  url = 	 {https://proceedings.mlr.press/v97/yang19b.html}
}

@inproceedings{agarwal2020lowrank,
 author = {Agarwal, Alekh and Kakade, Sham and Krishnamurthy, Akshay and Sun, Wen},
 booktitle = {Advances in Neural Information Processing Systems},
 editor = {H. Larochelle and M. Ranzato and R. Hadsell and M.F. Balcan and H. Lin},
 pages = {20095--20107},
 publisher = {Curran Associates, Inc.},
 title = {FLAMBE: Structural Complexity and Representation Learning of Low Rank MDPs},
 url = {https://proceedings.neurips.cc/paper_files/paper/2020/file/e894d787e2fd6c133af47140aa156f00-Paper.pdf},
 volume = {33},
 year = {2020}
}

%%%%%%%%%%%%%%%%%%%%%%%%%%%%%%%%%%%%%%%%%%%%%%%%%%%%%%%%%%%%%%%%%%%%%%%%%%%%%%%
%%%%%%%%%%%%%%%%%%%%%%%%%%%%%%%%%%%%%%%%%%%%%%%%%%%%%%%%%%%%%%%%%%%%%%%%%%%%%%%
% APPENDIX
%%%%%%%%%%%%%%%%%%%%%%%%%%%%%%%%%%%%%%%%%%%%%%%%%%%%%%%%%%%%%%%%%%%%%%%%%%%%%%%
%%%%%%%%%%%%%%%%%%%%%%%%%%%%%%%%%%%%%%%%%%%%%%%%%%%%%%%%%%%%%%%%%%%%%%%%%%%%%%%
\newpage
\appendix
\onecolumn

\section{Related Work in Non-Stationary Bandits}
\label{app:bandit_work}
Non-stationary (NS) bandits are a canonical testbed for studying learning under distribution shift, and they have strongly influenced how NS-RL algorithms are designed and analyzed.
A useful way to organize the NS bandit literature is along two largely orthogonal axes:
(i) the \emph{adaptation mechanism}---\emph{adaptive} methods that continuously emphasize recency versus \emph{restarting} methods that explicitly reset the learner---and
(ii) the extent of \emph{prior knowledge} required about the non-stationarity---\emph{prior-based} (requiring tuned parameters linked to variation/breakpoints) versus \emph{prior-free} (not requiring such tuning).
This taxonomy parallels the dominant paradigms in NS-RL (discounting/windowing, budget-restart, detection-restart), and helps clarify which assumptions are needed to obtain guarantees and practical performance \citep{garivier2011sw,besbes2014nsmabs,besson2022efficient}.

\textbf{Adaptive methods: discounting and sliding windows.} Adaptive approaches track change by continuously down-weighting or discarding older samples, typically via exponential discounting or fixed-length sliding windows.
These methods are conceptually simple and widely applicable, but their performance depends on selecting a discount factor or window length that matches the (unknown) timescale of non-stationarity, rendering them typically \emph{prior-based}.
In NS-MABs, classical examples include discounted UCB and sliding-window UCB \citep{kocsis2006discounted,garivier2011sw}.
This paradigm has been extended to structured bandits, including NS linear bandits (NS-LBs) \citep{cheung2019swlbs,russac2019weighted,wang2023weight}, NS generalized linear bandits (NS-GLBs) \citep{faury2021regboundsnsglbs,russac2020algnsglbs,wang2023weight}, and NS self-concordant bandits (NS-SCBs) \citep{russac2021nsweightscbs,wang2023weight}.
Analogous ideas also appear in non-parametric settings such as kernelized bandits (NS-KBs), where recency-weighted or windowed estimators are combined with optimism \citep{deng2022wgpucb,zhou2021rgpucb-swgpucb}.
Overall, discounting/windowing provides a general-purpose route to adaptivity, but introduces a non-trivial tuning problem: too much forgetting increases variance, while too little forgetting yields bias under shift.

\textbf{Restarting methods: budgeted restarts.} A second family of approaches explicitly \emph{restarts} the learning process, typically on a schedule designed to control the amount of stale data.
In NS bandits, the most common restarting template is the \emph{budget-restart} strategy, which restarts at predetermined times (or on epochs of increasing lengths) selected using a variation/breakpoint budget.
This yields strong theoretical guarantees when the budget is known or can be tuned, but again is usually \emph{prior-based}.
Representative results include the classical NS-MAB framework in \cite{besbes2014nsmabs}, as well as extensions to structured settings such as NS-LBs/NS-GLBs \citep{zhao2020rnsglbs} and NS-KBs \citep{zhou2021rgpucb-swgpucb}.
Conceptually, budget-restart trades off two error sources: within-epoch learning (stationary regret) and cross-epoch mismatch (stale data), and the schedule is tuned to balance these terms.

\textbf{Restarting methods: detection-based restarts.} Detection-restart methods aim to remove the explicit dependence on a known non-stationarity budget by \emph{testing} for change and restarting only when evidence accumulates.
This is particularly natural in abrupt (piecewise-stationary) models, where changes are sparse but impactful.
In NS-MABs, prior-based detection-restart methods include algorithms that rely on thresholds calibrated to the change budget or minimal gap assumptions \citep{liu2018change,cao2019nearly}.
More recent prior-free approaches emphasize modular change detection primitives (e.g., GLR/CuSum-type tests) coupled with bandit exploration policies, enabling guarantees without knowing the number/timing of changes \citep{auer2019adswitch,besson2022efficient,huang2025cdbppsmabs}.
Beyond MABs, related detection-restart ideas have been developed for richer structured classes, including NS linear and kernelized bandits \citep{hong2023opkb}, NS Lipschitz bandits \cite{nguyen2025nonstationarylipschitzbandits} and NS contextual bandits \citep{luo2018adailtcb,chen2019adailtcbp}.
At a high level, these methods separate concerns: a base algorithm drives exploration/exploitation within a segment, while a statistical test monitors for distributional shifts and triggers a reset. Importantly, in the detection-based restart literature there exist two black-box, prior-free methodologies which are applicable to all the general bandit settings DAL \cite{gerogiannis2025dal} and MASTER \cite{wei2021master}.

\section{DARLING for Linear MDPs}
\label{app:linear_mdps}

Similar to DARLING for tabular MDPs, we need to design a change detection mechanism that augments the stationary algorithm $\mathcal{L}$. However, due to the infinite state space, it is impossible to probe all state-action-step triple over $T$ episodes. Therefore, it is essential to identify a small and finite probe set that allows DARLING to reliably detect changes. In addition, maintaining a transition history $\mathcal{H}^{(P)}_{(s,a,h,s')}$ for all $s' \in \mathcal{S}$ and $(s,a,h)$ in the probe set is infeasible due to the infinite state space. Consequently, we also need a new finite set of transition histories.

\textbf{Probe set construction: calibration.} To detect changes reliably within short delay, it is desirable to construct a finite probe set consisting of frequently visited state-action-step triples when DARLING uniformly samples all actions. Assumption \ref{assum:reachability_lin_mdps} guarantees the existence of $\mathcal{P}_{h}$, in which the visitation probability $q_{h,t}(s,a)$ is lower bounded by $1/(2d)$. Thus, we can use $\mathcal{P}_{h}$ as the probe set at step $h$. Since the state-action pairs in $\mathcal{P}_{h}$ have high visitation probabilities, we can identify $\mathcal{P}_{h}$ by choosing the $d$ most frequently occurring state-action pairs at step $h$. Therefore, after each restart, DARLING first employs the uniform sampling policy $\pi_{\mathrm{U}}$ for $n_{0}$ episodes, and then choose the $d$ most visited state-action pair $(s,a)$ at step $h$ as $\mathcal{P}_{h}$. This process is termed as calibration and is illustrated in Algorithm \ref{alg:DARLING_lin_mdps}.

\textbf{Transition histories.} To construct a finite number of histories, we leverage the linear underlying structure of the transition kernel. We first note that the expected value of the feature vector $\phi(s_{h+1}^{t}, a')$ conditioned on $(s^{t}_{h}, a^{t}_{h})$ changes if and only if the probability transition kernel $P_{h}^{t}(s_{h+1}^{t}|s^{t}_{h}, a^{t}_{h}) = \phi(s^{t}_{h}, a^{t}_{h})\mu_{h,t}(s^{t}_{h+1})$.
%
% \begin{proposition}
%     For a fixed $(s, a, h, a')$
% \end{proposition}
\begin{proposition}
\label{prop:sf_mean_kernel_change}
Fix $(s,a,h,a')\in\mathcal S\times\mathcal A\times[H]\times\mathcal A$ and two episodes
$t\neq t'$. Define
\[
\Delta P_h(\cdot\mid s,a)
\coloneqq
P_h^t(\cdot\mid s,a)-P_h^{t'}(\cdot\mid s,a).
\]
Then
\[
\mathbb E_{s'\sim P_h^t(\cdot\mid s,a)}[\phi(s',a')]
-
\mathbb E_{s'\sim P_h^{t'}(\cdot\mid s,a)}[\phi(s',a')]
=
\sum_{s'\in\mathcal S}
\Delta P_h(s'\mid s,a)\phi(s',a').
\]
Consequently,
\[
\mathbb E_{s'\sim P_h^t(\cdot\mid s,a)}[\phi(s',a')]
\neq
\mathbb E_{s'\sim P_h^{t'}(\cdot\mid s,a)}[\phi(s',a')]
\quad\Longrightarrow\quad
P_h^t(\cdot\mid s,a)\neq P_h^{t'}(\cdot\mid s,a).
\]
Moreover, if the map
\[
p\mapsto \sum_{s'\in\mathcal S}p(s')\phi(s',a')
\]
is injective over probability distributions on $\mathcal S$, then the converse also holds. Hence, under this identifiability condition,
\[
\mathbb E_{s'\sim P_h^t(\cdot\mid s,a)}[\phi(s',a')]
\neq
\mathbb E_{s'\sim P_h^{t'}(\cdot\mid s,a)}[\phi(s',a')]
\quad\Longleftrightarrow\quad
P_h^t(\cdot\mid s,a)\neq P_h^{t'}(\cdot\mid s,a).
\]
\end{proposition}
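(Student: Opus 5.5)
The plan is to argue directly from linearity of expectation, treating the expected successor feature as a linear functional of the next-state distribution. First, for fixed $(s,a,h,a')$ and an episode $t$, I will write
\[
\mathbb E_{s'\sim P_h^t(\cdot\mid s,a)}[\phi(s',a')]=\sum_{s'\in\mathcal S}P_h^t(s'\mid s,a)\,\phi(s',a').
\]
This is the definition of expectation for a countable state space. The sum converges absolutely coordinatewise because $\|\phi(s',a')\|_2\le 1$ and $P_h^t(\cdot\mid s,a)$ is a probability distribution.

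Doing the same for $t'$ and subtracting the two absolutely convergent series termwise gives the displayed identity with $\Delta P_h(\cdot\mid s,a)$. The only care needed is justifying the termwise subtraction for infinite $\mathcal S$. This follows from absolute convergence, since $\sum_{s'}|\Delta P_h(s'\mid s,a)|\,\|\phi(s',a')\|_2\le 2$.

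For the first implication I will argue by contrapositive. If $P_h^t(\cdot\mid s,a)=P_h^{t'}(\cdot\mid s,a)$, then $\Delta P_h\equiv 0$, so the right-hand side of the identity vanishes. Hence the two expected feature vectors coincide.

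For the converse under injectivity, define $\Phi_{a'}(p)\coloneqq\sum_{s'}p(s')\phi(s',a')$ on the set of probability distributions on $\mathcal S$. Suppose $P_h^t(\cdot\mid s,a)\neq P_h^{t'}(\cdot\mid s,a)$. Both are probability distributions, so injectivity of $\Phi_{a'}$ gives $\Phi_{a'}(P_h^t(\cdot\mid s,a))\neq\Phi_{a'}(P_h^{t'}(\cdot\mid s,a))$. That is exactly the inequality of expected features. Combining the two directions yields the stated equivalence.

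I expect no real obstacle here. The one point to state carefully is that injectivity is assumed only over probability distributions, not over all signed measures. The converse therefore has to be applied to the two kernels themselves, not to $\Delta P_h$. I will also remark that in the linear MDP, $P_h^t(\cdot\mid s,a)=\phi(s,a)^\top\mu_{h,t}(\cdot)$, so a kernel change at some probed $(s,a)$ is what makes a change in $\mu_{h,t}$ visible. That justifies monitoring these $d\,|\mathcal A|$ scalar streams in the linear extension.
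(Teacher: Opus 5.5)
Your proposal is correct and follows essentially the same route as the paper: expand both expectations as sums over $\mathcal S$, subtract to obtain the identity with $\Delta P_h(\cdot\mid s,a)$, and apply injectivity to the two kernels themselves for the converse. Your contrapositive for the first implication and your absolute-convergence justification of termwise subtraction for countable $\mathcal S$ are slightly more careful versions of the paper's argument.
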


\begin{proof}
By definition,
\[
\mathbb E_{s'\sim P_h^t(\cdot\mid s,a)}[\phi(s',a')]
=
\sum_{s'\in\mathcal S}
P_h^t(s'\mid s,a)\phi(s',a'),
\]
and similarly,
\[
\mathbb E_{s'\sim P_h^{t'}(\cdot\mid s,a)}[\phi(s',a')]
=
\sum_{s'\in\mathcal S}
P_h^{t'}(s'\mid s,a)\phi(s',a').
\]
Subtracting gives
\[
\mathbb E_{P_h^t}[\phi(s',a')\mid s,a]
-
\mathbb E_{P_h^{t'}}[\phi(s',a')\mid s,a]
=
\sum_{s'\in\mathcal S}
\bigl(P_h^t(s'\mid s,a)-P_h^{t'}(s'\mid s,a)\bigr)\phi(s',a').
\]
Thus,
\[
\mathbb E_{P_h^t}[\phi(s',a')\mid s,a]
-
\mathbb E_{P_h^{t'}}[\phi(s',a')\mid s,a]
=
\sum_{s'\in\mathcal S}
\Delta P_h(s'\mid s,a)\phi(s',a').
\]
If the left-hand side is nonzero, then the signed measure
$\Delta P_h(\cdot\mid s,a)$ cannot be identically zero. Hence
\[
P_h^t(\cdot\mid s,a)\neq P_h^{t'}(\cdot\mid s,a).
\]

Conversely, suppose the map
\[
p\mapsto \sum_{s'\in\mathcal S}p(s')\phi(s',a')
\]
is injective over probability distributions on $\mathcal S$. If
\[
P_h^t(\cdot\mid s,a)\neq P_h^{t'}(\cdot\mid s,a),
\]
then injectivity implies
\[
\sum_{s'\in\mathcal S}
P_h^t(s'\mid s,a)\phi(s',a')
\neq
\sum_{s'\in\mathcal S}
P_h^{t'}(s'\mid s,a)\phi(s',a').
\]
Therefore,
\[
\mathbb E_{s'\sim P_h^t(\cdot\mid s,a)}[\phi(s',a')]
\neq
\mathbb E_{s'\sim P_h^{t'}(\cdot\mid s,a)}[\phi(s',a')],
\]
which proves the equivalence under the stated identifiability condition.
\end{proof}
Thus, DARLING tracks the shift in the transition kernel by monitoring $\phi(s_{h+1}^{t}, a')$ for all $a' \in \mathcal{A}$. Note that it is possible to track all $\phi(s_{h+1}^{t}, a')$ since $\mathcal{A}$ is finite. 

\begin{algorithm*}
\small
\caption{\small \textbf{D}etection \textbf{A}ugmented \textbf{R}einforcement \textbf{L}earn\textbf{ING} (for Linear MDPs)}
\label{alg:DARLING_lin_mdps}
\textbf{Input:} stationary algorithm $\mathcal{L}$, detector $\mathcal{D}$, calibration period $n_{0}$,
probing frequencies $\{\alpha_k\}_{k\ge 1}$. \\
\textbf{Initialization:}  calibration endpoint $\tau\leftarrow n_{0}$, calibration counter $\hat{n}_{s,a,h} \leftarrow 0$ for all $(s,a,h) \in \mathcal{S} \times \mathcal{A} \times [H]$, counter $k \leftarrow 1$, probe set $\mathcal{P}_{h} \leftarrow \emptyset$ for all $h \in [H]$, reward history and transition history $\mathcal{H}^{(r)}_{(s,a,h)}, \mathcal{H}^{(P)}_{(s,a,h,j,a')}\leftarrow\emptyset$ for all $(s,a,h,j,a')\in\mathcal{S} \times \mathcal{A} \times [H] \times [d]\times\mathcal{A}$.

\begin{algorithmic}[1]
\FOR{$t=1,2,\dots,T$}

        %\STATE Set starting state $s_1^t\leftarrow s_1$
        \FOR{$h=1,2,\dots,H$}
            \IF{$t \leq \tau$ (calibration)}
            \STATE Set $s \leftarrow s^{t}_{h}$, sample action $a \in \mathcal{A}$ uniformly at random, and $\hat{n}_{(s,a,h)} \leftarrow \hat{n}_{(s,a,h)} + 1$.
            \IF{$t = \tau$}
            \STATE Choose the $d$ most visited state-action pair $(s,a)$ to append into $\mathcal{P}_{h}$ \hfill{$\triangleright$ probe set construction}
            \ENDIF
            \ELSIF{ $(t-\tau - 1) \bmod \lceil 1/\alpha_{k} \rceil = 0$ \textbf{and} $(s_h^t, a) \in \mathcal{P}_{h}$ for some $a \in \mathcal{A}$}
            \STATE Set $s\leftarrow s^t_h$ and select an action $a$ such that $(s_h^t, a) \in \mathcal{P}_{h}$ uniformly at random \hfill{$\triangleright$ forced probing}
                    \STATE Receive reward $R_h^t(s,a)$ and append to history $\mathcal{H}_{(s,a,h)}^{(r)}$
                    \STATE Add $[\phi(s_{h+1}^t,a')]_{j}$ to history $\mathcal{H}_{(s,a,h,j,a')}^{(P)}$ for all $(j,a')\in [d]\times \mathcal{A}$
                    \STATE \texttt{Test 1} $\leftarrow\mathcal{D}(\mathcal{H}^{(r)}_{(s,a,h)})$, \texttt{Test 2} $\leftarrow\! \mathcal{D}(\mathcal{H}^{(P)}_{(s,a,h,j,a')})$  $\forall (j,a')\in [d]\!\times\!\mathcal{A}$ \hfill{$\triangleright$ change detection}
            \ENDIF
            \STATE\textbf{else if} $(t-\tau - 1) \bmod \lceil 1/\alpha_{k} \rceil = 0$ \textbf{then} Select action according to $\mathcal{L}$, but \underline{don't update} $\mathcal{L}$
            \STATE\textbf{else} Run and \underline{update} $\mathcal{L}$ \hfill{$\triangleright$ stationary learning}

        \ENDFOR
        \IF{\textbf{Test 1} or \textbf{Test 2} signals {\texttt{Restart}}}
        \STATE Reset the RL algorithm $\mathcal{L}$; empty all histories $\mathcal{H}$ used for detection \hfill{$\triangleright$ restart learning process}
        \STATE $\tau\leftarrow t + n_{0}, \quad k \leftarrow k + 1$, \textbf{Restart} $\leftarrow$ \texttt{False}
        \ENDIF
\ENDFOR
\end{algorithmic}
\end{algorithm*}

\section{Theoretical Proofs}

\subsection{Errors in the Regret Analysis of MASTER}
\label{sec:error_master}

To ensure readability of this section, we will use the notations in \cite{wei2021master} rather than the ones we introduced in Section \ref{sec:problem_form}.
To observe the error in the regret analysis of MASTER, we focus on Lemma 17 in \cite{wei2021master}. Recall that $t_{n}$ and $E_{n}$ are the stopping times at which a block of index $n$ starts and ends, respectively. More specifically, $E_{n}$ is either $t_{n} + 2^{n} - 1$ or the time at which either Test 1 or Test 2 in Algorithm 3 in \cite{wei2021master} gets triggered. The interval $\{ t_{n}, \dots, t_{n}+2^{n} - 1\}$ is then divided into multiple nearly stationary intervals over which the nonstationarity is upper bounded. More specifically, let
\begin{equation}
    t_{n} = s_{1} \leq e_{1} = s_{2} - 1 < s_{2} \leq e_{2} = s_{3} - 1 < \dots <  s_{K} \leq e_{K} = t_{n} + 2^{n} - 1
\end{equation}
such that for any $i \in [K]$,
\begin{equation}
    \Delta_{\{ s_{i}, \dots, e_{i}\}} \coloneqq \sum_{t=s_{i}}^{e_{i} - 1} \max_{\pi \in \Pi} \lvert f_{t}(\pi) - f_{t+1}(\pi) \rvert \leq \rho(e_{i} - s_{i} + 1)
\end{equation}
where $f_{t}(\pi) = V^{t,\pi}_{1}(s^{t}_{1})$ in the tabular and linear MDPs, and $\rho$ is a function satisfying the property in Assumption 1 in \cite{wei2021master}. In the PS setting, the intervals $\mathcal{I}_{i} \coloneqq \{s_{i}, \dots, e_{i}\}$ are the stationary intervals within which there are no changes. Let $e_{i}' \coloneqq \min \{e_{i}, E_{n}\}$ and  $\mathcal{I}_{i}' \coloneqq \{s_{i}, \dots, e_{i}'\}$ for $i \in [K]$. The time indices and intervals introduced above are illustrated in the following graph.
\begin{figure}[h]
\centering
\begin{tikzpicture}[scale=0.5]
    \draw (-15,1)-- (12,1); 
    \draw (-15,1.5)-- (-15,0.5); 
    \node at (-14.6,2) {$t_{n}$}; 
    \node at (-14.6,0) {$s_{1}$};
    
    \draw (-11.5,1.5)-- (-11.5,0.5); 
    \node at (-11.8,2) {$e_{1}'$}; 
    \node at (-11.8,0) {$e_{1}$};
    \draw [decorate,decoration={brace,amplitude=5pt,mirror,raise=4ex}](-15,2.2) -- (-11.5,2.2) node[midway,yshift=-3em]{$\mathcal{I}_{1}$};
    \draw [decorate,decoration={brace,amplitude=5pt,raise=4ex}](-15,-0.2) -- (-11.5,-0.2) node[midway,yshift=3em]{$\mathcal{I}_{1}'$};
    \node at (-11.1,0) {$s_{2}$};
    \node at (-11.1,1.9) {$\nu_{k}$};
    
    \draw (-7.5,1.5)-- (-7.5,0.5); 
    \node at (-7.8,2) {$e_{2}'$}; 
    \node at (-7.8,0) {$e_{2}$};
    \draw [decorate,decoration={brace,amplitude=5pt,mirror,raise=4ex}](-11.5,2.2) -- (-7.5,2.2) node[midway,yshift=-3em]{$\mathcal{I}_{2}$};
    \draw [decorate,decoration={brace,amplitude=5pt,raise=4ex}](-11.5,-0.2) -- (-7.5,-0.2) node[midway,yshift=3em]{$\mathcal{I}_{2}'$};
    \node at (-7.1,0) {$s_{3}$};
    \node at (-6.7,1.9) {$\nu_{k+1}$};

    \node at (-3.5,0) {$\cdots$};
    \node at (-3.5,1.9) {$\cdots$};

    \draw (0,1.5)-- (0,0.5); 
    \draw (4,1.5)-- (4,0.5);
    \draw (5,0.5)-- (5,1);  
    \node at (0.3,1.9) {$\nu_{\ell}$}; 
    \node at (0.3,0) {$s_{i}$};
    \node at (3.6,2) {$e_{i}'$}; 
    \node at (3.6,0) {$E_{n}$};
    \node at (4.6,0) {$e_{i}$};
    \draw [decorate,decoration={brace,amplitude=5pt,mirror,raise=4ex}](0,2.2) -- (5,2.2) node[midway,yshift=-3em]{$\mathcal{I}_{i}$};
    \draw [decorate,decoration={brace,amplitude=5pt,raise=4ex}](0,-0.2) -- (4,-0.2) node[midway,yshift=3em]{$\mathcal{I}_{i}'$};
    \node at (-7.1,0) {$s_{3}$};
    \node at (-6.7,1.9) {$\nu_{k+1}$};

    \node at (8,0) {$\cdots$};
   \node[align=center] at (6,2) {$\mathcal{I}_{j}' = \emptyset$\\for $j > i$};
    
    \draw (12,1.5)-- (12,0.5); 
    \node at (11,1.9) {$t_{n} + 2^{n} - 1$};
    \node at (11.6,0.1) {$e_{K}$};

    \end{tikzpicture}
    \caption{Illustration of a block of index $n$ and its stationary intervals}
    \label{Ineq}
\end{figure}

Let $f^{*}_{t} \coloneqq \max_{\pi \in \Pi} f_{t}(\pi)$ be the optimal value function and $\tilde{g}_{t}$ be the output of the stationary algorithm instance $\texttt{alg}$ that is currently active. Also, let $\hat{\rho}(t) = 6(\log_{2} T + 1)\log(T/\delta)\rho(t)$ (see Lemma 3 in \cite{wei2021master}). Define the following stopping time for each $m \in [n]\cup\{0\}$ and $i \in [K]$:
\begin{equation}
    \tau_{i}(m) \coloneqq \inf \left\{ t \in \mathcal{I}_{i}': f^{*}_{t} - \tilde{g}_{t} \geq 12\hat{\rho}(2^{m})\right\}.
\end{equation}

Now, fix an arbitrary $m\in [n]\cup\{0\}$ in Lemma 17 in \cite{wei2021master}. We refer to an order-$m$ instance as a stationary algorithm of length $2^{m}$ scheduled by MALG (Algorithm 2 in \cite{wei2021master}). Let \texttt{alg.s} and \texttt{alg.e} denote the start and the end of a stationary algorithm instance \texttt{alg}, respectively. We now recall the definition of the following events:
\begin{align}
W_{t} &:= \left\{\,\tau_{i}(m) \leq t \leq e_{i} - 2\cdot 2^{m} \text{ with } i \text{ s.t. } t\in \mathcal{I}_{i}\,\right\},\\
X_{t} &:= \{\,t\le E_{n}-2\cdot 2^m \,\},\\
Y_{t} &:= \{\,t\le E_{n} \text{\:and\:} (t-t_{n}) \bmod 2^m=0 \,\},\\
Z_{t} &:= \{\,\exists\ \text{order-$m$} \texttt{\:alg\:} \text{s.t. } \texttt{alg.s}=t\,\},\\
V_{t} &:= \Bigl\{\,\exists s\in\{t_{n},\ldots,t\}\ \text{s.t. } \mathds{1}\{W_{s,m}\cap Y_{s,m}\cap Z_{s,m}\}=1\,\Bigr\}.
\end{align}
We would like to emphasize that at the start of the block $t_{n}$, MALG generates a set of $2^{n-m}$ i.i.d. Bernoulli random variables $\{B_{j}: j \in [2^{n-m}] \}$ with parameter (success probability) $\rho(2^{n})/\rho(2^{m})$, and schedules an order-$m$ stationary algorithm instance starting at $t_{n} + (j-1)2^{m}$ if $B_{j} = 1$. 

Now, let us focus on $\textbf{term}_{3}$ on Page 24. The authors said that the event $Z_{t}$ occurs with probability $\rho(2^{n})/\rho(2^{m})$ conditioned on $Y_{t} \cap W_{t}$. Unfortunately, this is not correct since $Y_{t} \subseteq \{t \leq E_{n}\}$. Conditioned on $Y_{t}$ changes the probability of event $Z_{t}$, as $E_{n}$ depends on $\{Z_{t}:t = t_{n} + (j-1)2^{m} \text{\:for\:some\:} j \in [2^{n-m}]\}$. In fact, as long as we condition on any event involving $E_{n}$, the events $\{Z_{t}:t = t_{n} + (j-1)2^{m} \text{\:for\:some\:} j \in [2^{n-m}]\}$ are not independent anymore as well, but the authors treat them as independent events when they are counting the number of trials to the first success. Now, suppose that we can remove $t \leq E_{n}$ from the definition of $Y_{t}$ with some different derivation. Let $\tilde{t}$ be the start of the block which $t$ is at, i.e.,
\begin{equation}
    \tilde{t} = \sup\{\tau \leq t: \tau \text{ is a starting point of a block}\}.
\end{equation}
Note that when $t > E_{n}$, $\tilde{t}$ is not $t_{n}$ anymore. We then expand the event $Z_{t}$
\begin{align}
    Z_{t} &= \{\,\exists\ \text{order-$m$} \texttt{\:alg\:} \text{s.t. } \texttt{alg.s}=t\,\} \nonumber\\
    &= \{ (t - \tilde{t}) \bmod 2^{m} = 0 \} \cap \{B_{j} = 1 \text{ where $j$ corresponds to $t$}\}.
\end{align}
Then, we observe that even when conditioned on $\{(t-t_{n}) \bmod 2^m=0\}$, the probability of $Z_{t}$ occuring is not equal to $\rho(2^{n})/\rho(2^{m})$, as $\{ (t - \tilde{t}) \bmod 2^{m} = 0 \}$ might not occur when $t > E_{n}$. We therefore doubt that there is an easy way to fix the regret analysis of MASTER, and we believe that this error could render the regret upper bound of MASTER invalid unless we can prove it with a completely different approach.

\subsection{Proofs of Theorems}

\subsection{Proof of Theorem \ref{thrm:ps-mdp-lower}}
\label{app:ps-mdp-lower-proof}
\begin{proof}
Assume there are $N_T$ changes and hence $N_T+1$ stationary segments of equal length. Consider the family of
$2^{N_T+1}$ PS tabular episodic MDPs $\{\mathcal{M}_{\mathbf{i}}\}_{\mathbf{i}\in\{0,1\}^{N_T+1}}$
indexed by $\mathbf{i}=(i_1,\ldots,i_{N_T+1})\in\{0,1\}^{N_T+1}$. Each $\mathcal{M}_{\mathbf{i}}$ has
state space $\mathcal{S}$ with $|\mathcal{S}|=S$, action space $\mathcal{A}$ with $|\mathcal{A}|=A$, horizon $H$, and $T$ episodes. Without loss of generality, we set $\mathcal{A} = [A]$. 
Fix an arbitrary policy $\pi$, and let $\mathbb{P}_{\mathbf{i},\pi}$ and
$\mathbb{E}_{\mathbf{i},\pi}$ denote the probability measure and expectation induced by executing $\pi$ in
$\mathcal{M}_{\mathbf{i}}$.

Recall that $\nu_{k}$ denote the $k^{\mathrm{th}}$ change-point and that $\nu_{0} = 1$ and $\nu_{N_{T} + 1} = T+1$. The change-points are evenly separated over the $T$ episodes: for the $k^{\mathrm{th}}$ stationary segment, its interval length 
$\nu_{k} - \nu_{k-1} = \lceil T/(N_T+1)\rceil$ if $k\le T\bmod (N_T+1)$ and $\lfloor T/(N_T+1)\rfloor$ otherwise.
For each $k\in[N_T+1]$, $h\in[H]$, $a\in\mathcal{A}$, and $s\in\mathcal{S}$, let $n_k(s,a,h)$ be the number of
visits to $(s,a)$ at step $h$ during episodes $t\in\{\nu_{k-1},\ldots,\nu_k-1\}$, i.e.,
\begin{align}
n_k(s,a,h):=\sum_{t=\nu_{k-1}}^{\nu_k-1}\mathbf{1}\{s_h^t=s,\ a_h^t=a\}.
\end{align}

We construct a hard instance following the structure of \cite{domingues2021lowerbound}. Assume $S\ge 6$ and $A\ge 2$,
and that there exists an integer $D$ such that
\begin{align}
S-3=\sum_{j=0}^{D-1}A^j=\frac{A^{D}-1}{A-1}.
\end{align}
Additionally, we assume $H\ge 3D$.\footnote{When $A=1$, the construction reduces to a contextual bandit instance rather than an episodic
MDP.} The state space contains a waiting state $s_{\mathrm{w}}$, a root state $s_{\mathrm{root}}$, an $A$-ary tree of depth $D-1$
with leaves $\{\mathrm{leaf}_\ell\}_{\ell=1}^{L}$ where $L=A^{D-1}$, a good absorbing state $s_{\mathrm{g}}$ and a bad absorbing state $s_{\mathrm{b}}$. The states are illustrated in Figure \ref{fig:tree}.

\begin{figure}[hbt]
    \centering
    \includegraphics[width=0.8\linewidth]{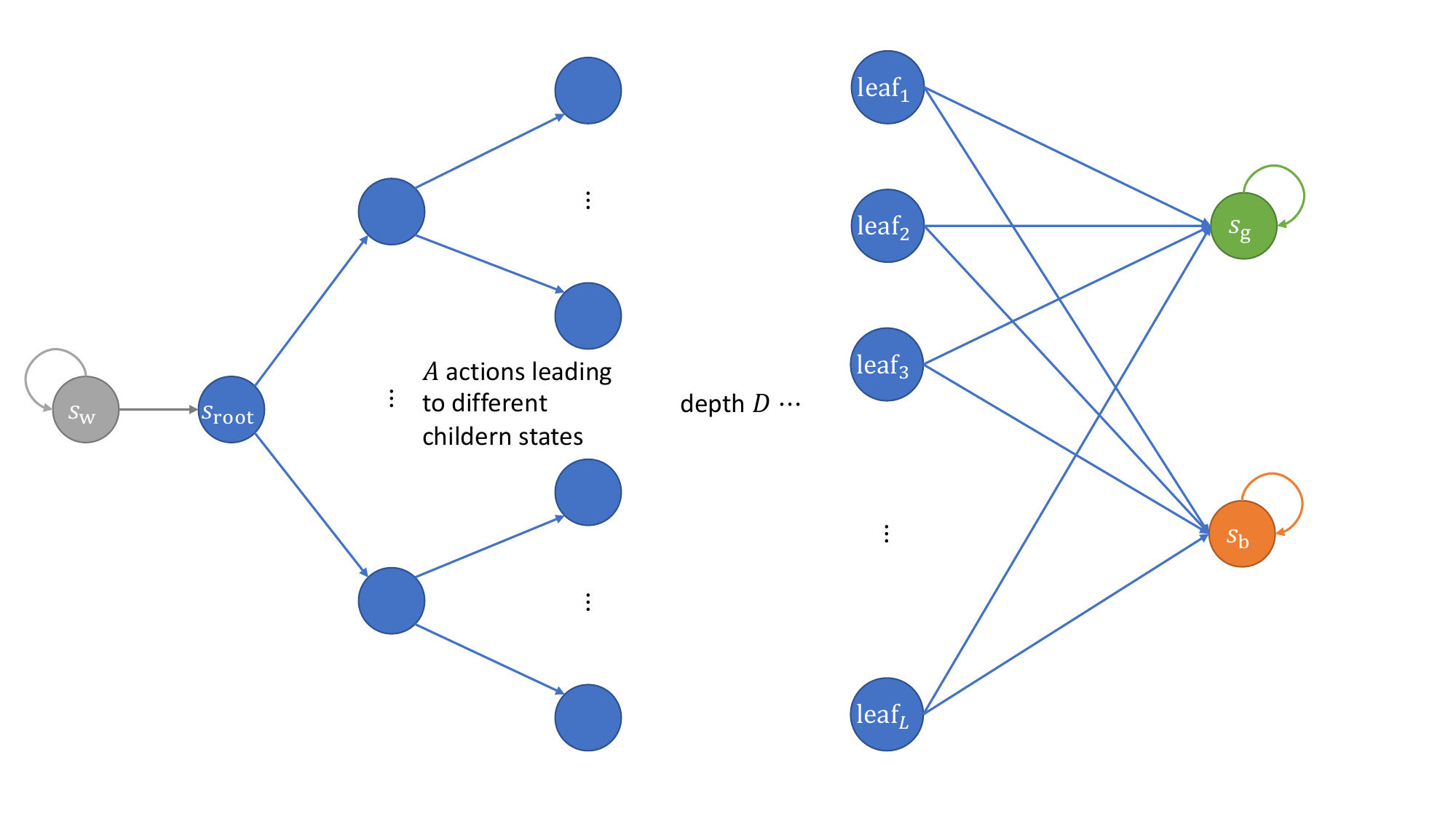}
    \caption{States of the MDP $\mathcal{M}_{\mathbf{i}}$'s}
    \label{fig:tree}
\end{figure}

Let $r_{h,\mathbf{i}}^{k}$ and $P_{h,\mathbf{i}}^{k}$ denote the reward function and the transition kernel of MDP $\mathcal{M}_{\mathbf{i}}$ at step $h$ over the $k^{\mathrm{th}}$ stationary segment, i.e., $r_{h}^{t} = r_{h,\mathbf{i}}^{k}$ and $P_{h}^{t} = P_{h,\mathbf{i}}^{k}$ for $\mathcal{M}_{\mathbf{i}}$ at $t \in \{ \nu_{k-1}, \dots, \nu_{k} - 1\}$. We set the rewards of the MDP $\mathcal{M}_{\mathbf{i}}$'s to be deterministic and binary. To be specific, The reward function is defined as follows: for all $k\in[N_{T} + 1]$, $h\in[H]$, $\mathbf{i} \in \{ 0, 1 \}^{N_{T} + 1}$, $s \in \mathcal{S}$, and $a \in \mathcal{A}$,
\begin{align}
r_{h, \mathbf{i}}^{k} (s,a)=
\begin{cases}
1, & s=s_{\mathrm{g}},\ \ h\ge \bar H+D+1,\\
0, & \text{otherwise},
\end{cases}
\end{align}
for some $\bar H\in [H]$ whose value is determined later in the proof. In other words, if the agent ends up at the good absorbing state $s_{\mathrm{g}}$  Notice that the reward function is invariant across all change-points and all MDPs. 

The transition kernels are defined as follows: Consider an arbitrary MDP $\mathcal{M}_{\mathbf{i}}$. The agent starts at the waiting state $s_{\mathrm{w}}$, i.e., $s_{0}^{t} = s_{\mathrm{w}}$. At $s_{\mathrm{w}}$, for $h<\bar H$, the agent
moves to $s_{\mathrm{root}}$ deterministically when the leaving action $a_{\mathrm{leave}}$ is chosen. When other action is chosen, the agent remains at $s_{\mathrm{w}}$ deterministically, i.e.,
\begin{align}
P_{h, \mathbf{i}}^{k} (s | s_{\mathrm{w}}, a) = 
\begin{dcases}
    1,& (s, a) = (s_{\mathrm{root}}, a_{\mathrm{leave}}),\\
    1,& s = s_{\mathrm{w}} \:\mathrm{and}\: a \neq a_{\mathrm{leave}},\\
    0,& \mathrm{otherwise}.
\end{dcases}
\end{align}
Without loss of generality, we set $a_{\mathrm{leave}}$ = 1.
At the
$\bar{H}$ step, the next state is $s_{\mathrm{root}}$ deterministically regardless of the chosen action, i.e.,
\begin{align}
    P_{H, \mathbf{i}}^{k} (s | s_{\mathrm{w}}, a) = 
\begin{dcases}
    1,& s = s_{\mathrm{root}},\\
    0,& \mathrm{otherwise}.
\end{dcases}
\end{align}
At any internal
tree node, choosing action $a$ deterministically moves to the $a$-th child. 
Now, consider the leaf nodes at the $k^{\mathrm{th}}$ stationary segment of MDP $\mathcal{M}_{\mathbf{i}}$ with $i_k=0$. Let $\varepsilon_k>0$ be a bias parameter that we tune later in the proof. 
Then, if the agent chooses the good action $a_{\mathrm{g}}$ at leaf node $\mathrm{leaf}_{1}$ at step $1+D$, it goes to the good absorbing state $s_{\mathrm{g}}$ with probability $\frac{1}{2} + \varepsilon_{k}$, and goes to the bad absorbing state $s_{\mathrm{b}}$ with probability $\frac{1}{2} - \varepsilon_{k}$, i.e.,
\begin{align}\label{eq:good_leaf}
P^{k}_{1+D, \mathbf{i}}(s\mid \mathrm{leaf}_{1}, a_{\mathrm{g}})=
\begin{cases}
\frac12+\varepsilon_k, & s=s_{\mathrm{g}},\\
\frac12-\varepsilon_k, & s=s_{\mathrm{b}},\\
0, & \mathrm{otherwise}.
\end{cases}
\end{align}
If the agent chooses other actions at leaf node $\mathrm{leaf}_{1}$ at step $1+D$, then the agent goes to the two absorbing states with equal probability.
If the agent is at other leaves or at other step, the agent goes to the two absorbing states with equal probability regardless of the chosen action, i.e., for all $(h, \ell, a) \neq (1+D, 1, a_{\mathrm{g}})$
\begin{align}\label{eq:other_leaf}
P^{k}_{h,\mathbf{i}}(s\mid \mathrm{leaf}_\ell,a)=
\begin{cases}
\frac12, & s\in\{s_{\mathrm{g}},s_{\mathrm{b}}\},\\
0, & \text{otherwise}.
\end{cases}
\end{align}
Without loss of generality, we set $a_{\mathrm{g}} = 1$. 
In this case, the optimal policy is the one that leads the agent to $\mathrm{leaf}_{1}$ at step $1+D$, and then selects the good action $a_{\mathrm{g}}$ to reach the good absorbing state with higher probability.
Now, consider the leaf nodes at the $k^{\mathrm{th}}$ stationary segment of MDP $\mathcal{M}_{\mathbf{i}}$ with $i_k=1$.
Let $\mathbf{j}$ denote the $(N_{T}+1)$-dimensional binary vector obtained by flipping the $k$-th bit of $\mathbf{i}$, i.e., $j_k=0$ and $j_l=i_l$ for $l\neq k$.
Recall that $\mathbb{E}_{\mathbf{i},\pi}$ denote the expectation induced by executing $\pi$ in
$\mathcal{M}_{\mathbf{i}}$.
Define
\begin{align}\label{eq:least_visit}
(\tilde h_{\mathbf{i}},\tilde \ell_{\mathbf{i}},\tilde a_{\mathbf{i}})
=
\arg\min_{(h,\ell,a)\neq(1+D,1,a_{\mathrm{g}})}
\mathbb{E}_{\tilde{\mathbf{i}},\pi}\!\left[n_k(\mathrm{leaf}_\ell,a,h)\right].
\end{align}
In other words, the expected number of times the policy $\pi$ chooses action $\tilde a_{\mathbf{i}}$ at leaf node $\mathrm{leaf}_{\tilde \ell_{\mathbf{i}}}$ at step $\tilde{h}_{\mathbf{i}}$ is the least compared to those when choosing action $a$ at leaf node $\mathrm{leaf}_{\ell}$ at step $h$ such that $(h,\ell,a)\neq(1+D,1,a_{\mathrm{g}})$.  Then, when the agent selects $\tilde a_{\mathbf{i}}$ at leaf node $\mathrm{leaf}_{\tilde \ell_{\mathbf{i}}}$ at step $\tilde{h}_{\mathbf{i}}$, it goes to the good absorbing state with probability $\frac{1}{2} + 2\varepsilon_{k}$, and goes to the bad absorbing state with probability $\frac{1}{2} - 2\varepsilon_{k}$, i.e.,
\begin{align}
P^{k}_{\tilde h_{\mathbf{i}},\mathbf{i}}\left(s\Big| \mathrm{leaf}_{\tilde \ell_{\mathbf{i}}},\tilde a_{\mathbf{i}}\right)
=
\begin{cases}
\frac12+2\varepsilon_k, & s=s_{\mathrm{g}},\\
\frac12-2\varepsilon_k, & s=s_{\mathrm{b}},\\
0, & \mathrm{otherwise},
\end{cases}
\end{align}
The rest of the value of the transition kernel follows the same distributions in \eqref{eq:good_leaf} and \eqref{eq:other_leaf}, i.e.,
\begin{align}
P^{k}_{1+D, \mathbf{i}}(s\mid \mathrm{leaf}_{1}, a_{\mathrm{g}})=
\begin{cases}
\frac12+\varepsilon_k, & s=s_{\mathrm{g}},\\
\frac12-\varepsilon_k, & s=s_{\mathrm{b}},\\
0, & \mathrm{otherwise}.
\end{cases}
\end{align}
and for all other triples $(h,\ell,a)\notin\{(1+d,\mathrm{leaf}^{*},1),(\tilde h_{\mathbf{i}},\tilde \ell_{\mathbf{i}},\tilde a_{\mathbf{i}})\}$,
\begin{align}
P^{\mathbf{i}}_{k,h}(s\mid \mathrm{leaf}_\ell,a)=
\begin{cases}
\frac12, & s\in\{s_{\mathrm{g}},s_{\mathrm{b}}\},\\
0, & \mathrm{otherwise}.
\end{cases}
\end{align}
Hence, in this case, the optimal policy is the one that leads the agent to $\mathrm{leaf}_{\tilde \ell_{\mathbf{i}}}$ at step $\tilde{h}_{\mathbf{i}}$, and then selects the good action $\tilde a_{\mathbf{i}}$ to reach the good absorbing state with higher probability. At the absorbing states $s_{\mathrm{g}}$ and $s_{\mathrm{b}}$, the process stays in the same state deterministically regardless of the action. The transition kernel is illustrated in Figures \ref{fig:MDP_trans_0} and \ref{fig:MDP_trans_1}.

\begin{figure}[hbt]
    \centering
    \includegraphics[width=0.8\linewidth]{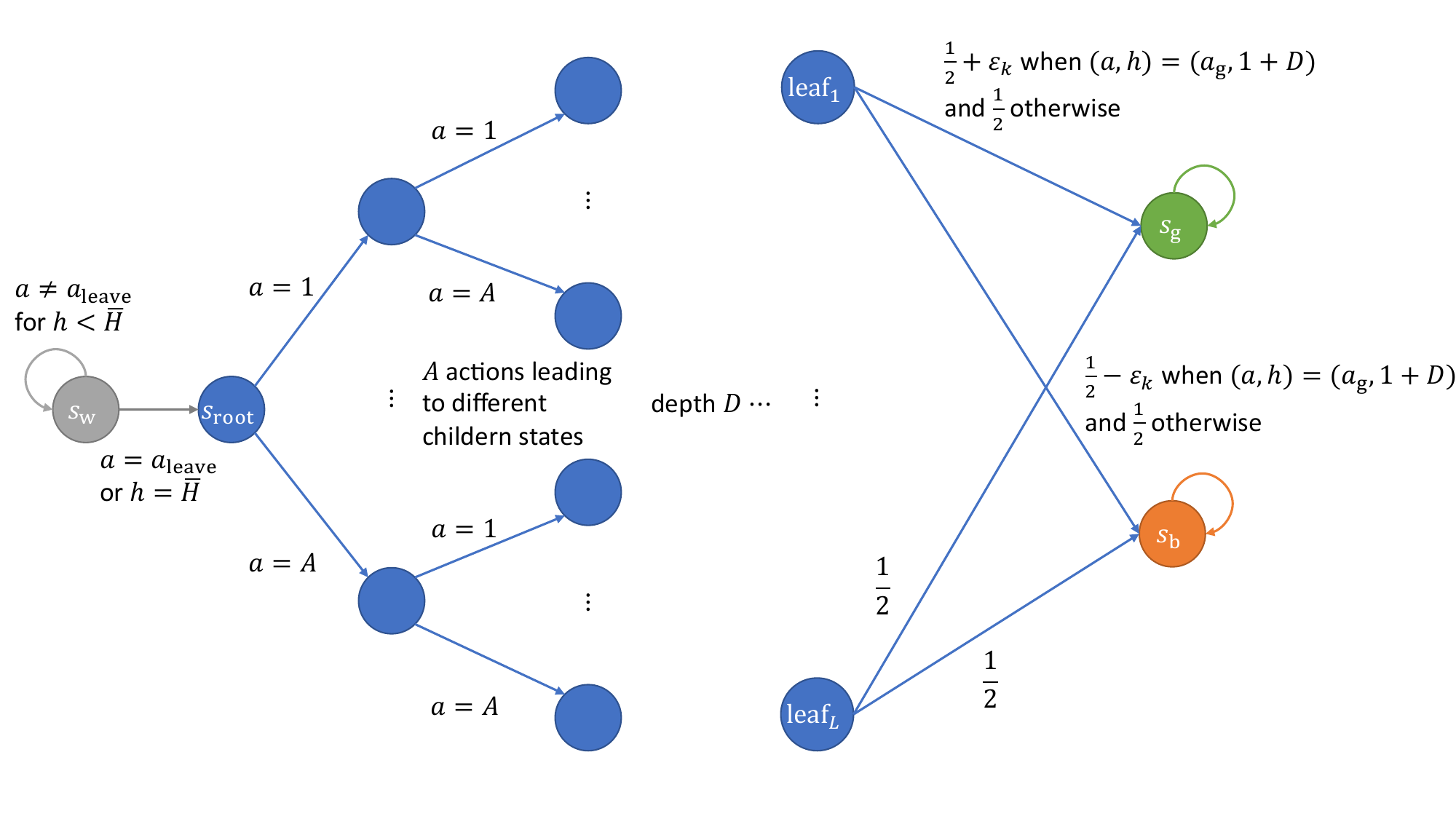}
    \caption{The transition kernel of MDP $\mathcal{M}_{\mathbf{i}}$ with $i_{k} = 0$}
    \label{fig:MDP_trans_0}
\end{figure}

\begin{figure}[hbt]
    \centering
    \includegraphics[width=0.8\linewidth]{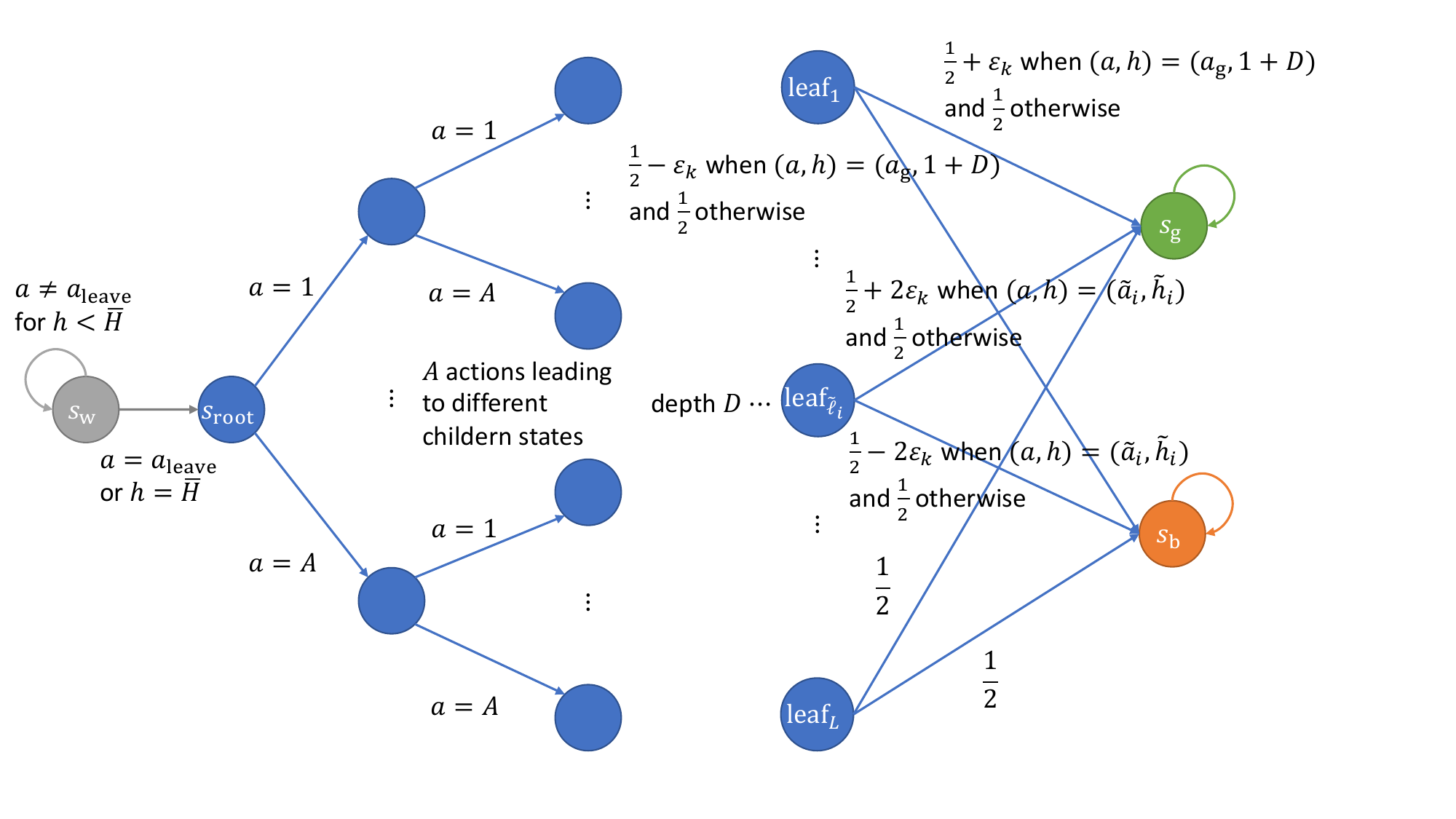}
    \caption{The transition kernel of MDP $\mathcal{M}_{\mathbf{i}}$ with $i_{k} = 1$}
    \label{fig:MDP_trans_1}
\end{figure}

We start constructing the transition kernel of MDP $\mathcal{M}_{\mathbf{i}}$ with $\mathbf{i}$ being the all-zero vector. 
Next, we proceed to assign the transition probability of MDPs with one-hot index vectors.
Then, we proceed to MDPs with one more $1$ bit in their index vectors. 
We continue this process until all transition probabilities are assigned in the MDP with all-one index vector. This process is illustrated in Figure \ref{fig:MDP_constr} for the case where $N_{T} = 2$.

\begin{figure}
    \centering
    \includegraphics[width=\linewidth]{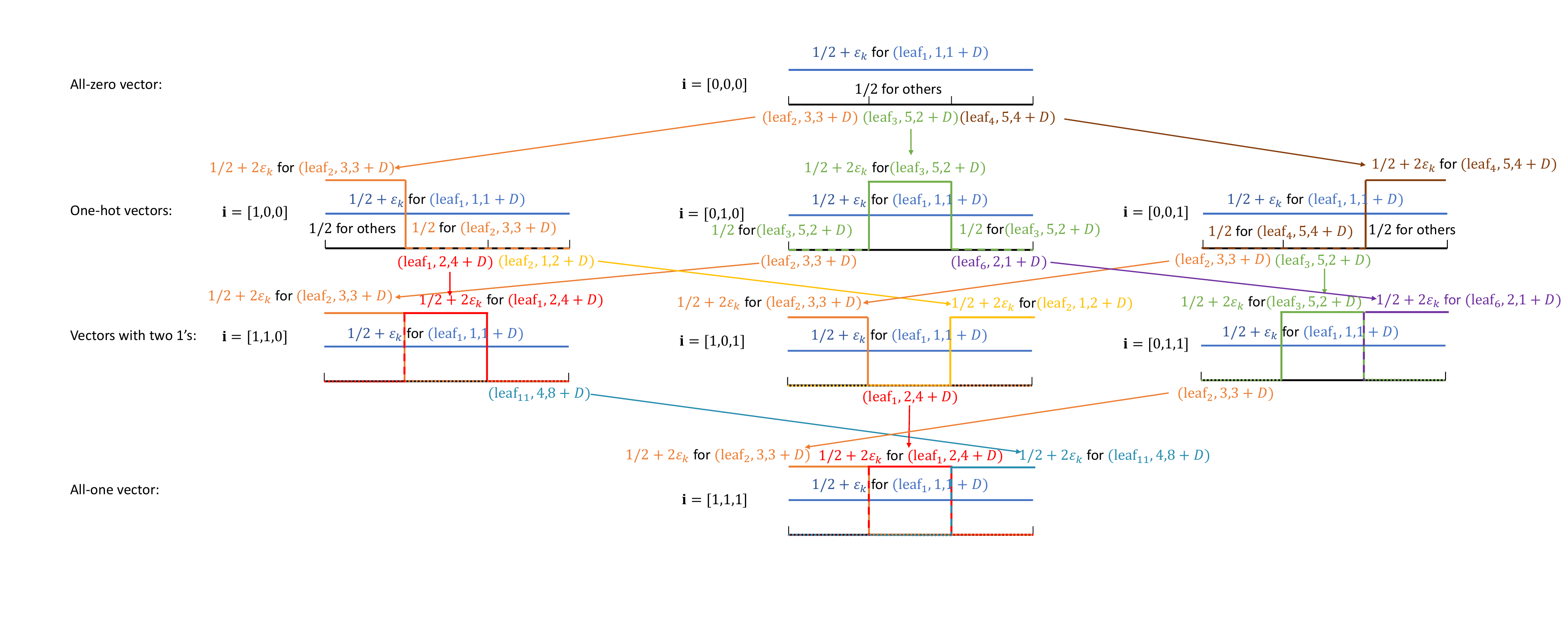}
    \caption{The process of transition probability assignment for the MDPs with $N_{T} = 2$. The lines represent the value of the transition probability $P_{h,\mathbf{i}}^{k}(s_{\mathrm{g}}|\mathrm{leaf}_{\ell}, a)$, and the intervals are the stationary segments. The colored triple underneath each interval denotes the triple $(\mathrm{leaf}_{\tilde{\ell}_{\mathbf{i}}}, \tilde{a}_{\mathbf{i}}, \tilde{h}_{\mathbf{i}})$ in \eqref{eq:least_visit}.}
    \label{fig:MDP_constr}
\end{figure}

Let $\mathcal{R}_{\mathbf{i},k}(\pi)$ be the dynamic regret of $\pi$ over the $k^{\mathrm{th}}$ stationary segment on instance $\mathcal{M}_{\mathbf{i}}$,
i.e.,
\begin{align*}
\mathcal{R}_{\mathbf{i},k}(\pi):=\sum_{t=\nu_{k-1}}^{\nu_k-1}\Bigl(V_1^{t,\star}(s_1^t)-V_1^{t,\pi}(s_1^t)\Bigr).
\end{align*}

Fix $\mathbf{i}\in\{0,1\}^{N_T+1}$ and $k\in[N_T+1]$, and let $\mathbf{j}$ be the index obtained by flipping the $k^{\mathrm{th}}$ bit, i.e., $j_k\neq i_k$ and $j_l=i_l$ for all $l\neq k$; the map $\mathbf{i}\mapsto\mathbf{j}$ is
bijective. Without loss of generality, we assume that $i_{k} = 0$ and $j_{k} = 1$. Then
\begin{align}
\mathcal{R}_{\mathbf{i},k}(\pi)
&= \sum_{\substack{(h,\ell,a)\neq(1+D,1,a_{\mathrm{g}})}}
\varepsilon_k (H-\bar H-D)\,
\mathbb{E}_{\mathbf{i},\pi}\!\left[n_k(\mathrm{leaf}_\ell,a,h)\right] \nonumber\\
&=
\varepsilon_k (H-\bar H-D)\, \left( \nu_{k} - \nu_{k-1} - 
\mathbb{E}_{\mathbf{i},\pi}\!\left[n_k(\mathrm{leaf}_{1},a_{\mathrm{g}},1+D)\right] \right)\nonumber\\
&\geq
\varepsilon_k(H-\bar H-D)\nonumber\\
&\quad\cdot
\left( \nu_{k} - \nu_{k-1} - 
\mathbb{E}_{\mathbf{i},\pi}\!\left[n_k(\mathrm{leaf}_{1},a_{\mathrm{g}},1+D) \bigg| n_k(\mathrm{leaf}_{1},a_{\mathrm{g}},1+D)\le \frac{\nu_k-\nu_{k-1}}{2} \right] \right) \nonumber\\
&\quad\cdot
\mathbb{P}_{\mathbf{i},\pi}\!\left(
n_k(\mathrm{leaf}_{1},a_{\mathrm{g}},1+D)\le \frac{\nu_k-\nu_{k-1}}{2}
\right) \nonumber\\
&\geq
\varepsilon_k(H-\bar H-D)\,
\frac{\nu_k-\nu_{k-1}}{2}\,
\mathbb{P}_{\mathbf{i},\pi}\!\left(
n_k(\mathrm{leaf}_{1},a_{\mathrm{g}},1+D)\le \frac{\nu_k-\nu_{k-1}}{2}
\right).
\end{align}
Similarly,
\begin{align}
\mathcal{R}_{\mathbf{j},k}(\pi)
&= \varepsilon_k(H-\bar H-D)\,
\mathbb{E}_{\mathbf{j},\pi}\!\left[n_k(\mathrm{leaf}_1,a_{\mathrm{g}},1+D)\right] \nonumber\\
&\quad+ \sum_{\substack{(h,\ell,a)\notin \{(\tilde h_{\mathbf{i}},\tilde \ell_{\mathbf{i}},\tilde a_{\mathbf{i}}), (1+D, 1, a_{\mathrm{g}})\}}}
2\varepsilon_k(H-\bar H-D)\,
\mathbb{E}_{\mathbf{j},\pi}\!\left[n_k(\mathrm{leaf}_\ell,a,h)\right] \nonumber\\
&\geq \varepsilon_k(H-\bar H-D)\,
\mathbb{E}_{\mathbf{j},\pi}\!\left[n_k(\mathrm{leaf}_1,a_{\mathrm{g}},1+D)\right] \nonumber\\
&\geq
\varepsilon_k(H-\bar H-D)\,
\mathbb{E}_{\mathbf{j},\pi}\!\left[n_k(\mathrm{leaf}_1,a_{\mathrm{g}},1+D)\bigg|n_k(\mathrm{leaf}_{1},a_{\mathrm{g}},1+D)
> \frac{\nu_k-\nu_{k-1}}{2}\right]\nonumber\\
&\quad\cdot\mathbb{P}_{\mathbf{j},\pi}\!\left(
n_k(\mathrm{leaf}_{1},a_{\mathrm{g}},1+D)
> \frac{\nu_k-\nu_{k-1}}{2}
\right)\nonumber\\
&\geq \varepsilon_{k}(H-\bar H-D)\frac{\nu_k-\nu_{k-1}}{2}\,
\mathbb{P}_{\mathbf{j},\pi}\!\left(
n_k(\mathrm{leaf}_{1},a_{\mathrm{g}},1+D)
> \frac{\nu_k-\nu_{k-1}}{2}
\right).
\end{align}
Therefore,
\begin{align}\label{eq:reg_low}
&\mathcal{R}_{\mathbf{i},k}(\pi)+\mathcal{R}_{\mathbf{j},k}(\pi) \nonumber
\\
&
\geq\varepsilon_k(H\!-\!\bar{H}\!-\!D)\frac{\nu_k\!-\!\nu_{k-1}}{2}\nonumber\\
&\quad\cdot\left[
\mathbb{P}_{\mathbf{i},\pi}\!\left(\!n_k(\mathrm{leaf}_{1},a_{\mathrm{g}},1\!+\!D)\!\le\! \frac{\nu_k\!-\!\nu_{k-1}}{2}\!\right)\!+
\mathbb{P}_{\mathbf{j},\pi}\!\left(\!
n_k(\mathrm{leaf}_{1},a_{\mathrm{g}},1\!+\!D)
\!\ge\! \frac{\nu_k\!-\!\nu_{k-1}}{2}\!
\right)
\right].
\end{align}
Let $\tilde{\mathbf{i}}\coloneqq(i_{1}, \dots, i_{k}, 0, \dots, 0)$ be the index vector obtained by making the bits of $\mathbf{i}$ after the $k^{\mathrm{th}}$ bit become $0$. Similarly, let $\tilde{\mathbf{j}}\coloneqq(j_{1}, \dots, j_{k}, 0, \dots, 0)$ be the index vector obtained by making the bits of $\mathbf{j}$ after the $k^{\mathrm{th}}$ bit become $0$. Then, due to the fact that $\mathbb{P}^{k}_{h, \mathbf{i}}$ and $\mathbb{P}^{k}_{h, \tilde{\mathbf{i}}}$ are identical up to the $k^{\mathrm{th}}$ interval, and that $\mathbb{P}^{k}_{h, \mathbf{j}}$ and $\mathbb{P}^{k}_{h, \tilde{\mathbf{j}}}$ are identical up to the $k^{\mathrm{th}}$ interval, we can perform change of measure and obtain
\begin{align}\label{eq:change_measure}
&\mathcal{R}_{\mathbf{i},k}(\pi)+\mathcal{R}_{\mathbf{j},k}(\pi) \nonumber
\\
&
\geq\varepsilon_k(H\!-\!\bar{H}\!-\!D)\frac{\nu_k\!-\!\nu_{k-1}}{2}\nonumber\\
&\quad\cdot
\left[
\mathbb{P}_{\tilde{\mathbf{i}},\pi}\!\left(\!n_k(\mathrm{leaf}_{1},a_{\mathrm{g}},1\!+\!D)\!\le\! \frac{\nu_k\!-\!\nu_{k-1}}{2}\!\right)\!+
\mathbb{P}_{\tilde{\mathbf{j}},\pi}\!\left(\!
n_k(\mathrm{leaf}_{1},a_{\mathrm{g}},1\!+\!D)
\!\ge\! \frac{\nu_k\!-\!\nu_{k-1}}{2}\!
\right)
\right].
\end{align}
By the Bretagnolle--Huber inequality, for any event $A$ and measures $\mathbb{P}$ and $\mathbb{Q}$,
\begin{align}
\mathbb{P}(A)+\mathbb{Q}(A^c)\ge \frac12 \exp\!\left(-D_{\mathrm{KL}}(\mathbb{P}\|\mathbb{Q})\right).
\end{align}
where $D_{\mathrm{KL}} (\mathbb{P}\|\mathbb{Q})$ denotes the KL divergence between $\mathbb{P}$ and $\mathbb{Q}$.
Thus, applying it to the bracketed term in the right-hand side of \eqref{eq:change_measure} above yields
\begin{align}
\mathcal{R}_{\mathbf{i},k}(\pi)+\mathcal{R}_{\mathbf{j},k}(\pi)
\ge
\varepsilon_k(H-\bar H-D)\,\frac{\nu_k-\nu_{k-1}}{2}\,
\exp\!\left(-D_{\mathrm{KL}} \left(\mathbb{P}_{\tilde{\mathbf{i}},\pi}\|\mathbb{P}_{\tilde{\mathbf{j}},\pi}\right)\right).
\end{align}

Since rewards are deterministic, all randomness comes from states and actions. Let the trajectory up to episode $t$
and step $h$ be
\begin{align}
\zeta_h^t \coloneqq (s_1^1,a_1^1,\ldots,a_{H}^1,s_{H+1}^1,\; s_{1}^2,a_1^2,\ldots,a_{H}^2,s_{H+1}^2,\ldots, a_{h-1}^t,s_h^t).
\end{align}
Then,
\begin{align}
D_{\mathrm{KL}}\left(\mathbb{P}_{\tilde{\mathbf{i}},\pi}\|\mathbb{P}_{\tilde{\mathbf{j}},\pi}\right)
&=
\mathbb{E}_{\tilde{\mathbf{i}},\pi}\!\left[
\log\frac{
\prod_{l=1}^{N_T+1}\prod_{t=\nu_{l-1}}^{\nu_l-1}\prod_{h=1}^{H}
\pi(a_h^t\mid \zeta_h^t)\,P^{l}_{h,\tilde{\mathbf{i}}}(s_{h+1}^t\mid s_h^t,a_h^t)
}{
\prod_{l=1}^{N_T+1}\prod_{t=\nu_{l-1}}^{\nu_l-1}\prod_{h=1}^{H}
\pi(a_h^t\mid \zeta_h^t)\,P^{l}_{h,\tilde{\mathbf{j}}}(s_{h+1}^t\mid s_h^t,a_h^t)
}
\right]\nonumber
\\
&\overset{(a)}{=}
\mathbb{E}_{\tilde{\mathbf{i}},\pi}\!\left[
\sum_{t=\nu_{k-1}}^{\nu_k-1}
\log\frac{
P^{k}_{\tilde{h}_{\mathbf{i}},\tilde{\mathbf{i}}}\left(s_{\tilde h_{\mathbf{i}}+1}^t\Big| s_{\tilde h_{\mathbf{i}}}^t,a_{\tilde h_{\mathbf{i}}}^t\right)
}{
P^{k}_{\tilde{h}_{\mathbf{i}},\tilde{\mathbf{j}}}\left(s_{\tilde h_{\mathbf{i}}+1}^t\Big| s_{\tilde h_{\mathbf{i}}}^t,a_{\tilde h_{\mathbf{i}}}^t\right)
}
\right].
\end{align}
In step $(a)$, we use the fact that the transition kernel of $\mathcal{M}_{\tilde{\mathbf{i}}}$ and that of $\mathcal{M}_{\tilde{\mathbf{j}}}$ differ only at step $\tilde h_{\mathbf{i}}$ during the $k^{\mathrm{th}}$ stationary segment.
Expanding log-likelihood-ratios yields
\begin{align}
D_{\mathrm{KL}}(\mathbb{P}_{\tilde{\mathbf{i}},\pi}\|\mathbb{P}_{\tilde{\mathbf{j}},\pi})
&\overset{(a)}{=}
\sum_{t=\nu_{k-1}}^{\nu_k-1}
\mathbb{E}_{\tilde{\mathbf{i}},\pi}\!\Bigl[
-\log(1+4\varepsilon_k)\,
\mathbf{1}\left\{s_{\tilde h_{\mathbf{i}}}^t=\mathrm{leaf}_{\tilde \ell_{\mathbf{i}}},\ a_{\tilde h_{\mathbf{i}}}^t=\tilde a_{\mathbf{i}},\ s_{\tilde h_{\mathbf{i}}+1}^t=s_{\mathrm{g}}\right\}
\Bigr]\nonumber
\\
&\qquad+
\sum_{t=\nu_{k-1}}^{\nu_k-1}
\mathbb{E}_{\tilde{\mathbf{i}},\pi}\!\Bigl[
-\log(1-4\varepsilon_k)\,
\mathbf{1}\left\{s_{\tilde h_{\mathbf{i}}}^t=\mathrm{leaf}_{\tilde \ell_{\mathbf{i}}},\ a_{\tilde h_{\mathbf{i}}}^t=\tilde a_{\mathbf{i}},\ s_{\tilde h_{\mathbf{i}}+1}^t=s_{\mathrm{b}}\right\}
\Bigr]\nonumber
\\
&=
\sum_{t=\nu_{k-1}}^{\nu_k-1}
-\log(1+4\varepsilon_k)\,
\mathbb{P}_{\tilde{\mathbf{i}},\pi}\bigl(s_{\tilde h_{\mathbf{i}}}^t=\mathrm{leaf}_{\tilde \ell_{\mathbf{i}}},\ a_{\tilde h_{\mathbf{i}}}^t=\tilde a_{\mathbf{i}},\ s_{\tilde h_{\mathbf{i}}+1}^t=s_{\mathrm{g}}\bigr)\nonumber
\\
&\qquad+
\sum_{t=\nu_{k-1}}^{\nu_k-1}
-\log(1-4\varepsilon_k)\,
\mathbb{P}_{\tilde{\mathbf{i}},\pi}\bigl(s_{\tilde h_{\mathbf{i}}}^t=\mathrm{leaf}_{\tilde \ell_{\mathbf{i}}},\ a_{\tilde h_{\mathbf{i}}}^t=\tilde a_{\mathbf{i}},\ s_{\tilde h_{\mathbf{i}}+1}^t=s_{\mathrm{b}}\bigr)\nonumber\\
&\overset{(b)}{=}
\sum_{t=\nu_{k-1}}^{\nu_k-1}
-\frac12\log(1+4\varepsilon_k)\,
\mathbb{P}_{\tilde{\mathbf{i}},\pi}\bigl(s_{\tilde h_{\mathbf{i}}}^t=\mathrm{leaf}_{\tilde \ell_{\mathbf{i}}},\ a_{\tilde h_{\mathbf{i}}}^t=\tilde a_{\mathbf{i}}\bigr)\nonumber
\\
&\qquad+
\sum_{t=\nu_{k-1}}^{\nu_k-1}
-\frac12\log(1-4\varepsilon_k)\,
\mathbb{P}_{\tilde{\mathbf{i}},\pi}\bigl(s_{\tilde h_{\mathbf{i}}}^t=\mathrm{leaf}_{\tilde \ell_{\mathbf{i}}},\ a_{\tilde h_{\mathbf{i}}}^t=\tilde a_{\mathbf{i}}\bigr)\nonumber
\\
&=
\sum_{t=\nu_{k-1}}^{\nu_k-1}
-\frac12\log(1-16\varepsilon_k^2)\,
\mathbb{P}_{\tilde{\mathbf{i}},\pi}\bigl(s_{\tilde h_{\mathbf{i}}}^t=\mathrm{leaf}_{\tilde \ell_{\mathbf{i}}},\ a_{\tilde h_{\mathbf{i}}}^t=\tilde a_{\mathbf{i}}\bigr)\nonumber
\\
&=
-\frac12\log(1-16\varepsilon_k^2)\,
\mathbb{E}_{\tilde{\mathbf{i}},\pi}\!\left[
\sum_{t=\nu_{k-1}}^{\nu_k-1}
\mathbf{1}\{s_{\tilde h_{\mathbf{i}}}^t=\mathrm{leaf}_{\tilde \ell_{\mathbf{i}}},\ a_{\tilde h_{\mathbf{i}}}^t=\tilde a_{\mathbf{i}}\}
\right]\nonumber
\\
&=
-\frac12\log(1-16\varepsilon_k^2)\,
\mathbb{E}_{\tilde{\mathbf{i}},\pi}\!\bigl[n_k(\mathrm{leaf}_{\tilde \ell_{\mathbf{i}}},\tilde a_{\mathbf{i}},\tilde h_{\mathbf{i}})\bigr].
\end{align}
Step $(a)$ follows from the fact that $P^{k}_{\tilde{h}_{\mathbf{i}},\tilde{\mathbf{i}}}(s_{\tilde h_{\mathbf{i}}+1}^t=s_{\mathrm{g}} | s_{\tilde h_{\mathbf{i}}}^t= \mathrm{leaf}_{\tilde{\ell}_{\mathbf{i}}}, a_{\tilde h_{\mathbf{i}}}^t = \tilde{a}_{\mathbf{i}})/P^{k}_{\tilde{h}_{\mathbf{i}},\tilde{\mathbf{j}}}(s_{\tilde h_{\mathbf{i}}+1}^t=s_{\mathrm{g}} | s_{\tilde h_{\mathbf{i}}}^t= \mathrm{leaf}_{\tilde{\ell}_{\mathbf{i}}}, a_{\tilde h_{\mathbf{i}}}^t = \tilde{a}_{\mathbf{i}})) = 1/(1 + 4\varepsilon_{k})$ and that $P^{k}_{\tilde{h}_{\mathbf{i}},\tilde{\mathbf{i}}}(s_{\tilde h_{\mathbf{i}}+1}^t=s_{\mathrm{b}} | s_{\tilde h_{\mathbf{i}}}^t= \mathrm{leaf}_{\tilde{\ell}_{\mathbf{i}}}, a_{\tilde h_{\mathbf{i}}}^t = \tilde{a}_{\mathbf{i}})/P^{k}_{\tilde{h}_{\mathbf{i}},\tilde{\mathbf{j}}}(s_{\tilde h_{\mathbf{i}}+1}^t=s_{\mathrm{b}} | s_{\tilde h_{\mathbf{i}}}^t= \mathrm{leaf}_{\tilde{\ell}_{\mathbf{i}}}, a_{\tilde h_{\mathbf{i}}}^t = \tilde{a}_{\mathbf{i}})) = 1/(1 - 4\varepsilon_{k})$. 
Step $(b)$ stems from the fact that $P^{k}_{\tilde{h}_{\mathbf{i}},\tilde{\mathbf{i}}}(s_{\tilde h_{\mathbf{i}}+1}^t=s_{\mathrm{g}} | s_{\tilde h_{\mathbf{i}}}^t= \mathrm{leaf}_{\tilde{\ell}_{\mathbf{i}}}, a_{\tilde h_{\mathbf{i}}}^t = \tilde{a}_{\mathbf{i}}) = P^{k}_{\tilde{h}_{\mathbf{i}},\tilde{\mathbf{i}}}(s_{\tilde h_{\mathbf{i}}+1}^t=s_{\mathrm{b}} | s_{\tilde h_{\mathbf{i}}}^t= \mathrm{leaf}_{\tilde{\ell}_{\mathbf{i}}}, a_{\tilde h_{\mathbf{i}}}^t = \tilde{a}_{\mathbf{i}}) = 1/2$.
Using $\log(1-x)\ge -\tfrac{x}{1-x}$ for $x\in[0,1)$ gives
\begin{align}
D_{\mathrm{KL}}(\mathbb{P}_{\tilde{\mathbf{i}},\pi}\|\mathbb{P}_{\tilde{\mathbf{j}},\pi})
\le
\frac{8\varepsilon_k^2}{1-16\varepsilon_k^2}\,
\mathbb{E}_{\tilde{\mathbf{i}},\pi}\!\bigl[n_k(\mathrm{leaf}_{\tilde \ell_{\mathbf{i}}},\tilde a_{\mathbf{i}},\tilde h_{\mathbf{i}})\bigr].
\end{align}
Recall that $L$ is the number of leaf nodes. By the definition of $(\tilde h_{\mathbf{i}},\tilde \ell_{\mathbf{i}},\tilde a_{\mathbf{i}})$ in \eqref{eq:least_visit}, we have
\begin{align}
\mathbb{E}_{\tilde{\mathbf{i}},\pi}\!\bigl[n_k(\mathrm{leaf}_{\tilde \ell_{\mathbf{i}}},\tilde a_{\mathbf{i}},\tilde h_{\mathbf{i}})\bigr]
\le
\frac{\nu_k-\nu_{k-1}}{AL\bar H-1},
\end{align}
and hence
\begin{align}
D_{\mathrm{KL}}(\mathbb{P}_{\tilde{\mathbf{i}},\pi}\|\mathbb{P}_{\tilde{\mathbf{j}},\pi})
\le
\frac{8\varepsilon_k^2}{1-16\varepsilon_k^2}\,
\frac{\nu_k-\nu_{k-1}}{AL\bar H-1}.
\end{align}
Combining the Bretagnolle--Huber lower bound with the above KL upper bound yields
\begin{align}
\mathcal{R}_{\mathbf{i},k}(\pi)+\mathcal{R}_{\mathbf{j},k}(\pi)
&\ge
\varepsilon_k(H-\bar H-D)\,\frac{\nu_k-\nu_{k-1}}{2}\,
\exp\!\left(
-\frac{8\varepsilon_k^2}{1-16\varepsilon_k^2}\frac{\nu_k-\nu_{k-1}}{AL\bar H-1}
\right).
\end{align}
Since $H\ge 3D$, we have $H-\bar H-D \ge \frac23H-\bar H$. Then, set $\varepsilon_{k} = [16 + 8(\nu_{k} - \nu_{k-1})/(AL\bar{H} - 1)]^{-1/2}$, we have
\begin{align}
\mathcal{R}_{\mathbf{i},k}(\pi)+\mathcal{R}_{\mathbf{j},k}(\pi)
\ge
\frac{1}{\sqrt{16 + 8(\nu_{k} - \nu_{k-1})/(AL\bar{H} - 1)}}\Bigl(\frac23H-\bar H\Bigr)\,\frac{\nu_k-\nu_{k-1}}{2e}.
\end{align}
Notice that $\varepsilon_{k} \leq 1/4$, ensuring that the transition probabilities remain in $[0,1]$. Recall that $L = A^{D-1} = (S-3)(1-1/A)+1/A$. Then, by setting $\bar{H} = H/3$, we have
\begin{align}
\mathcal{R}_{\mathbf{i},k}(\pi)+\mathcal{R}_{\mathbf{j},k}(\pi)
\geq
\frac{H}{6e}\sqrt{\frac{(\nu_{k} - \nu_{k-1})^{2}}{16 + 8 (\nu_{k} - \nu_{k-1})/((S-3)(A-1)H/3 + H/3 - 1)}}.
\end{align}
There are $2^{N_T}$ disjoint pairs $(\mathbf{i},\mathbf{j})$ for each fixed $k$, hence
\begin{align}
\sum_{\mathbf{i}\in\{0,1\}^{N_T+1}} \mathcal{R}_{\mathbf{i},k}(\pi)
\geq
2^{N_T}\cdot
\frac{H}{6e}\sqrt{\frac{(\nu_{k} - \nu_{k-1})^{2}}{16 + 8 (\nu_{k} - \nu_{k-1})/((S-3)(A-1)H/3 + H/3 - 1)}}.
\end{align}
Without loss of generality, we assume that $T$ is divisible by $N_{T} + 1$, meaning that $\nu_{k} - \nu_{k-1} = T/(N_{T} + 1)$. Thereupon, this implies that,
\begin{align}
&2^{-(N_T+1)}\sum_{\mathbf{i}\in\{0,1\}^{N_T+1}}\sum_{k=1}^{N_T+1} \mathcal{R}_{\mathbf{i},k}(\pi)
\nonumber\\
&\geq
\frac{H}{12e}\sqrt{\frac{1}{16/T^{2} + 8/T(N_{T}+1)((S-3)(A-1)H/3 + H/3 - 1)}}.
\end{align}
Consequently, there exists $\hat{\mathbf{i}}\in\{0,1\}^{N_T+1}$ such that
\begin{align}
\sum_{k=1}^{N_T+1} \mathcal{R}_{\hat{\mathbf{i}},k}(\pi)
\ge
\frac{H}{12e}\sqrt{\frac{1}{16/T^{2} + 8/T(N_{T}+1)((S-3)(A-1)H/3 + H/3 - 1)}}.
\end{align}

Since $\pi$ was arbitrary, it follows that for any algorithm (possibly history-dependent) there exists a PS tabular
MDP instance with exactly $N_T$ changes such that the expected dynamic regret (where $T$ is the number of episodes) satisfies
\begin{align}
\mathcal{R}(\pi, T)=\Omega ( \sqrt{SAH^{3}N_{T}T} ).
\end{align}
This completes the proof.
\end{proof}

\subsection{Proof of Theorem \ref{thrm:ps-lin-mdp-lower}}
\label{app:ps-lin-mdp-lower-proof}

\begin{proof}

To prove the regret lower bound for piecewise stationary finite-horizon episodic linear MDPs, we generalize the proof of Theorem 8 and Remark 23 in \cite{zhou2021lowerboundlin}, in which a set of hard-to-learn linear MDP instances are constructed, and the lower bound on the expected regret averaged over these instances is derived.

Assume that $d\geq 5$, $H\geq 4$, $\lfloor T/(N_T+1) \rfloor \geq (d-2)^{2}H/2$, and $T/(N_{T} + 1) \geq 8$. The "hard-to-learn" linear MDP has the following formulation: Let the state space be
\begin{equation}
    \mathcal{S} \coloneqq \{x_1,\ldots,x_H,x_{H+1},x_{H+2}\}
\end{equation}
and the action set be
\begin{equation}
    \mathcal{A} \coloneqq \{+1, -1\}^{d-2}.
\end{equation}
Recall that $\nu_{k}$ denotes the $k^{\mathrm{th}}$ change-point and that $\nu_{0} \coloneqq 1$ and $\nu_{N_{T} + 1} \coloneqq T+1$. Similar to the proof in Theorem \ref{thrm:ps-mdp-lower}, we set the change-points to be evenly separated over the $T$ episodes: for the $k^{\mathrm{th}}$ stationary segment, its interval length 
$\nu_{k} - \nu_{k-1} = \lceil T/(N_T+1)\rceil$ if $k\le T\bmod (N_T+1)$ and $\lfloor T/(N_T+1)\rfloor$ otherwise. With slight abuse of notations, we let $\mu_{h,k}$ denote the measure $\mu_{h,t}$ over the $k^{\mathrm{th}}$ stationary segment and let $\theta_{h,k}$ denote the vector $\theta_{h,t}$ over the $k^{\mathrm{th}}$ stationary segment, i.e., $\mu_{h,t} = \mu_{h,k}$ and $\theta_{h,t} = \theta_{h,k}$ for any $t \in \{\nu_{k-1}, \dots, \nu_{k} - 1\}$.
For constructing $\mu_{h,k}$ and $\theta_{h,k}$, we define the parameters $\delta \coloneqq 1/H$, $\Delta \coloneqq \sqrt{\delta/(32\lfloor T/(N_{T} + 1)\rfloor)}$, $\varrho \coloneqq \sqrt{1/(1 + \Delta(d-2))}$, and $\varsigma \coloneqq \sqrt{\Delta/(1 + \Delta(d-2))}$. Then, for any $h \in [H]$ and $ k \in [N_{T} + 1]$, define
\begin{align}
    \theta_{h,k} \coloneqq \begin{bmatrix}
        \mathbf{0}^{\top} & 1
    \end{bmatrix}^{\top},
\end{align}
where $\mathbf{0} \in \mathbb{R}^{d-1}$, and
\begin{equation}
    \mu_{h,k}(s') \coloneqq \begin{dcases}
        \begin{bmatrix}
            (1 - \delta)/\varrho & -\varphi^{\top}_{h,k}/\varsigma & 0
        \end{bmatrix}^{\top},& s' = x_{h+1},\\
        \begin{bmatrix}
            \delta/\varrho & \varphi^{\top}_{h,k}/\varsigma & 1
        \end{bmatrix}^{\top},& s' = x_{H+2},\\
        \mathbf{0}, &\text{otherwise,}
    \end{dcases}
\end{equation}
where $\mathbf{0} \in \mathbb{R}^{d}$ and $\varphi_{h,k} \in \{+\Delta, -\Delta\}^{d-2}$. Since our hard-to-learn linear MDP instances is determined by the set of vectors $\varphi =\{ \varphi_{h,k}: h \in [H],\, k \in [N_{T} + 1] \}$, we use $\varphi$ to denote a linear MDP. It is evident that $\lVert \theta_{h,k} \rVert_{2} = 1$ and
\begin{equation}
    \lVert\mu_{h,k}(\mathcal{S})\rVert_{2}^{2} = \left\lVert\begin{bmatrix}
        1/\varrho & \mathbf{0}^{\top} & 1
    \end{bmatrix}\right\rVert_{2}^{2} = 1 + \Delta(d-2) + 1 \overset{(a)}{\leq} d
\end{equation}
where we leverage the assumption that $T/(N_{T} + 1) \geq 11$ and $H \geq 3$ in step $(a)$. In addition, the feature map $\phi$ is defined as follows:
\begin{equation}
    \phi(s, a) = \begin{dcases}
        \begin{bmatrix}
            \varrho & \varsigma a^{\top} \quad 0
        \end{bmatrix}^{\top}, &s\neq x_{H+2},\\
        \begin{bmatrix}
            0 & \mathbf{0}^{\top} \quad 1
        \end{bmatrix}^{\top}, &s=x_{H+2}.
    \end{dcases}
\end{equation}
We can also see that for any $s \in \mathcal{S}$ and $a \in \mathcal{A}$,
\begin{equation}
    \lVert \phi(s,a) \rVert^{2}_{2} = \begin{dcases}
        \frac{1 + \Delta(d-2)}{1 + \Delta(d-2)} = 1,& s\neq x_{H+2},\\
        1,& s= x_{H+2}.
    \end{dcases}
\end{equation}
In all episode, the deterministic starting state is $x_{1}$, i.e., $s^{t}_{1} = x_{1}$ for all $t \in [T]$. In our hard-to-learn linear MDPs, the reward $r^{t}_{h}$ is $1$ if $s^{t}_{h} = x_{H+2}$ and $0$ otherwise. Therefore, $x_{H+2}$ can be viewed as the "good" state, while the others are the "bad" states. The transition kernel is illustrated in the following figure. At step $h$, the state $s_{h}^{t}$ can either be $x_{h}$ or $x_{H+2}$. If $s^{t}_{h} = x_{h}$, then the agent transitions to $x_{h+1}$ with probability $1 - \delta - \langle \varphi_{h,k}, a_{h}^{t} \rangle$ or $x_{H+2}$ with probability $\delta + \langle \varphi_{h,k}, a_{h}^{t} \rangle$. If $s^{t}_{h} = x_{h}$, then the agent remains at $x_{H+2}$ with probability $1$.
\begin{figure}[ht]
    \centering
    \includegraphics[width=0.95\linewidth]{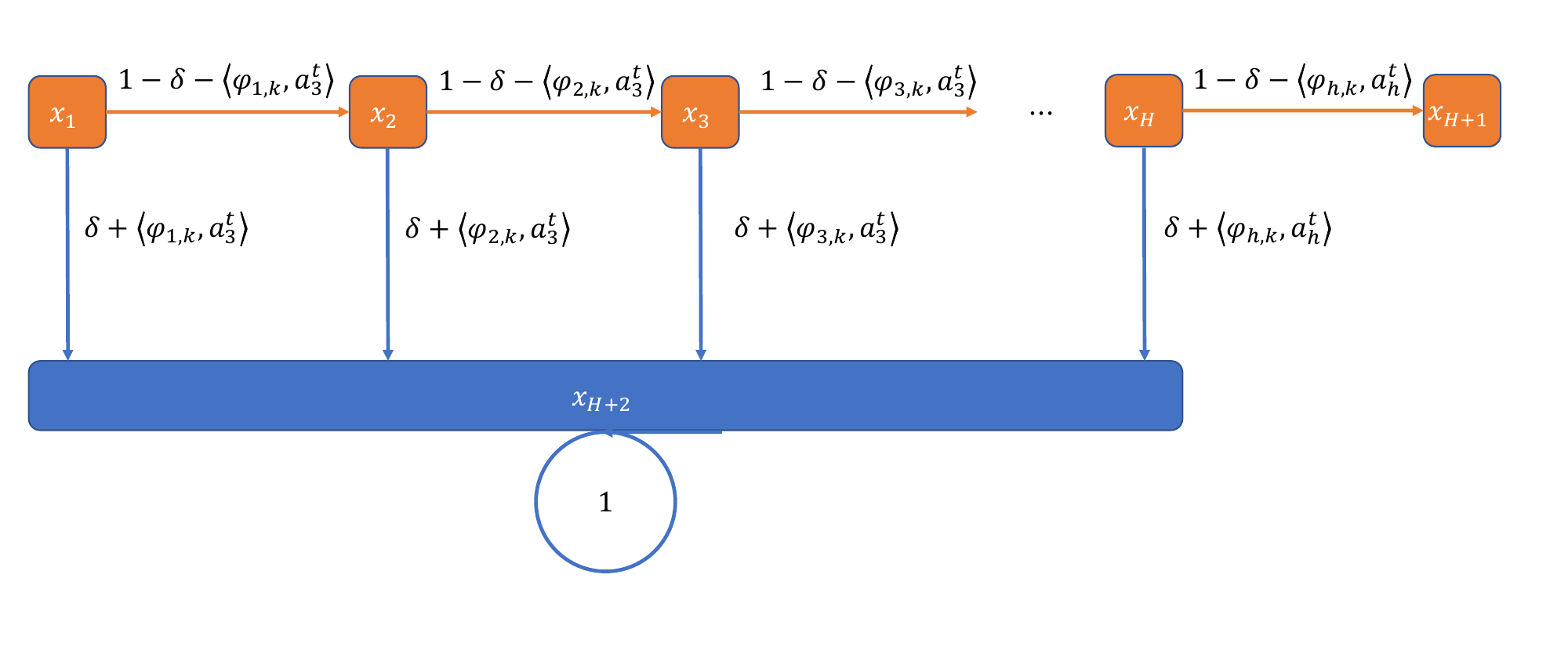}
    \caption{The transition kernel of a hard-to-learn linear MDP instance. The orange states are the "bad" states with zero reward, while the blue state is the "good" state with unit reward}
    \label{fig:linearMDP}
\end{figure}

We now proceed to prove the regret lower bound on these hard-to-learn linear MDP instances. Fix an arbitrary policy $\pi$. Let $\mathbb{P}_{\varphi, \pi}$ and $\mathbb{E}_{\varphi, \pi}$ denote the probability measure and the expectation when the agent operates policy $\pi$ on the piecewise stationary finite-horizon episodic linear MDP $\varphi$. Similar to the proof to Theorem \ref{thrm:ps-mdp-lower}, we use $\zeta^{t}_{h}$ to denote the trajectory up to episode $t$
and step $h$, i.e.,
\begin{align}
\zeta_h^t \coloneqq (s_1^1,a_1^1,\ldots,a_{H}^1,s_{H+1}^1,\; s_{1}^2,a_1^2,\ldots,a_{H}^2,s_{H+1}^2,\ldots, a_{h-1}^t,s_h^t).
\end{align}
We also define $V^{t,\pi}_{\varphi}$ as the value function on MDP $\mathcal{M}_{\mathbf{i}}$, i.e.,
\begin{equation}
    V^{t,\pi}_{\varphi}(s) \coloneqq \mathbb{E}_{\varphi, \pi}\left[\sum_{h=1}^{H} r^{t}_{h}(s^{t}_{h}, a^{t}_{h}) \Bigg| s_{1}^{t} = s\right],
\end{equation}
and define $V^{t,*}_{\varphi} \coloneqq \max_{\pi} V^{t,\pi}_{\varphi}$.
Here, we provide Lemma 24 in \cite{zhou2021lowerboundlin}, which we also leverage in our proof.

\begin{lemma}
Assume that $H \geq 3$ and $3(d-2)\Delta\leq \delta$. Then, for any $\varphi \in ((\{+\Delta, -\Delta\}^{d-2})^{H})^{N_{T} + 1}$ and $t$ in the $k^{\mathrm{th}}$ stationary segment,
\begin{align}
    &V_{\varphi}^{t,*}(x_{1}) - \mathbb{E}_{\varphi, \pi}\left[\sum_{h=1}^{H}r_{h}^{t}\left(s^{t}_{h}, a^{t}_{h}\right)\bigg|\zeta^{t}_{1}, s^{t}_{1} = x_{1} \right] \nonumber\\
    &\geq \frac{H}{10} \sum_{h=1}^{\lfloor H/2 \rfloor - 1} \left(\max_{a \in \mathcal{A}} \langle \varphi_{h,k}, a \rangle - \left\langle \varphi_{h,k}, \mathbb{E}_{\varphi,\pi}\left[a^{t}_{h}|\zeta_{1}^{t}, s^{t}_{1} = x_{1}, s^{t}_{h} = x_{h}\right] \right\rangle \right).
\end{align}
\end{lemma}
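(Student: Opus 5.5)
The plan is to reduce the statement to the stationary case and then reproduce the hybrid-policy (telescoping) argument behind Lemma 24 of \cite{zhou2021lowerboundlin}.

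\textbf{Step 1: reduction to a stationary problem.} Fix $t$ in the $k^{\mathrm{th}}$ stationary segment and condition on $\zeta_1^t$. Given this history, the agent's behaviour in episode $t$ is a fixed, possibly randomized, Markov-in-history policy $\pi_t$. During episode $t$ the transition kernel is the stationary one with parameters $\{\varphi_{h,k}\}_{h\in[H]}$. So it suffices to prove the inequality for a single stationary instance with parameters $\varphi_h:=\varphi_{h,k}$, evaluated on an arbitrary randomized policy.

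\textbf{Step 2: closed-form value.} Along the chain, the state at step $h$ is either $x_h$ or the absorbing good state $x_{H+2}$. From $x_h$ under action $a$, the agent jumps to $x_{H+2}$ with probability
\[
p_h(a)=\delta+\langle\varphi_h,a\rangle .
\]
Because $3(d-2)\Delta\le\delta$, every such probability lies in $[2\delta/3,\,4\delta/3]$. Jumping at step $h$ collects reward $1$ at each of the remaining steps, i.e.\ $H-h$ in total. Since $p_h$ is affine in $a$, the expected jump probability at $x_h$ is $\delta+\langle\varphi_h,\bar a_h\rangle$, where $\bar a_h$ is exactly $\mathbb{E}_{\varphi,\pi}[a_h^t\mid \zeta_1^t,s_1^t=x_1,s_h^t=x_h]$. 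This is why the conditional mean action appears in the bound. The optimal policy plays $\arg\max_a\langle\varphi_h,a\rangle$ at every step, since larger jump probabilities are always better when $H-h$ exceeds the continuation value.

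\textbf{Step 3: telescoping over hybrid policies.} Let $\pi^{(h)}$ follow $\pi$ for steps $<h$ and the optimal policy from step $h$ on. Then
\[
V^*-V^\pi=\sum_h\bigl(V^{\pi^{(h)}}-V^{\pi^{(h+1)}}\bigr).
\]
The two policies in the $h^{\mathrm{th}}$ term differ only at step $h$, so the term equals
\[
\mathbb{P}(s_h=x_h)\,\Bigl(\max_a\langle\varphi_h,a\rangle-\langle\varphi_h,\bar a_h\rangle\Bigr)\,\bigl(H-h-V^*_{h+1}(x_{h+1})\bigr),
\]
and every term is nonnegative. I therefore drop the terms with $h\ge\lfloor H/2\rfloor$ and lower bound the two remaining factors for $h\le\lfloor H/2\rfloor-1$:
\begin{itemize}
\item[(i)] The reach probability satisfies $\mathbb{P}(s_h=x_h)\ge(1-4\delta/3)^{H/2}=(1-\tfrac{4}{3H})^{H/2}$, which is bounded below by an absolute constant for $H\ge3$.
\item[(ii)] The continuation gap satisfies
\[
H-h-V^*_{h+1}(x_{h+1})\;\ge\;(H-h)\bigl(1-\tfrac{4}{3H}\bigr)^{H-h}\;\ge\;c\,H .
\]
This holds because $V^*_{h+1}(x_{h+1})\le (H-h-1)\,\mathbb{P}(\text{ever jumping})$, and since $h<H/2$ we have $H-h\ge H/2$.
\end{itemize}
Multiplying the two constants gives the $H/10$ factor.

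\textbf{Main obstacle.} The hard part is making the constants in (i) and (ii) explicit enough to reach $1/10$ uniformly for all $H\ge3$ (or $H\ge4$). The small-$H$ cases are tight, so I would first check the bound numerically at $H=3,4$ and use $(1-x/H)^H\ge e^{-x}(1-x^2/H)$ for larger $H$.
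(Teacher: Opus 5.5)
The paper does not prove this lemma; it cites it as Lemma~24 of \cite{zhou2021lowerboundlin} without proof. Your structure is the standard argument behind that result. You condition on $\zeta_1^t$, use that the jump probability $\delta+\langle\varphi_{h,k},a\rangle$ is affine in $a$, and telescope over hybrid policies. This produces exactly the terms
\[
\mathbb{P}(s_h=x_h)\Bigl(\max_a\langle\varphi_{h,k},a\rangle-\langle\varphi_{h,k},\bar a_h\rangle\Bigr)\bigl(H-h-V^*_{h+1}(x_{h+1})\bigr),
\]
which is the unrolled value recursion. The hybrid formulation also handles within-episode history dependence for free, because only the step-$h$ action, averaged over histories reaching $x_h$, enters.

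The step that does not close as you planned it is the constant. Suppose you bound the two factors separately and uniformly in $h$: the reach probability by $(1-\tfrac{4}{3H})^{H/2}$, and the gap via $H-h\ge H/2$ together with $(1-\tfrac{4}{3H})^{H-h}\ge(1-\tfrac{4}{3H})^{H}$. Since $1-u\le e^{-u}$, these bounds are at most $e^{-2/3}$ and $\tfrac{H}{2}e^{-4/3}$. Their product therefore never exceeds $\tfrac{H}{2}e^{-2}\approx 0.068H<H/10$, for every $H$. Your proposed sharpening of $(1-x/H)^H$ cannot fix this. The loss is structural: you pair $H-h\approx H/2$, which occurs at the largest $h$, with an exponent $\approx H$, which only occurs at the smallest $h$.

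The fix is to multiply before taking worst cases. The reach probability is at least $(1-\tfrac{4}{3H})^{h-1}$, and your bound (ii) is correct as stated, so
\[
\mathbb{P}(s_h=x_h)\bigl(H-h-V^*_{h+1}(x_{h+1})\bigr)\ge(H-h)\Bigl(1-\tfrac{4}{3H}\Bigr)^{H-1}\ge\Bigl(\tfrac{H}{2}+1\Bigr)e^{-4/3}>0.13\,H .
\]
Here $H-h\ge H/2+1$ because $h\le\lfloor H/2\rfloor-1$. The inequality $(1-\tfrac{4}{3H})^{H-1}\ge e^{-4/3}$ for $H\ge 3$ holds because, with $y=\tfrac{4}{3H}\le\tfrac49$, the function $y\mapsto 4y+(4-3y)\log(1-y)$ vanishes with zero slope at $y=0$ and is convex on $[0,2/3)$.

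There are also no delicate small-$H$ cases to check numerically. For $H=3$ the sum over $h\le\lfloor H/2\rfloor-1=0$ is empty. For $H\in\{4,5\}$ only $h=1$ appears, and there the reach probability is $1$.
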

With this lemma, we can show that
\begin{align}
    &V_{\varphi}^{t,*}(x_{1}) - V_{\varphi}^{t,\pi}(x_{1})\nonumber\\
    &= \mathbb{E}_{\varphi, \pi}\left[V_{\varphi}^{t,*}(x_{1}) - \mathbb{E}_{\varphi, \pi}\left[\sum_{h=1}^{H}r_{h}^{t}\left(s^{t}_{h}, a^{t}_{h}\right)\bigg|\zeta^{t}_{1}, s^{t}_{1} = x_{1} \right] \Bigg| s^{t}_{1} = x_{1}\right] \nonumber\\
    &\geq \mathbb{E}_{\varphi, \pi}\left[\frac{H}{10} \sum_{h=1}^{\lfloor H/2 \rfloor - 1} \left(\max_{a \in \mathcal{A}} \langle \varphi_{h,k}, a \rangle - \left\langle \varphi_{h,k}, \mathbb{E}_{\varphi, \pi}\left[a^{t}_{h}|\zeta_{1}^{t}, s^{t}_{1} = x_{1}, s^{t}_{h} = x_{h}\right] \right\rangle \right) \Bigg| s^{t}_{1} = x_{1}\right] \nonumber\\
    &= \frac{H}{10} \sum_{h=1}^{\lfloor H/2 \rfloor - 1} \left(\max_{a \in \mathcal{A}} \langle \varphi_{h,k}, a \rangle - \left\langle \varphi_{h,k}, \mathbb{E}_{\varphi, \pi}\left[\mathbb{E}_{\varphi, \pi}\left[a^{t}_{h}|\zeta_{1}^{t}, s^{t}_{1} = x_{1}, s^{t}_{h} = x_{h}\right] | s^{t}_{1} = x_{1}\right]\right\rangle \right)\nonumber\\
    &= \frac{H}{10} \sum_{h=1}^{\lfloor H/2 \rfloor - 1} \left(\max_{a \in \mathcal{A}} \langle \varphi_{h,k}, a \rangle - \left\langle \varphi_{h,k}, \mathbb{E}_{\varphi, \pi}\left[a^{t}_{h}| s^{t}_{1} = x_{1}, s^{t}_{h} = x_{h}\right] \right\rangle\right).
\end{align}
Now, let $\mathcal{R}_{\varphi,\pi}$ denote the regret of policy $\pi$ on the linear MDP $\varphi$, i.e.,
\begin{equation}
    \mathcal{R}_{\varphi, \pi} \coloneqq \sum_{t=1}^{T} \left(V^{t,*}_{\varphi}(x_{1}) - V^{t,\pi}_{\varphi}(x_{1})\right).
\end{equation}
Let $a^{t}_{h}(j)$ be the element in the $j^{\mathrm{th}}$ coordinate of $a^{t}_{h}$ and $\varphi_{h,k}(j)$ be that of $\varphi_{h,k}$. Then, we can show the following minimax lower bound on the regret over all hard-to-learn linear MDP instances $\varphi \in ((\{+\Delta, -\Delta\}^{d-2})^{H})^{N_{T}+1}$.
\begin{align}
    &2^{(d-2)H(N_{T}+1)}\sup_{\varphi} \mathcal{R}_{\varphi,\pi} \nonumber\\
    &\geq \sum_{\varphi} \mathcal{R}_{\varphi,\pi} \nonumber\\
    &= \sum_{\varphi} \sum_{k=1}^{N_{T} + 1} \sum_{t = \nu_{k-1}}^{\nu_{k} - 1}\left(V^{t,*}_{\varphi}(x_{1}) - V^{t,\pi}_{\varphi}(x_{1})\right) \nonumber\\
    &\geq \sum_{\varphi} \sum_{k=1}^{N_{T} + 1} \sum_{t = \nu_{k-1}}^{\nu_{k} - 1}\frac{H}{10} \sum_{h=1}^{\lfloor H/2 \rfloor - 1} \left(\max_{a \in \mathcal{A}} \langle \varphi_{h,k}, a \rangle - \left\langle \varphi_{h,k}, \mathbb{E}_{\varphi, \pi}\left[a^{t}_{h}| s^{t}_{1} = x_{1}, s^{t}_{h} = x_{h}\right] \right\rangle\right) \nonumber\\
    &= \frac{H}{10} \sum_{k=1}^{N_{T} + 1}  \sum_{h=1}^{\lfloor H/2 \rfloor - 1}\sum_{\varphi} \sum_{t = \nu_{k-1}}^{\nu_{k} - 1} \mathbb{E}_{\varphi, \pi}\left[ \max_{a \in \mathcal{A}} \langle \varphi_{h,k}, a \rangle - \left\langle \varphi_{h,k}, a^{t}_{h} \right\rangle \bigg| s^{t}_{1} = x_{1}, s^{t}_{h} = x_{h}\right] \nonumber\\
    &= \frac{H}{10} \sum_{k=1}^{N_{T} + 1}  \sum_{h=1}^{\lfloor H/2 \rfloor - 1}\sum_{\varphi} \sum_{t = \nu_{k-1}}^{\nu_{k} - 1} \mathbb{E}_{\varphi, \pi}\!\!\left[ \sum_{j=1}^{d-2} 2\Delta\mathds{1}\{\mathrm{sgn}\left(a^{t}_{h}(j)\varphi_{h,k}(j)\right) = -1\} \Bigg| s^{t}_{1} = x_{1}, s^{t}_{h} = x_{h}\right] \nonumber\\
    &\overset{(a)}{\geq} \frac{H\Delta}{5} \!\sum_{k=1}^{N_{T} + 1}  \!\sum_{h=1}^{\lfloor H/2 \rfloor - 1}\!\sum_{\varphi} \sum_{t = \nu_{k-1}}^{\nu_{k} - 1} \!\!\mathbb{E}_{\varphi, \pi}\!\left[ \sum_{j=1}^{d-2} \mathds{1}\left\{\mathrm{sgn}\left(a^{t}_{h}(j)\varphi_{h,k}(j)\right)= -1\right\}  \mathds{1} \left\{s^{t}_{1} = x_{1}, s^{t}_{h} = x_{h} \right\}\right] \nonumber\\
    &= \frac{H\Delta}{5} \!\sum_{k=1}^{N_{T} + 1}  \!\sum_{h=1}^{\lfloor H/2 \rfloor - 1}\sum_{j=1}^{d-2} \sum_{\varphi} \sum_{t = \nu_{k-1}}^{\nu_{k} - 1}\!\mathbb{E}_{\varphi, \pi}\!\left[ \mathds{1}\!\left\{\mathrm{sgn}\left(a^{t}_{h}(j)\varphi_{h,k}(j)\right)= -1, s^{t}_{1} = x_{1}, s^{t}_{h} = x_{h}\right\}\right] \label{eq:minimax_lin_step_1}
\end{align}
where step $(a)$ results from the fact that $\mathbb{P}(s^{t}_{1} = x_{1}, s^{t}_{h} = x_{h}) \leq 1$.

Now, fix $h \in [H]$, $k \in [N_{T} + 1]$, $t \in \{\nu_{k-1}, \dots, \nu_{k} - 1\}$, and $j \in [d-2]$. Also, fix an arbitrary linear MDP instance $\varphi \in ((\{+\Delta, -\Delta\}^{d-2})^{H})^{N_{T}+1}$ and let $\varphi^{(j,h,k)}$ denote the linear MDP instance such that all the elements in $\varphi^{(j,h,k)}$ are the same as those in $\varphi$ except for the $j^{\mathrm{th}}$ coordinate of $\varphi_{h,k}$. We can derive that
\begin{align}
     &\mathbb{E}_{\varphi, \pi}\!\left[ \mathds{1}\left\{\mathrm{sgn}\left(a^{t}_{h}(j)\varphi_{h,k}(j)\right) = -1, s^{t}_{1} = x_{1}, s^{t}_{h} = x_{h}\right\} \right] \nonumber\\
     &\quad+ \mathbb{E}_{\varphi^{(j,h,k)}, \pi}\!\left[ \mathds{1}\left\{\mathrm{sgn}\left(a^{t}_{h}(j)\varphi_{h,k}^{(j,h,k)}(j)\right) = -1, s^{t}_{1} = x_{1}, s^{t}_{h} = x_{h}\right\}\right] \nonumber\\
     &=1 + \mathbb{E}_{\varphi, \pi}\!\left[  \mathds{1}\left\{\mathrm{sgn}\left(a^{t}_{h}(j)\varphi_{h,k}(j)\right) = -1, s^{t}_{1} = x_{1}, s^{t}_{h} = x_{h}\right\} \right] \nonumber\\
     &\quad- \mathbb{E}_{\varphi^{(j,h,k)}, \pi}\!\left[  \mathds{1}\left\{\mathrm{sgn}\left(a^{t}_{h}(j)\varphi_{h,k}(j)\right) = -1, s^{t}_{1} = x_{1}, s^{t}_{h} = x_{h}\right\} \right] \nonumber\\
     &\overset{(a)}{\geq}1 - \mathrm{TV}\left(\mathbb{P}_{\varphi, \pi}, \mathbb{P}_{\varphi^{(j,h,k)}, \pi}\right) \nonumber\\
     &\overset{(b)}{\geq}1 - \sqrt{1/2}\sqrt{D_{\mathrm{KL}}\left(\mathbb{P}_{\varphi, \pi}|| \mathbb{P}_{\varphi^{(j,h,k)}, \pi}\right)} \label{eq:exp_diff} 
\end{align}
where $\mathrm{TV}$ denotes the total variation and $D_{\mathrm{KL}}$ denotes the KL divergence. In step $(a)$, we apply Exercise 14.4, as the indicator function is bounded in $[0,1]$. In step $(b)$, we apply Pinsker's inequality.
Let $P^{t}_{h,\varphi}$ and $P^{t}_{h,\varphi^{(j,h,k)}}$ be the transition kernels of the linear MDP instances $\varphi$, respectively. We compute the KL divergence and obtain:
\begin{align}
&D_{\mathrm{KL}}\left(\mathbb{P}_{\varphi, \pi}\|\mathbb{P}_{\varphi^{(j,h,k)}, \pi}\right) \nonumber\\
&=
\mathbb{E}_{\varphi,\pi}\!\left[
\log\frac{
\prod_{l=1}^{N_T+1}\prod_{t'=\nu_{l-1}}^{\nu_l-1}\prod_{h'=1}^{H}
\pi(a_{h'}^{t'}\mid \zeta_{h'}^{t'})\,P^{t'}_{h',\varphi}(s_{h'+1}^{t'}\mid s_{h'}^{t'},a_{h'}^{t'})
}{
\prod_{l=1}^{N_T+1}\prod_{t'=\nu_{l-1}}^{\nu_l-1}\prod_{h'=1}^{H}
\pi(a_{h'}^{t'}\mid \zeta_{h'}^{t'})\,P^{t}_{h',\varphi^{(j,h,k)}}(s_{h'+1}^{t'}\mid s_{h'}^{t'},a_{h'}^{t'})
} 
\right]\nonumber
\\
&\overset{(a)}{=}
\mathbb{E}_{\varphi,\pi}\!\left[
\sum_{t'=\nu_{k-1}}^{\nu_k-1}
\mathds{1}\left\{s^{t'}_{h} = x_{h}\right\}\log\frac{
P^{t'}_{h,\varphi}\left(s_{h+1}^{t'}\Big| s_{h}^{t'},a_{h}^{t'}\right)
}{
P^{t'}_{h,\varphi^{(j,h,k)}}\left(s_{h+1}^{t'}\Big| s_{h}^{t'},a_{h}^{t'}\right)
}
\right] \nonumber\\
&= \sum_{t'=\nu_{k-1}}^{\nu_k-1}
\mathbb{P}\left(s^{t'}_{h} = x_{h}\right) \mathbb{E}_{\varphi,\pi}\!\left[
\log\frac{
P^{t'}_{h,\varphi}\left(s_{h+1}^{t'}\Big| s_{h}^{t'},a_{h}^{t'}\right)
}{
P^{t'}_{h,\varphi^{(j,h,k)}}\left(s_{h+1}^{t'}\Big| s_{h}^{t'},a_{h}^{t'}\right)
} \Bigg| s^{t'}_{h} = x_{h}
\right] \nonumber\\
&\leq \sum_{t'=\nu_{k-1}}^{\nu_k-1}
\mathbb{E}_{\varphi,\pi}\!\left[
\log\frac{
P^{t'}_{h,\varphi}\left(s_{h+1}^{t'}\Big| s_{h}^{t'},a_{h}^{t'}\right)
}{
P^{t'}_{h,\varphi^{(j,h,k)}}\left(s_{h+1}^{t'}\Big| s_{h}^{t'},a_{h}^{t'}\right)
} \Bigg| s^{t'}_{h} = x_{h}
\right]. \label{eq:sum_exp_log}
\end{align}
%
%In step $(a)$, we leverage the assumption that $\mathbb{P}_{\varphi, \pi}\left(s^{t}_{1} = x_{1}, s^{t}_{h} = x_{h}\right) \geq \mathbb{P}_{\varphi^{(j,h,k)}, \pi}\left(s^{t}_{1} = x_{1}, s^{t}_{h} = x_{h}\right)$. In step $(b)$, we exploit the condition $s^{t}_{1} = x_{1}$ and $s^{t}_{h} = x_{h}$. 
In step $(a)$, we use the fact that all the elements in $\varphi^{(j,h,k)}$ are the same as those in $\varphi$ except for the $j^{\mathrm{th}}$ coordinate of $\varphi_{h,k}$. Let $\mathrm{Bern}(p)$ denote the Bernoulli distribution with parameter $p \in [0,1]$. The expectation inside the summation in \eqref{eq:sum_exp_log} can be upper bounded as follows:
\begin{align}
&\mathbb{E}_{\varphi,\pi}\!\left[
\log\frac{
P^{t'}_{h,\varphi}\left(s_{h+1}^{t'}\Big| s_{h}^{t'},a_{h}^{t'}\right)
}{
P^{t'}_{h,\varphi^{(j,h,k)}}\left(s_{h+1}^{t'}\Big| s_{h}^{t'},a_{h}^{t'}\right)
} \Bigg| s^{t'}_{h} = x_{h}
\right] \nonumber\\
&=\mathbb{E}_{\varphi,\pi}\!\left[\mathbb{E}_{\varphi,\pi}\!\left[
\log\frac{
P^{t'}_{h,\varphi}\left(s_{h+1}^{t'}\Big| s_{h}^{t'},a_{h}^{t'}\right)
}{
P^{t'}_{h,\varphi^{(j,h,k)}}\left(s_{h+1}^{t'}\Big| s_{h}^{t'}, a_{h}^{t'}\right)
} \Bigg| s^{t'}_{h} = x_{h}, a^{t'}_{h} \right]\Bigg| s^{t'}_{h} = x_{h}
\right] \nonumber\\
&=\mathbb{E}_{\varphi,\pi}\!\left[ D_{\mathrm{KL}}\left(\mathrm{Bern}\left(\delta + \left\langle \varphi_{h,k}, a_{h}^{t'} \right\rangle \right), \mathrm{Bern}\left(\delta + \left\langle \varphi_{h,k}^{(j,h,k)}, a_{h}^{t'} \right\rangle \right)\right) \Bigg| s^{t'}_{h} = x_{h}
\right] \nonumber\\
&\overset{(a)}{\leq}\mathbb{E}_{\varphi,\pi}\!\left[ \frac{2\left(\delta + \left\langle \varphi_{h,k}, a_{h}^{t'} \right\rangle - \delta - \left\langle \varphi_{h,k}^{(j,h,k)}, a_{h}^{t'} \right\rangle\right)^{2}}{\delta + \left\langle \varphi_{h,k}^{(j,h,k)}, a_{h}^{t'} \right\rangle} \Bigg| s^{t'}_{h} = x_{h}
\right] \nonumber\\
&=\mathbb{E}_{\varphi,\pi}\!\left[ \frac{2\left(\left\langle \varphi_{h,k} - \varphi_{h,k}^{(j,h,k)}, a_{h}^{t'} \right\rangle \right)^{2}}{\delta + \left\langle \varphi_{h,k}^{(j,h,k)}, a_{h}^{t'} \right\rangle} \Bigg| s^{t'}_{h} = x_{h}
\right] \nonumber\\
&\leq\mathbb{E}_{\varphi,\pi}\!\left[ \frac{8\Delta^{2}}{\delta - (d-2)\Delta} \Bigg| s^{t'}_{h} = x_{h}
\right] \nonumber\\
&\overset{(b)}{\leq} \frac{128\Delta^{2}}{3\delta}. \label{eq:log_exp_results}
\end{align}
In step $(a)$, we exploit the fact that $\delta + \langle \varphi_{h,k}^{(j,h,k)}, a_{h}^{t'} \rangle \leq \delta + (d-2)\Delta \leq 1/2$, as $\delta = 1/H$, $\Delta =\sqrt{\delta/(32\lfloor T/(N_{T} + 1)\rfloor)}$, $H \geq 4$, and $\lfloor T/(N_{T} + 1)\rfloor \geq (d-2)^{2}/(2\delta)$. 
In step $(b)$, we use the fact that $\delta - (d-2)\Delta \geq 3/16$, as $H \geq 4$ and $\lfloor T/(N_{T} + 1)\rfloor \geq (d-2)^{2}/(2\delta)$.
Plugging \eqref{eq:log_exp_results} into \eqref{eq:sum_exp_log}, we have
\begin{align}
D_{\mathrm{KL}}\left(\mathbb{P}_{\varphi, \pi}\|\mathbb{P}_{\varphi^{(j,h,k)}, \pi}\right) \leq \left\lceil \frac{T}{N_{T} + 1} \right\rceil\frac{128\Delta^{2}}{3\delta}.
\end{align}
Then, by plugging this upper bound into \eqref{eq:exp_diff}, we obtain
\begin{align}
    &\mathbb{E}_{\varphi, \pi}\!\left[ \mathds{1}\left\{\mathrm{sgn}\left(a^{t}_{h}(j)\varphi_{h,k}(j)\right) = -1, s^{t}_{1} = x_{1}, s^{t}_{h} = x_{h}\right\} \right] \nonumber\\
    &\quad+ \mathbb{E}_{\varphi^{(j,h,k)}, \pi}\!\left[ \mathds{1}\left\{\mathrm{sgn}\left(a^{t}_{h}(j)\varphi_{h,k}^{(j,h,k)}(j)\right) = -1, s^{t}_{1} = x_{1}, s^{t}_{h} = x_{h}\right\}\right] \nonumber\\
    &\geq 1 - \sqrt{\frac{64\Delta^{2}}{3\delta}\left\lceil \frac{T}{N_{T} + 1} \right\rceil} \nonumber\\
    &\overset{(a)}{\geq} 1 - \sqrt{\frac{3}{4}}
\end{align}
where step $(a)$ stems from the assumption that $T/(N_{T}+1) \geq 8$. Then, by \eqref{eq:minimax_lin_step_1}, we can finally show that
\begin{align}
\sup_{\varphi} \mathcal{R}_{\varphi,\pi} &\geq \frac{H\Delta}{10} \sum_{k=1}^{N_{T} + 1}  \sum_{h=1}^{\lfloor H/2 \rfloor - 1}\sum_{j=1}^{d-2} \sum_{t = \nu_{k-1}}^{\nu_{k} - 1}\left(1 - \sqrt{\frac{3}{4}}\right) \nonumber\\
&\geq \left(1 - \sqrt{\frac{3}{4}}\right)\frac{H\Delta}{10} (N_{T} + 1) \left( \lfloor H/2 \rfloor - 1 \right) (d-2) \left\lfloor \frac{T}{N_{T} + 1} \right\rfloor \nonumber\\
&= \frac{1 - \sqrt{3/4}}{40\sqrt{2}} (d-2) (N_{T} + 1) \left( \lfloor H/2 \rfloor - 1 \right) \sqrt{H\left\lfloor \frac{T}{N_{T} + 1} \right\rfloor}  \nonumber\\
&=\Omega(d\sqrt{H^{3}N_{T}T}).
\end{align}
This completes the proof.
\end{proof}

\subsection{Proof of Theorem \ref{thrm:darling_reg}}
\label{app:darling_reg_proof}
\begin{proof}
Consider a piecewise stationary episodic MDP over $T$ episodes with horizon $H$ and $N_{T}$ change-points. Recall that the $k^{\mathrm{th}}$ change-point is denoted by $\nu_{k}$, and that $\nu_{0} \coloneqq 1$ and $\nu_{N_{T} + 1} \coloneqq T+1$.
Over a stationary segment $\{ \nu_{k-1}, \dots, \nu_{k} - 1\}$, the environment remains
stationary in the sense that there exist transition kernels $\{P_h^{(k)}\}_{h\in[H]}$ and mean reward functions
$\{r_h^{(k)}\}_{h\in[H]}$ such that for all $t\in\{ \nu_{k-1}, \dots, \nu_{k} - 1\}$ and $(s,a,h)\in\mathcal{S}\times\mathcal{A}\times[H]$,
\begin{align}
P_h^t(\cdot\mid s,a)=P_h^{(k)}(\cdot\mid s,a),
\qquad
r_h^t(s,a)=r_h^{(k)}(s,a).
\end{align}
Equivalently, one may denote the stationary MDP over the $k^{\mathrm{th}}$ stationary segment by $(\mathcal{S},\mathcal{A},H,P^{(k)},r^{(k)})$ with
$P^{(k)}=\{P_h^{(k)}\}_{h\in[H]}$ and $r^{(k)}=\{r_h^{(k)}\}_{h\in[H]}$. Let $\tau_{k}$ be the $k^{\mathrm{th}}$ episode at which DARLING restarts, i.e., for $k \in \mathbb{N}$,
\begin{equation}
    \tau_{k} \coloneqq \inf \{ t > \tau_{k-1}: \texttt{Restart} = \texttt{True} \}
\end{equation}
with $\tau_{0} \coloneqq 0$. We then define the following events:
\begin{align}
\mathcal{G}_{k} &\coloneqq\left\{ \forall\,l\in [k-1],\;\tau_{l}\in\left\{ \nu_{l}, \dots, \nu_{l} + \ell_{l} - 1\right\}\right\} \cap \left\{ \tau_{k} > \nu_{k} \right\},\, k \in \left[ N_{T} \right]. \label{eq:good_event_k}
%\\ \mathcal{G} &\coloneqq \mathcal{G}_{N_{T}} \cap \left\{\tau_{N_{T}+1} > \nu_{N_{T} + 1} \right\}.\label{eq:good_event}    
\end{align}
The event $\mathcal{G}_{k}$ represents the ``good event" up to the $k^{\mathrm{th}}$ restart time-step $\mathcal{G}_{k}$ in which the first $k$ changes are detected within the latencies $\ell_{l}$'s.
For notational convenience, we define $\mathcal{G}_{0}$ to be the universal space. In this section, let $\mathbb{E}$ denote the expectation under the probability measure $\mathbb{P}$ induced by executing policy $\pi$ on the PS episodic MDP, and let $\pi^{*}$ be the optimal policy over the piecewise stationary episodic MDP. For brevity and clarity of the notations, we omitted the conditioning on $s_{1}^{t}$'s, as $s_{1}^{t}$ are fixed states chosen by an oblivious adversary. Then, we have the following:
\begin{align}
    &\mathcal{R}(\pi, T) \nonumber\\
    &= \sum_{t=1}^{T} \left( V_{1}^{t,\star}(s_{1}^{t}) - V_{1}^{t,\pi}(s_{1}^{t}) \right) \nonumber\\
    &=\sum_{k = 1}^{N_{T} + 1} \sum_{t=\nu_{k-1}}^{\nu_{k}-1} \mathbb{E}\left[\sum_{h=1}^{H} r^t_{h}\left(s^{t}_{h}, (\pi^{*})_{h}^{t}\left(s^{t}_{h}\right)\right) - r^t_{h}\left(s^{t}_{h}, \pi_{h}^t\left(s^{t}_{h}\right)\right) \right] \nonumber\\ 
    & = \sum_{k = 1}^{N_{T} + 1}  \mathbb{E}\left[\sum_{t=\nu_{k-1}}^{\nu_{k}-1} \sum_{h=1}^{H} r^t_{h}\left(s^{t}_{h}, (\pi^{*})_{h}^{t}\left(s^{t}_{h}\right)\right) - r^t_{h}\left(s^{t}_{h}, \pi_{h}^t\left(s^{t}_{h}\right)\right) \right] \nonumber\\ 
    &=\sum_{k = 1}^{N_{T} + 1} \mathbb{P} \left( \mathcal{G}^{c}_{k} \right) \mathbb{E} \left[ \sum_{t=\nu_{k-1}}^{\nu_{k}-1} \sum_{h=1}^{H} r^t_{h}\left(s^{t}_{h}, (\pi^{*})_{h}^{t}\left(s^{t}_{h}\right)\right) - r^t_{h}\left(s^{t}_{h}, \pi_{h}^t\left(s^{t}_{h}\right)\right) \Bigg| \mathcal{G}^{c}_{k}\right] 
    \nonumber\\
    &\quad\enspace + \sum_{k = 1}^{N_{T} + 1} \mathbb{E} \left[ \mathbf{1}\left\{\mathcal{G}_{k}\right\} \sum_{t=\nu_{k-1}}^{\nu_{k}-1} \sum_{h=1}^{H} r^t_{h}\left(s^{t}_{h}, (\pi^{*})_{h}^{t}\left(s^{t}_{h}\right)\right) - r^t_{h}\left(s^{t}_{h}, \pi_{h}^t\left(s^{t}_{h}\right)\right) \right] \nonumber\\
    &\overset{(a)}{\leq} \sum_{k = 1}^{N_{T} + 1} 
    H\left( \nu_{k} - \nu_{k-1} \right) \mathbb{P} \left( \mathcal{G}^{c}_{k} \right) \nonumber\\
    &\quad\enspace+ \sum_{k = 1}^{N_{T} + 1} \mathbb{E} \left[ \mathbf{1}\left\{\mathcal{G}_{k}\right\} \sum_{t=\nu_{k-1}}^{\nu_{k}-1} \sum_{h=1}^{H} r^t_{h}\left(s^{t}_{h}, (\pi^{*})_{h}^{t}\left(s^{t}_{h}\right)\right) - r^t_{h}\left(s^{t}_{h}, \pi_{h}^t\left(s^{t}_{h}\right)\right)  \right]\label{eq:reg_decomp}
\end{align}
where step $(a)$ follows from the fact that the rewards are bounded in $[0,1]$. Now, define
\begin{align}
\mathcal{E}_{k} &\coloneqq\left\{ \forall\,l\in [k-1],\;\tau_{l}\in\left\{ \nu_{l}, \dots, \nu_{l} + \ell_{l} - 1\right\}\right\},\, k \in \left[ N_{T} \right]. \label{eq:good_event_E_k}
%\\ \mathcal{G} &\coloneqq \mathcal{G}_{N_{T}} \cap \left\{\tau_{N_{T}+1} > \nu_{N_{T} + 1} \right\}.\label{eq:good_event}    
\end{align}
$\mathbb{P}\left(\mathcal{G}^{c}_{k} \right)$ is upper bounded by the following modified union bound, which decomposes the bad event into false alarm events and late detection events:
\begin{align}
    \mathbb{P} \left( \mathcal{G}^{c}_{k} \right) &= \mathbb{P} \left( \left\{ \exists\, l \in [k-1],\; \tau_{l} \notin \left\{ \nu_{l}, \dots, \nu_{l} + \ell_{l} - 1 \right\} \right\} \cup \left\{ \tau_{k} \leq \nu_{k} \right\}  \right) \nonumber\\ 
    &=\sum_{l = 1}^{k-1} \mathbb{P} \left( \tau_{l} \notin \left\{ \nu_{s}, \dots, \nu_{l} + \ell_{l} - 1 \right\}, \mathcal{E}_{l-1} \right)  + \mathbb{P} \left( \tau_{k} \leq \nu_{k}, \mathcal{E}_{k-1} \right)\nonumber\\ 
    &=\sum_{l = 1}^{k-1} \mathbb{P} \left( \mathcal{E}_{l-1} \right) \mathbb{P} \left( \tau_{l} \notin \left\{ \nu_{l}, \dots, \nu_{l} + \ell_{l} - 1 \right\} \big| \mathcal{E}_{l-1} \right) + \mathbb{P} \left( \mathcal{E}_{k-1} \right) \mathbb{P} \left( \tau_{k} \leq \nu_{k} \big| \mathcal{E}_{k-1} \right)\nonumber\\ 
    &\overset{(a)}{\leq} \sum_{l = 1}^{k-1} \mathbb{P} \left( \tau_{l} \notin \left\{ \nu_{l}, \dots, \nu_{l} + \ell_{l} - 1 \right\} \big| \mathcal{E}_{l-1} \right) + \mathbb{P} \left( \tau_{k} \leq \nu_{k} \big| \mathcal{E}_{k-1} \right)\nonumber\\
    &= \sum_{l = 1}^{k} \underbrace{\mathbb{P} \left( \tau_{l} < \nu_{l} \big| \mathcal{E}_{l-1} \right)}_{\Phi_{1}} + \sum_{l = 1}^{k-1} \underbrace{\mathbb{P} \left( \tau_{l} \geq \nu_{l} + \ell_{l} \big| \mathcal{E}_{l-1} \right)}_{\Phi_{2}}\label{eq:bad_union_thm1}
\end{align}
where $(a)$ is due to the fact that $\mathbb{P} \left\{ \mathcal{E}_{k-1} \right\} \leq 1$. We then separately bound $\Phi_{1}$ and $\Phi_{2}$.

\textbf{Upper-Bounding $\Phi_1$.} Recall DARLING illustrated in Algorithm in \ref{alg:DARLING}. Between the restart episodes $\tau_{k-1}$ and $\tau_{k}$, DARLING executes forced probing (change detection) every $\lceil 1/\alpha_{k} \rceil$ rounds, where $\alpha_{k} \in (0,1)$ is the (adaptive) forced probing frequency. For each episode $t>\tau_{k-1}$, if
\begin{align}
(t-\tau_{k-1}-1)\ \mathrm{mod}\ \lceil 1/\alpha_{k} \rceil = 0,
\end{align}
then episode $t$ is a probing episode. Since the probe set is $\mathcal{P}$ is $\mathcal{S}\times\mathcal{A}\times[H]$, the agent chooses an action from the action set $\mathcal{A}$ uniformly at random, add the received reward into the reward history $\mathcal{H}^{(r)}_{(s^{t}_{h}, a^{t}_{h}, h)}$, and add the binary value into the transition history $\mathcal{H}^{(P)}_{(s^{t}_{h}, a^{t}_{h}, h, s')}$ for each $s' \in \mathcal{S}$. Otherwise, the agent runs the stationary RL algorithm $\mathcal{L}$ for that episode. 
For any $h \in [H]$, $s \in \mathcal{S}$, $a \in \mathcal{A}$, and $u \in \mathbb{N}$, we define $t_{(s,a,h), u}$ to be the $u^{\mathrm{th}}$ episode after $\tau_{l-1}$ at which $(s_{h}^{t}, a_{h}^{t})  = (s, a)$ and $(t-\tau_{l-1}-1) \mod \left\lceil 1/\alpha_{l} \right\rceil = 0$, i.e.,
\begin{align}
    t_{(s,a,h), u} \coloneqq \inf \left\{ t > t_{(s,a,h), u-1}: (s_{h}^{t}, a_{h}^{t}) = (s, a), (t-\tau_{l-1}-1) \!\!\!\mod \left\lceil 1/\alpha_{l} \right\rceil = 0 \right\}\label{eq:t'cis}
\end{align}
with $t_{(s,a,h), 0} = \tau_{l-1}$. Then, we define $n_{s,a,h}\left( t\right)$ to be the number of episodes between $\tau_{l-1}+1$ and $t$ at which $(s_{h}^{t}, a_{h}^{t}) = (s, a)$ and $(t-\tau_{l-1}-1) \mod \left\lceil 1/\alpha_{l} \right\rceil = 0$, which is the number of samples obtained due to force exploration and added in the reward history $\mathcal{H}_{(s, a,h)}^{(r)}$ and the transition history $\mathcal{H}^{(P)}_{(s^{t}_{h}, a^{t}_{h}, h, s')}$ given that there are no restarts after $\tau_{l-1}$, i.e.,
\begin{align}
    n_{(s, a, h)}\left( t\right)\coloneqq\sum_{s=\tau_{ l-1}+1}^{t}\mathds{1}\left\{(s_{h}^{t}, a_{h}^{t}) = (s, a), \left( t - \tau_{l-1} -1 \right) \!\!\!\mod \left\lceil 1/\alpha_{l} \right\rceil = 0\right\}.\label{eq:pull}
\end{align}
Recall that $\mathcal{D}$ represents the change detector, which outputs \texttt{True} if $\mathcal{D}$ detects a change. Let $\tau_{(s,a,h)}^{(r)}$ denote the stopping time at which the change detector monitoring $\mathcal{H}_{(s,a,h)}^{(r)}$ declares a change after the $(l-1)^{\mathrm{th}}$ restart episode $\tau_{l-1}$, i.e., 
\begin{align}\label{eq:tau_r}
    \tau_{(s,a,h)}^{(r)} \coloneqq \inf\left\{ u \in \mathbb{N}: \mathcal{D} \left(\mathcal{H}_{(s,a,h)}^{(r)}\right) = \texttt{True}\:\textrm{at episode}\enspace t_{(s,a,h), u} \right\}. 
\end{align}
Similarly, let $\tau_{(s,a,h,s')}^{(P)}$ denote the stopping time at which the change detector monitoring $\mathcal{H}_{(s,a,h,s')}^{(P)}$ declares a change after the $(l-1)^{\mathrm{th}}$ restart episode $\tau_{l-1}$, i.e., 
\begin{align}\label{eq:tau_P}
    \tau_{(s,a,h,s')}^{(P)} \coloneqq \inf\left\{ u \in \mathbb{N}: \mathcal{D} \left(\mathcal{H}_{(s,a,h,s')}^{(P)}\right) = \texttt{True}\:\textrm{at episode}\enspace t_{(s,a,h), u} \right\}. 
\end{align}
Let $\mathbb{P}_{\infty}$ denote the probability measure at which $f_{t} = f_{\nu_{l}}$ for all $t > \nu_{l}$, i.e., the probability measure under which the MDP becomes stationary after the $k^{\mathrm{th}}$ change-point.
Then, for all $l\in [N_{T}+1]$, we have
\begin{align}
\begin{aligned}
    &\mathbb{P} \left( \tau_{l} < \nu_{l} | \mathcal{E}_{l-1} \right) \\
    &= \mathbb{P} ( \{ \exists\, \left( s,a,h \right): s \in \mathcal{S}, a \in \mathcal{A}, h \in [H],\, \tau_{(s,a,h)}^{(r)} \leq n_{(s,a,h)} \left( \nu_{l} - 1 \right) \} \\
    &\quad\quad \cup \{ \exists (s, a, h, s'): s,s' \in \mathcal{S}, a \in \mathcal{A}, h \in [H],\, \tau_{(s,a,h,s')}^{(P)} \leq n_{(s,a,h)} \left( \nu_{l} - 1 \right) \} \big| \mathcal{E}_{l-1} ) \\
    &\overset{(a)}{\leq}\sum_{h=1}^{H} \sum_{s \in \mathcal{S}} \sum_{a \in \mathcal{A}} \mathbb{P} \left( \tau_{(s,a,h)}^{(r)} \leq  n_{(s,a,h)} \left( \nu_{l} - 1 \right) \Big| \mathcal{E}_{l-1} \right) \\
    &\quad\enspace+\sum_{h=1}^{H} \sum_{s \in \mathcal{S}} \sum_{a \in \mathcal{A}} \sum_{s' \in \mathcal{S}} \mathbb{P} \left( \tau_{(s,a,h,s')}^{(P)} \leq n_{(s,a,h)} \left( \nu_{l} - 1 \right) \Big| \mathcal{E}_{l-1} \right) \\
    &\overset{(b)}{\leq}\sum_{h=1}^{H} \sum_{s \in \mathcal{S}} \sum_{a \in \mathcal{A}} \mathbb{P}_{\infty} \left( \tau_{(s,a,h)}^{(r)} \leq T \Big| \mathcal{E}_{l-1} \right) +\sum_{h=1}^{H} \sum_{s \in \mathcal{S}} \sum_{a \in \mathcal{A}} \sum_{s' \in \mathcal{S}} \mathbb{P}_{\infty} \left( \tau_{(s,a,h,s')}^{(P)} \leq T \Big| \mathcal{E}_{l-1} \right)\\
    &\overset{(c)}{\leq}\sum_{h=1}^{H} \sum_{s \in \mathcal{S}} \sum_{a \in \mathcal{A}} \delta_{\mathrm{F}} + \sum_{h=1}^{H} \sum_{s \in \mathcal{S}} \sum_{a \in \mathcal{A}} \sum_{s' \in \mathcal{S}} \delta_{\mathrm{F}}\\
    &=HS(S+1)A \delta_{\mathrm{F}}. \label{eq:false_alarm}
\end{aligned}
\end{align}
where step $(a)$ results from a union bound. Due to the fact that the rewards at step $h$ conditioned on the same state-action pair between $\tau_{l-1}$ and $\nu_{l}$ are i.i.d. given the past event $\mathcal{E}_{l-1}$ (as there are no changes between $\tau_{l-1}$ and $\nu_{l}$), we can change the measure to $\mathbb{P}_{\infty}$ in step $(b)$. Similarly, the next state conditioned on the same current state-action pair between $\tau_{l-1}$ and $\nu_{l}$ are i.i.d. given the past event $\mathcal{E}_{l-1}$, which allows for changing measure to $\mathbb{P}_{\infty}$. In addition, because $n_{(s,a,h)} \left( \nu_{l} - 1 \right) \leq T $, we have $\{ \tau_{(s,a,h)}^{(r)} \leq n_{(s,a,h)} \left( \nu_{l} - 1 \right) \} \subseteq \{ \tau_{(s,a,h)}^{(r)} \leq T \}$ and $\{ \tau_{(s,a,h,s')}^{(P)} \leq n_{(s,a,h)} \left( \nu_{l} - 1 \right) \} \subseteq \{ \tau_{(s,a,h,s')}^{(P)} \leq T \}$. In step $(c)$, we can apply the false alarm probability upper bound for the change detectors in Section \ref{sec:detector}, as the sequence of rewards conditioned on the same state-action pair are i.i.d. sub-Gaussian, and so are the sequence of the $j^{\mathrm{th}}$ entries of the feature vector evaluated at $(s_{h+1}^{t},a')$ conditioned on the same current state-action pair.

\textbf{Upper Bounding $\Phi_2$.} Let $(s^{*}, a^{*}, h^{*})$ be the state-action-step triple at which the mean reward or the transition kernel shifts the most at $\nu_{l}$, i.e., 
\begin{align}
&(s^{*}, a^{*}, h^{*})\nonumber\\
&\!\coloneqq \underset{h\in[H], s \in \mathcal{S}, a \in \mathcal{A}}{\mathrm{argmax}} \max \!\left\{\! \left| r^{(l+1)}_{h} (s, a) - r^{(l)}_{h} (s, a)\right|, \max_{s' \in \mathcal{S}} \!\left\{\! \left| P_{h}^{(l+1)}(s'|s,a) - P_{h}^{(l)}(s'|s,a) \right| \!\right\} \!\!\right\}.
\end{align}
We define the events $\mathcal{M}_{l}$ and $\mathcal{L}_{l}$ as follows:
\begin{align}
    &\mathcal{M}_{l}\coloneqq\left\{ n_{(s^{*},a^{*},h^{*})}(\nu_{l} - 1)\geq m_{\mathcal{D}}\right\},\\
    &\mathcal{L}_{l}\coloneqq\left\{n_{(s^{*},a^{*},h^{*})}(\nu_{l} + \ell_{l} - 1) - n_{(s^{*},a^{*},h^{*})}(\nu_{l} - 1)\geq \ell_{\mathcal{D}}\!\right\}.
\end{align}
When $\tau_{l} \geq \nu_{l} + \ell_{l}$, there are at least $m_{\mathcal{D}}$ reward samples with mean $r^{(l)}_{h^{*}}(s^{*}, a^{*})$ in $\mathcal{H}_{(s^{*}, a^{*}, h^{*})}^{(r)}$ under the event $\mathcal{M}_{l}$, and there are at least $\ell_{\mathcal{D}}$ reward samples with mean $r^{(l)}_{h^{*}}(s^{*}, a^{*})$ in $\mathcal{H}_{(s^{*}, a^{*}, h^{*})}^{(r)}$ under the event $\mathcal{L}_{l}$. Similarly, given that $\tau_{l} \geq \nu_{l} + \ell_{l}$, there are at least $m_{\mathcal{D}}$  samples with mean $P_{h}^{(l)}(s'|s,a)$ in $\mathcal{H}_{(s^{*}, a^{*}, h^{*}, s')}^{(P)}$ under the event $\mathcal{M}_{l}$, and there are at least $\ell_{\mathcal{D}}$ samples with mean $P_{h}^{(l+1)}(s'|s,a)$ in $\mathcal{H}_{(s^{*}, a^{*}, h^{*}, s')}^{(P)}$ for some $s' \in \mathcal{S}$ under the event $\mathcal{L}_{l}$. Then, we have,
\begin{align}
    &\mathbb{P} \left( \tau_{l} \geq \nu_{l} + \ell_{l} \big| \mathcal{E}_{l-1} \right) \nonumber\\
    &\leq\mathbb{P} \left( \left\{ \tau_{l} \geq \nu_{l} + \ell_{l} \right\} \cup \mathcal{M}_{l}^{c} \cup \mathcal{L}_{l}^{c} \big| \mathcal{E}_{l-1} \right) \nonumber\\
    &=\mathbb{P} \left(\mathcal{M}_{l}^{c} \cup \mathcal{L}_{l}^{c} \big| \mathcal{E}_{l-1} \right) + \mathbb{P} \left( \left\{ \tau_{l} \geq \nu_{l} + \ell_{l} \right\} \cap \mathcal{M}_{l} \cap \mathcal{L}_{l} \big| \mathcal{E}_{l-1} \right) \nonumber \\
    &=\mathbb{P} \left(\mathcal{M}_{l}^{c} \cup \mathcal{L}_{l}^{c} \big| \mathcal{E}_{l-1} \right) + \mathbb{P} \left( \mathcal{M}_{l} \cap \mathcal{L}_{l} \big| \mathcal{E}_{l-1} \right) \mathbb{P} \left( \tau_{l} \geq \nu_{l} + \ell_{l} \big| \mathcal{M}_{l} \cap \mathcal{L}_{l} \cap \mathcal{E}_{l-1} \right) \nonumber \\
    &\overset{(a)}{\leq}\mathbb{P} \left(\mathcal{M}_{l}^{c} \big| \mathcal{E}_{l-1} \right) + \mathbb{P} \left(\mathcal{L}_{l}^{c} \big| \mathcal{E}_{l-1} \right) + \mathbb{P} \left( \tau_{l} \geq \nu_{l} + \ell_{l} \big| \mathcal{M}_{l} \cap \mathcal{L}_{l} \cap \mathcal{E}_{l-1} \right) \label{eq:ld_bound}
\end{align}
where step $(a)$ follows from a union bound and the fact that $\mathbb{P} \left( \mathcal{M}_{l} \cap \mathcal{L}_{l} \big| \mathcal{E}_{l-1} \right) \leq 1$. Recall that $n_{(s,a,h)} \left( t \right)$ is the number of episodes between $\tau_{l-1}+1$ and $t$ at which $s_{h}^{t}=s$ and $a_{h}^{t} = a$. Then, we have
\begin{align}
    &\mathbb{E}\left[n_{(s^{*}, a^{*}, h^{*})} \left( \nu_{l} - 1 \right) - n_{(s^{*}, a^{*}, h^{*})} \left( \tau_{l-1} \right) | \mathcal{E}_{l-1} \right] \nonumber\\
    &\overset{(a)}{\geq}\mathbb{E}\left[n_{(s^{*}, a^{*}, h^{*})} \left( \nu_{l} - 1 \right) - n_{(s^{*}, a^{*}, h^{*})} \left( \nu_{l} - m_{l} - 1 \right) | \mathcal{E}_{l-1} \right] \nonumber\\
    &= \mathbb{E}\left[ \sum_{t=\nu_{l} - m_{l}}^{\nu_{l} - 1}\mathds{1}\left\{ (s_{h^{*}}^{t}, a_{h^{*}}^{t}) = (s^{*}, a^{*}), \left( t - \tau_{l-1} -1 \right) \!\!\!\mod \left\lceil 1/\alpha_{l} \right\rceil = 0 \right\} \Bigg| \mathcal{E}_{l-1}\right] \nonumber\\
    &=\sum_{t=\nu_{l} - m_{l}}^{\nu_{l} - 1} \mathbb{P}\left((s_{h^{*}}^{t}, a_{h^{*}}^{t}) = (s^{*}, a^{*}) | \mathcal{E}_{l-1}\right) \mathds{1}\left\{\left( t-\tau_{l}-1\right)\!\!\!\mod\left\lceil1/\alpha_{l}\right\rceil=0\right\}\nonumber\\
    &\overset{(b)}{=} \sum_{t=\nu_{l} - m_{l}}^{\nu_{l} - 1}\mathbb{P}\left((s_{h^{*}}^{t}, a_{h^{*}}^{t}) = (s^{*}, a^{*}) \right)\mathds{1}\left\{ \left( t-\tau_{l}-1\right)\!\!\!\mod\left\lceil1/\alpha_{l}\right\rceil=i-1\right\} \nonumber\\
    &\overset{(c)}{\geq} \frac{p_{\mathrm{m}}}{A} \sum_{t=\nu_{l} - m_{l}}^{\nu_{l} - 1}\mathds{1}\left\{ \left( t-\tau_{l}-1\right)\!\!\!\mod\left\lceil1/\alpha_{l}\right\rceil=i-1\right\} \nonumber\\
    &\overset{(d)}{=} \frac{p_{\mathrm{m}}}{A}\left\lfloor \frac{m_{l}}{\left\lceil 1 / \alpha_{l} \right\rceil} \right\rfloor \nonumber\\
    &= \frac{p_{\mathrm{m}}}{A}\left\lceil \frac{m_{\mathcal{D}}A}{p_{\mathrm{m}}} + \frac{A^{2}\log T}{4p_{\mathrm{m}}^{2}} + \sqrt{\frac{A^{3}m_{\mathcal{D}}\log T}{2p_{\mathrm{m}}^{3}} + \frac{A^{4}(\log T)^{2}}{16p_{\mathrm{m}}^{4}}} \right\rceil, \label{eq:enough_pre_change_1}
\end{align}
and
\begin{align}
    &\mathbb{E}\left[n_{(s^{*}, a^{*}, h^{*})} \left( \nu_{l} + \ell_{l} - 1 \right) - n_{(s^{*}, a^{*}, h^{*})} \left( \nu_{l} - 1 \right) \right] \nonumber\\
    &= \mathbb{E}\left[ \sum_{t=\nu_{l}}^{\nu_{l} + \ell_{l} - 1}\mathds{1}\left\{(s_{h^{*}}^{t}, a_{h^{*}}^{t}) = (s^{*}, a^{*}), \left( t-\tau_{k}-1\right)\!\!\!\mod\left\lceil1/\alpha_{l}\right\rceil=0\right\} \right] \nonumber\\
    &=\sum_{t=\nu_{l}}^{\nu_{l} + \ell_{l} - 1} \mathbb{P}\left((s_{h^{*}}^{t}, a_{h^{*}}^{t}) = (s^{*}, a^{*}) | \mathcal{E}_{l-1}\right) \mathds{1}\left\{\left( t-\tau_{l}-1\right)\!\!\!\mod\left\lceil1/\alpha_{l}\right\rceil=0\right\}\nonumber\\
    &\overset{(e)}{=} \sum_{t=\nu_{l}}^{\nu_{l} + \ell_{l} - 1} \mathbb{P}\left((s_{h^{*}}^{t}, a_{h^{*}}^{t}) = (s^{*}, a^{*}) \right) \mathds{1}\left\{\left( t-\tau_{l}-1\right)\!\!\!\mod\left\lceil1/\alpha_{l}\right\rceil=0\right\}\nonumber\\
    &\overset{(f)}{\geq} \frac{p_{\mathrm{m}}}{A}  \sum_{t=\nu_{l}}^{\nu_{l} + \ell_{l} - 1}\mathds{1}\left\{ \left( t-\tau_{l}-1\right)\!\!\!\mod\left\lceil1/\alpha_{l}\right\rceil=0\right\} \nonumber\\
    &\overset{(g)}{=} \frac{p_{\mathrm{m}}}{A}\left\lfloor \frac{\ell_{l}}{\left\lceil 1/\alpha_{l} \right\rceil} \right\rfloor \nonumber\\
    &= \frac{p_{\mathrm{m}}}{A}\left\lceil \frac{\ell_{\mathcal{D}}A}{p_{\mathrm{m}}} + \frac{A^{2}\log T}{4p_{\mathrm{m}}^{2}} + \sqrt{\frac{A^{3}\ell_{\mathcal{D}}\log T}{2p_{\mathrm{m}}^{3}} + \frac{A^{4}(\log T)^{2}}{16p_{\mathrm{m}}^{4}}} \right\rceil. \label{eq:enough_post_change_1}
\end{align}
In step $(a)$, since $\tau_{l-1} \leq \nu_{l-1} + \ell_{l-1} - 1$ given $\mathcal{E}_{l-1}$ and $\nu_{l} - \nu_{l-1} \geq \ell_{l-1} + m_{l}$ by Assumption \ref{assum:cp_assum}, $\tau_{l-1} \leq \nu_{l} - m_{l} - 1$ and thus $n_{(s^{*}, a^{*}. h^{*})} \left( \nu_{l} - 1 \right) \leq n_{(s^{*}, a^{*}. h^{*})} \left( \nu_{l} - m_{l} - 1 \right)$. 
Steps $(b)$ and $(e)$ follow from the independence between $\{ (s_{h^{*}}^{t}, a_{h^{*}}^{t}) \}_{t>\tau_{l}: (t - \tau_{l} - 1) \!\!\!\mod \lceil 1/\alpha_{l} \rceil = 0}$ and $\mathcal{E}_{l-1}$.  
Steps $(c)$ and $(f)$ stem from the definition of $p_{\mathrm{m}}$ in Assumption \ref{assum:reachability} and the fact that each action in the exploration action set is chosen uniformly at random. 
Steps $(d)$ and $(g)$ result from the fact that $m_{l}$ and $\ell_{l}$ are divisible by $\lceil 1/\alpha_{l} \rceil$. 
Therefore,
\begin{align}
&\mathbb{P} \left(\mathcal{M}_{l}^{c} \big| \mathcal{E}_{l-1} \right) \nonumber\\
&= \mathbb{P} \left( n_{(s^{*},a^{*},h^{*})}(\nu_{l} - 1)< m_{\mathcal{D}} | \mathcal{E}_{l-1} \right) \nonumber\\
&\overset{(a)}{\leq} \exp\left(  \frac{-2\left(\mathbb{E}\left[n_{(s^{*},a^{*},h^{*})}(\nu_{l} - 1) \right] - m_{\mathcal{D}}\right)^{2}}{\sum_{t=\tau_{l}+1}^{\nu_{l}-1}\mathds{1}\left\{\left( t-\tau_{l}-1\right)\!\!\!\mod\!\left\lceil 1/\alpha_l\right\rceil=0\right\}}\right) \nonumber\\
&\overset{(b)}{\leq} \exp\left(  \frac{-2\left( p_{\mathrm{m}}\left\lceil \frac{m_{\mathcal{D}}A}{p_{\mathrm{m}}} + \frac{A^{2}\log T}{4p_{\mathrm{m}}^{2}} + \sqrt{\frac{m_{\mathcal{D}}\log (T)A^{3}}{2p_{\mathrm{m}}^{3}} + \frac{(\log T)^{2}A^{4}}{16p_{\mathrm{m}}^{4}}} \right\rceil - m_{\mathcal{D}} \right)^{2}}{A\left\lceil \frac{m_{\mathcal{D}}A}{p_{\mathrm{m}}} + \frac{A^{2}\log T}{4p_{\mathrm{m}}^{2}} + \sqrt{\frac{m_{\mathcal{D}}\log (T)A^{3}}{2p_{\mathrm{m}}^{3}} + \frac{(\log T)^{2}A^{4}}{16p_{\mathrm{m}}^{4}}} \right\rceil}\right) \nonumber\\
&\leq T^{-1}, \label{eq:M_l_bound}
\end{align}
and
\begin{align}
&\mathbb{P} \left(\mathcal{L}_{l}^{c} \big| \mathcal{E}_{l-1} \right) \nonumber\\
&= \mathbb{P} \left( n_{(s^{*},a^{*},h^{*})}(\nu_{l} + \ell_{l} - 1) - n_{(s^{*},a^{*},h^{*})}(\nu_{l} - 1) < \ell_{\mathcal{D}} | \mathcal{E}_{l-1} \right) \nonumber\\
&\overset{(c)}{\leq} \exp\left(  \frac{-2\left(\mathbb{E}\left[n_{(s^{*},a^{*},h^{*})}(\nu_{l} + \ell_{l} - 1) - n_{(s^{*},a^{*},h^{*})}(\nu_{l} - 1) \right] - \ell_{\mathcal{D}}\right)^{2}}{\sum_{t=\nu_{l}}^{\nu_{l}+\ell_{l}-1}\mathds{1}\left\{\left( t-\tau_{l}-1\right)\!\!\!\mod\!\left\lceil 1/\alpha_l\right\rceil=0\right\}}\right) \nonumber\\
&\overset{(d)}{\leq} \exp\left(  \frac{-2\left( p_{\mathrm{m}}\left\lceil \frac{\ell_{\mathcal{D}}A}{p_{\mathrm{m}}} + \frac{A^{2}\log T}{4p_{\mathrm{m}}^{2}} + \sqrt{\frac{\ell_{\mathcal{D}}\log (T)A^{3}}{2p_{\mathrm{m}}^{3}} + \frac{(\log T)^{2}A^{4}}{16p_{\mathrm{m}}^{4}}} \right\rceil - \ell_{\mathcal{D}} \right)^{2}}{ A\left\lceil \frac{\ell_{\mathcal{D}}A}{p_{\mathrm{m}}} + \frac{A^{2}\log T}{4p_{\mathrm{m}}^{2}} + \sqrt{\frac{\ell_{\mathcal{D}}\log (T)A^{3}}{2p_{\mathrm{m}}^{3}} + \frac{(\log T)^{2}A^{4}}{16p_{\mathrm{m}}^{4}}} \right\rceil}\right) \nonumber\\
&\leq T^{-1}. \label{eq:L_l_bound}
\end{align}
In steps $(a)$ and $(c)$, we apply Hoeffding's inequality, as $\{\mathds{1}\{s_{h^{*}}^{t} = s^{*}, a_{h^{*}}^{t} = a^{*}\}\}_{t>\tau_{l}: (t - \tau_{l} - 1) \!\!\!\mod \lceil 1/\alpha_{l} \rceil = 0}$ is a sequence of i.i.d. Bernoulli random variables with parameter greater than $p_{\mathrm{m}}/A$. In steps $(b)$ and $(d)$, we apply \eqref{eq:enough_post_change_1}.

Before bounding the third term in \eqref{eq:ld_bound}, recall the definitions of the stopping times of the change detectors in \eqref{eq:tau_r} and \eqref{eq:tau_P}. Without loss of generality, we assume that $\nu_{l} \leq T - \ell_{l}$; otherwise, there is no need to detect the change because the horizon will end soon after the change occurs. Let $\Pr_{\nu}$ denote the probability measure whose distribution changes at the $\nu^{\mathrm{th}}$ sample. For the case where $| r^{(l+1)}_{h} (s, a) - r^{(l)}_{h} (s, a)| \geq \max_{s' \in \mathcal{S}} \{ | P_{h}^{(l+1)}(s'|s,a) - P_{h}^{(l)}(s'|s,a)| \}$, we can derive
\begin{align}\nonumber
    &\mathbb{P} \left( \tau_{l} \geq \nu_{l} + \ell_{l} | \mathcal{E}_{l-1} \cap \mathcal{M}_{l} \cap \mathcal{L}_{l} \right)\\\nonumber
    &=\mathbb{P}(\forall\,h \in [H], \forall\,s, s' \in \mathcal{S}, \forall\,a \in \mathcal{A}, \\\nonumber
    &\quad\quad\enspace\tau_{(s,a,h)}^{(r)} > n_{(s,a,h)}\left(\nu_{l} + \ell_{l} - 1 \right), \tau_{(s,a,h,s')}^{(P)} > n_{(s,a,h)}\left(\nu_{l} + \ell_{l} - 1 \right) |\mathcal{E}_{l-1} \cap \mathcal{M}_{l} \cap \mathcal{L}_{l} )\\\nonumber
    &\overset{(a)}{\leq}\mathbb{P}\left(\tau_{(s^{*}, a^{*}, h^{*})}^{(r)} > n_{(s^{*}, a^{*}, h^{*})}\left(\nu_{l} + \ell_{l} - 1 \right)\big|\mathcal{E}_{l-1} \cap \mathcal{M}_{l} \cap \mathcal{L}_{l} \right)\\\nonumber
    &\overset{(b)}{\leq} \mathbb{P}\left(\tau_{(s^{*}, a^{*}, h^{*})}^{(r)} > n_{(s^{*}, a^{*}, h^{*})}\left(\nu_{l} - 1 \right)+\ell_{\mathcal{D}}\big|\mathcal{E}_{l-1} \cap \mathcal{M}_{l} \cap \mathcal{L}_{l}\right) \nonumber\\
    &\overset{(c)}{\leq}\sup_{\nu \in \left\{m_{\mathcal{D}}+1, \dots, T - \ell_{\mathcal{D}}\right\}}\mathbb{P}_{\nu} \left(\tau_{(s^{*}, a^{*}, h^{*})}^{(r)}\geq\nu+\ell_{\mathcal{D}} \big| \mathcal{E}_{l-1} \cap \mathcal{M}_{l} \cap \mathcal{L}_{l}\right)\nonumber\\
    &\overset{(d)}{\leq}\delta_{\mathrm{D}}. \label{eq:late_detect_r}
\end{align}
For the other case where $| r^{(l+1)}_{h} (s, a) - r^{(l)}_{h} (s, a)| < \max_{s' \in \mathcal{S}} \{ | P_{h}^{(l+1)}(s'|s,a) - P_{h}^{(l)}(s'|s,a)| \}$, let $s^{\prime *} \coloneqq \arg\max_{s' \in \mathcal{S}} \{ | P_{h}^{(l+1)}(s'|s,a) - P_{h}^{(l)}(s'|s,a)| \}$. We can similarly obtain
\begin{align}\nonumber
    &\mathbb{P} \left( \tau_{l} \geq \nu_{l} + \ell_{l} | \mathcal{E}_{l-1} \cap \mathcal{M}_{l} \cap \mathcal{L}_{l} \right)\\\nonumber
    &=\mathbb{P}(\forall\,h \in [H], \forall\,s, s' \in \mathcal{S}, \forall\,a \in \mathcal{A}, \\\nonumber
    &\quad\quad\enspace\tau_{(s,a,h)}^{(r)} > n_{(s,a,h)}\left(\nu_{l} + \ell_{l} - 1 \right), \tau_{(s,a,h,s')}^{(P)} > n_{(s,a,h)}\left(\nu_{l} + \ell_{l} - 1 \right) |\mathcal{E}_{l-1} \cap \mathcal{M}_{l} \cap \mathcal{L}_{l} )\\\nonumber
    &\overset{(e)}{\leq}\mathbb{P}\left(\tau_{(s^{*}, a^{*}, h^{*}, s^{\prime *})}^{(P)} > n_{(s^{*}, a^{*}, h^{*})}\left(\nu_{l} + \ell_{l} - 1 \right)\big|\mathcal{E}_{l-1} \cap \mathcal{M}_{l} \cap \mathcal{L}_{l} \right)\\\nonumber
    &\overset{(f)}{\leq} \mathbb{P}\left(\tau_{(s^{*}, a^{*}, h^{*}, s^{\prime *})}^{(P)} > n_{(s^{*}, a^{*}, h^{*})}\left(\nu_{l} - 1 \right)+\ell_{\mathcal{D}}\big|\mathcal{E}_{l-1} \cap \mathcal{M}_{l} \cap \mathcal{L}_{l}\right) \nonumber\\
    &\overset{(g)}{\leq}\sup_{\nu \in \left\{m_{\mathcal{D}}+1, \dots, T - \ell_{\mathcal{D}}\right\}}\mathbb{P}_{\nu} \left(\tau_{(s^{*}, a^{*}, h^{*}, s^{\prime *})}^{(P)}\geq\nu+\ell_{\mathcal{D}} \big| \mathcal{E}_{l-1} \cap \mathcal{M}_{l} \cap \mathcal{L}_{l}\right)\nonumber\\
    &\overset{(h)}{\leq}\delta_{\mathrm{D}}. \label{eq:late_detect_P}
\end{align}
In steps $(a)$ and $(e)$, DARLING restarts at the minimum of the stopping time, leading to the inequalities. 
Steps $(b)$ and $(f)$ stem from the fact that $n_{(s^{*}, a^{*}, h^{*})}\left(\nu_{l} + \ell_{l} - 1 \right) - n_{(s^{*}, a^{*}, h^{*})}\left(\nu_{l} - 1 \right) \geq \ell_{\mathcal{D}}$ given $\mathcal{L}_{l}$. 
Steps $(c)$ and $(g)$ result from the fact that $n_{(s^{*}, a^{*}, h^{*})}\left(\nu_{l} - 1 \right) \geq m_{\mathcal{D}}$ given $\mathcal{M}_{l}$ and $\nu_{l} \leq T - \ell_{l}$. Recall the definition of $t_{(s,a,h),u}$ in \eqref{eq:t'cis}. Step $(d)$ follows from the definition of latency in Section \ref{sec:detector}, as the rewards at step $h^{*}$ conditioned on the state-action pair $(s^{*}, a^{*})$ are independent sub-Gaussian whose distribution changes at $\nu$, given $\mathcal{E}_{l-1}, \mathcal{L}_{l}$, and $\mathcal{M}_{l}$. 
Step $(h)$ also follows from the definition of latency in Section \ref{sec:detector}, as the sequence of the events $\{s^{t}_{h^{*}+1} = s^{\prime *}\}$ conditioned on the current state-action pair $(s^{*}, a^{*})$ are independent sub-Gaussian whose distribution changes at $\nu$, given $\mathcal{E}_{l-1}, \mathcal{L}_{l}$, and $\mathcal{M}_{l}$. Plugging \eqref{eq:M_l_bound}, \eqref{eq:L_l_bound},  \eqref{eq:late_detect_r}, and \eqref{eq:late_detect_P} into \eqref{eq:ld_bound}, we have
\begin{equation}\label{eq:late_detect}
    \mathbb{P} \left( \tau_{l} \geq \nu_{l} + \ell_{l} \big| \mathcal{E}_{l-1} \right) \leq 2T^{-1} + \delta_{\mathrm{D}}.
\end{equation}
This completes bounding $\Phi_1$ and $\Phi_2$. Plugging \eqref{eq:false_alarm} and \eqref{eq:late_detect} into \eqref{eq:bad_union_thm1}, we obtain
\begin{equation}
\mathbb{P}\left\{\mathcal{G}^{c}_{k}\right\}\leq kHS(S+1)A \delta_{\mathrm{F}} + \left( k-1 \right)\left(2T^{-1} + \delta_{\mathrm{D}} \right)
\label{eq:bad_event}.
\end{equation}
This bounds the first term in \eqref{eq:reg_decomp}.

For convenience in bounding the second term in \eqref{eq:reg_decomp}, we define $\bar{\alpha} \coloneqq \max_{k = 1, \dots, N_{T} + 1} \alpha_{k}$. For any $k\in\left[ N_{T} + 1\right]$, if $\left( t-\tau_{k-1}-1\mod\left\lceil 1/\alpha_{k}\right\rceil\right) \neq 0$, then $A_t$ follows the stationary RL algorithm $\mathcal{L}$. Thus, the second term in \eqref{eq:reg_decomp} can then be decomposed as follows:
\begin{align}
    &\mathbb{E} \left[ \mathbf{1}\left\{\mathcal{G}_{k}\right\} \sum_{t=\nu_{k-1}}^{\nu_{k}-1} \sum_{h=1}^{H} r^t_{h}\left(s^{t}_{h}, (\pi^{*})_{h}^{t}\left(s_{h}\right)\right) - r^t_{h}\left(s_{h}, \pi_{h}^t\left(s_{h}\right)\right) \right] \nonumber \\
    &\overset{(a)}{\leq} \ell_{k-1} + \left\lceil\frac{\nu_{k}-\nu_{k-1}}{\left\lceil 1/\alpha_{k}\right\rceil}\right\rceil \nonumber\\
    &\quad\enspace+ \mathbb{E}\left[\mathds{1}\{\mathcal{G}_{k}\}\sum_{t=\tau_{k-1} + 1: \left( t-\tau_{k-1}-1 \right)\!\!\!\mod \lceil 1/\alpha_{k} \rceil \neq 0}^{\nu_{k}-1} \sum_{h=1}^{H} r^t_{h}\left(s_{h}, (\pi^{*})_{h}^{t}\left(s_{h}\right)\right) - r^t_{h}\left(s_{h}, \pi_{h}^t\left(s_{h}\right)\right)\right] \nonumber \\
    &\overset{(b)}{\leq} \ell_{k-1} +  \left[ \alpha_{k} \left( \nu_{k}-\nu_{k-1}\right) + 1 \right] + R_{\mathcal{L}}\left( \nu_{k}-\nu_{k-1}\right) \nonumber \\
    &\leq \ell_{k-1} + \left[ \bar{\alpha} \left( \nu_{k}-\nu_{k-1}\right) + 1 \right] +R_{\mathcal{L}} \left( \nu_{k}-\nu_{k-1} \right)
    \label{eq:bound_inter}
\end{align}
where in step $(a)$, the first term bounds the regret due to the delay of the change detector, and the second term bounds the regret incurred due to probing. In step $(b)$, as the rewards and the trajectories in the history of the  $\mathcal{L}$ are independent of those in $\mathcal{H}_{s,a,h}^{(r)}$ and $\mathcal{H}_{s,a,h,j,a'}^{(P)}$, and that $\mathcal{G}_{k}$ only depends on samples in $\mathcal{H}_{s,a,h}^{(r)}$ and $\mathcal{H}_{s,a,h,j,a'}^{(P)}$, the regret bound of $\mathcal{L}$ applies. 
We also apply the fact that $R_{\mathcal{L}}\left( T \right)$ is increasing with $T$. For the tabular MDP case, we can plug \eqref{eq:bound_inter} and \eqref{eq:bad_event} into \eqref{eq:reg_decomp} and obtain:
\begin{align}
&\mathcal{R}(\pi, T) \nonumber\\
&\leq \sum_{k = 1}^{N_{T} + 1}
    H\left( \nu_{k} - \nu_{k-1} \right) \left(kHd^{2}A \delta_{\mathrm{F}} + \left( k-1 \right)\left(2T^{-1} + \delta_{\mathrm{D}} \right) \right)\nonumber\\
    &\quad\enspace+ \sum_{k = 1}^{N_{T} + 1} \left(  H\ell_{k-1} +  H\left[ \bar{\alpha} \left( \nu_{k}-\nu_{k-1}\right) + 1 \right] +R_{\mathcal{L}} \left( \nu_{k}-\nu_{k-1} \right) \right) \nonumber\\
    &\leq \sum_{k = 1}^{N_{T} + 1}
    H\left( \nu_{k} - \nu_{k-1} \right) \left((N_{T} + 1)Hd^{2}A \delta_{\mathrm{F}} + N_{T}\left(2T^{-1} + \delta_{\mathrm{D}} \right) \right) \nonumber\\
    &\quad\enspace+ \sum_{k = 1}^{N_{T} + 1} \left( H\ell_{k-1} + H\left[ \bar{\alpha} \left( \nu_{k}-\nu_{k-1}\right) + 1 \right] +R_{\mathcal{L}} \left( \nu_{k}-\nu_{k-1} \right) \right) \nonumber\\
    &=T H^{2}S(S+1)A\left( N_{T} + 1\right)\delta_{\mathrm{F}} + 2HN_{T} + THN_{T} \delta_{\mathrm{D}} + H\sum_{k=1}^{N_{T}} \ell_{k} + H\left( \bar{\alpha} T + 1 \right) \nonumber\\
    &\quad+ H\sum_{k = 1}^{N_{T} + 1} R_{\mathcal{L}} \left( \nu_{k}-\nu_{k-1} \right) \nonumber\\
    &\overset{(a)}{\leq} T H^{2}S(S+1)A\left( N_{T} + 1\right)\delta_{\mathrm{F}} + 2N_{T} + T N_{T} H \delta_{\mathrm{D}} + H\sum_{k=1}^{N_{T}} \ell_{k} + H\left( \bar{\alpha} T + 1 \right) \nonumber\\
    &\quad+ \left( N_{T} + 1 \right) R_{\mathcal{L}}\left( \frac{T}{N_{T}+1} \right) \nonumber\\
    &\overset{(b)}{=}\tilde{O}(\sqrt{SAH^{3}TN_{T}}).
\end{align}
In step $(a)$, we apply Jensen's inequality to the concave function $R_{\mathcal{L}}$. In step $(b)$, we use the fact that $\mathcal{R}_{\mathcal{L}}(n) = \tilde{O}(\sqrt{SAH^{3}n})$, $\bar{\alpha} = \tilde{O}(\sqrt{SAHN_{T}/T})$, $\sum_{k=1}^{N_{T}} 1/\alpha_{k} = \tilde{O}(\sqrt{TN_{T}}/SAH)$, and $\delta_{\mathrm{F}} = \delta_{\mathrm{D}} = T^{-\gamma}$ for some $\gamma>1$. This completes the proof.
\end{proof}

\subsection{Proof of Theorem \ref{thrm:darling_reg_lin_mdps}}
\label{app:darling_reg_lin_mdps_proof}
\begin{proof}

The proof proceeds similar to the one for Theorem \ref{thrm:darling_reg}. The main difference is that we need to take the error event of calibration into consideration and add the regret incurred during calibration. Recall that we are considering a linear MDP with $T$ episodes, horizon $H$, and $N_{T}$ change-points. We reintroduce the notations necessary for the proof: The $k^{\mathrm{th}}$ change-point is denoted by $\nu_{k}$, and that $\nu_{0} \coloneqq 1$ and $\nu_{N_{T} + 1} \coloneqq T+1$.
Let $\{P_h^{(k)}\}_{h\in[H]}$ and $\{r_h^{(k)}\}_{h\in[H]}$ denote the transition kernel and the mean reward function over the $k^{\mathrm{th}}$ stationary segment $\{ \nu_{k-1}, \dots, \nu_{k} - 1\}$. Let $\tau_{k}$ be the $k^{\mathrm{th}}$ episode at which DARLING restarts, i.e., for $k \in \mathbb{N}$,
\begin{equation}
    \tau_{k} \coloneqq \inf \{ t > \tau_{k-1}: \texttt{Test 1} \text{ or } \texttt{Test 2} \text{ signals } \texttt{Restart} \}
\end{equation}
with $\tau_{0} \coloneqq 0$. Let $\hat{\mathcal{P}}_{h}^{(k)}$ denote the empirical probe set chosen during the calibration over the $k^{\mathrm{th}}$ stationary segment. We then define the following events for all $k \in \left[ N_{T} \right]$, which is different from the one in the proof of Theorem \ref{thrm:darling_reg}:
\begin{align}
\mathcal{G}_{k} &\coloneqq \left\{ \forall\,l\in [k], \forall\, h \in [H], \mathcal{P}_{h}^{(l)} = \hat{\mathcal{P}}_{h}^{(l)}\right\} \cap \left\{ \forall\,l\in [k-1], \;\tau_{l}\in\left\{ \nu_{l}, \dots, \nu_{l} + \ell_{l} - 1\right\}\right\} \nonumber\\
&\quad\quad\cap \left\{ \tau_{k} > \nu_{k} \right\}. \label{eq:good_event_k_linear}  
\end{align}
The event $\mathcal{G}_{k}$ represents the ``good event" up to the $k^{\mathrm{th}}$ restart time-step $\mathcal{G}_{k}$ in which the first $k$ changes are detected within the latencies $\ell_{l}$'s, and the probe sets $\hat{P}_{h}^{(l)}$ for the first $k$ stationary segments are successfully identified.
For notational convenience, we define $\mathcal{G}_{0}$ to be the universal space. In this section, let $\mathbb{E}$ denote the expectation under the probability measure $\mathbb{P}$ induced by executing policy $\pi$ on the PS episodic linear MDP, and let $\pi^{*}$ be the optimal policy over the piecewise stationary episodic MDP. For brevity and clarity of the notations, we omitted the conditioning on $s_{1}^{t}$'s, as $s_{1}^{t}$ are fixed states chosen by an oblivious adversary. Then, we have the following:
\begin{align}
    &\mathcal{R}(\pi, T) \nonumber\\
    &= \sum_{t=1}^{T} \left( V_{1}^{t,\star}(s_{1}^{t}) - V_{1}^{t,\pi}(s_{1}^{t}) \right) \nonumber\\
    &=\sum_{k = 1}^{N_{T} + 1} \sum_{t=\nu_{k-1}}^{\nu_{k}-1} \mathbb{E}\left[\sum_{h=1}^{H} r^t_{h}\left(s^{t}_{h}, (\pi^{*})_{h}^{t}\left(s^{t}_{h}\right)\right) - r^t_{h}\left(s^{t}_{h}, \pi_{h}^t\left(s^{t}_{h}\right)\right) \right] \nonumber\\ 
    & = \sum_{k = 1}^{N_{T} + 1}  \mathbb{E}\left[\sum_{t=\nu_{k-1}}^{\nu_{k}-1} \sum_{h=1}^{H} r^t_{h}\left(s^{t}_{h}, (\pi^{*})_{h}^{t}\left(s^{t}_{h}\right)\right) - r^t_{h}\left(s^{t}_{h}, \pi_{h}^t\left(s^{t}_{h}\right)\right) \right] \nonumber\\ 
    &=\sum_{k = 1}^{N_{T} + 1} \mathbb{P} \left( \mathcal{G}^{c}_{k} \right) \mathbb{E} \left[ \sum_{t=\nu_{k-1}}^{\nu_{k}-1} \sum_{h=1}^{H} r^t_{h}\left(s^{t}_{h}, (\pi^{*})_{h}^{t}\left(s^{t}_{h}\right)\right) - r^t_{h}\left(s^{t}_{h}, \pi_{h}^t\left(s^{t}_{h}\right)\right) \Bigg| \mathcal{G}^{c}_{k}\right] 
    \nonumber\\
    &\quad\enspace + \sum_{k = 1}^{N_{T} + 1} \mathbb{E} \left[ \mathbf{1}\left\{\mathcal{G}_{k}\right\} \sum_{t=\nu_{k-1}}^{\nu_{k}-1} \sum_{h=1}^{H} r^t_{h}\left(s^{t}_{h}, (\pi^{*})_{h}^{t}\left(s^{t}_{h}\right)\right) - r^t_{h}\left(s^{t}_{h}, \pi_{h}^t\left(s^{t}_{h}\right)\right) \right] \nonumber\\
    &\overset{(a)}{\leq} \sum_{k = 1}^{N_{T} + 1} 
    H\left( \nu_{k} - \nu_{k-1} \right) \mathbb{P} \left( \mathcal{G}^{c}_{k} \right) \nonumber\\
    &\quad\enspace+ \sum_{k = 1}^{N_{T} + 1} \mathbb{E} \left[ \mathbf{1}\left\{\mathcal{G}_{k}\right\} \sum_{t=\nu_{k-1}}^{\nu_{k}-1} \sum_{h=1}^{H} r^t_{h}\left(s^{t}_{h}, (\pi^{*})_{h}^{t}\left(s^{t}_{h}\right)\right) - r^t_{h}\left(s^{t}_{h}, \pi_{h}^t\left(s^{t}_{h}\right)\right)  \right]\label{eq:reg_decomp_linear}
\end{align}
where step $(a)$ follows from the fact that the rewards are bounded in $[0,1]$. Now, define
\begin{align}
\mathcal{E}_{k} &\coloneqq\left\{ \forall\,l\in [k-1], \forall\, h \in [H],\; \mathcal{P}_{h}^{(l)} = \hat{\mathcal{P}}_{h}^{(l)},\;\tau_{l}\in\left\{ \nu_{l}, \dots, \nu_{l} + \ell_{l} - 1\right\}\right\},\, k \in \left[ N_{T} \right]. \label{eq:good_event_E_k_linear}
\end{align}
$\mathbb{P}\left(\mathcal{G}^{c}_{k} \right)$ is upper bounded by the following modified union bound, which decomposes the bad event into false alarm events and late detection events:
\begin{align}
    &\mathbb{P} \left( \mathcal{G}^{c}_{k} \right) \nonumber\\
    &= \mathbb{P} \left(\left\{ \exists\,l\in [k], \exists\, h \in [H], \mathcal{P}_{h}^{(l)} \neq \hat{\mathcal{P}}_{h}^{(l)}\right\} \cup \left\{ \exists\, l \in [k-1],\; \tau_{l} \notin \left\{ \nu_{l}, \dots, \nu_{l} + \ell_{l} - 1 \right\} \right\} \cup \left\{ \tau_{k} \leq \nu_{k} \right\}  \right) \nonumber\\ 
    &\overset{(a)}{\leq}\sum_{l = 1}^{k-1} \mathbb{P} \left( \tau_{l} \notin \left\{ \nu_{s}, \dots, \nu_{l} + \ell_{l} - 1 \right\}, \mathcal{E}_{l-1} \right) + \sum_{l = 1}^{k} \sum_{h=1}^{H} \mathbb{P} \left( \tau_{l} \notin \left\{ \nu_{s}, \dots, \nu_{l} + \ell_{l} - 1 \right\}, \mathcal{E}_{l-1} \right)  \nonumber\\
    &\quad\quad\mathbb{P} \left( \tau_{k} \leq \nu_{k}, \mathcal{E}_{k-1} \right)\nonumber\\ 
    &=\sum_{l = 1}^{k-1} \mathbb{P} \left( \mathcal{E}_{l-1} \right) \mathbb{P} \left( \tau_{l} \notin \left\{ \nu_{l}, \dots, \nu_{l} + \ell_{l} - 1 \right\} \big| \mathcal{E}_{l-1} \right) + \mathbb{P} \left( \mathcal{E}_{k-1} \right) \mathbb{P} \left( \tau_{k} \leq \nu_{k} \big| \mathcal{E}_{k-1} \right)\nonumber\\
    &\quad\quad + \sum_{l = 1}^{k} \sum_{h=1}^{H} \mathbb{P}(\mathcal{E}_{l-1}) \mathbb{P} \left( \tau_{l} \notin \left\{ \nu_{s}, \dots, \nu_{l} + \ell_{l} - 1 \right\}| \mathcal{E}_{l-1} \right)\nonumber\\
    &\overset{(b)}{\leq} \sum_{l = 1}^{k-1} \mathbb{P} \left( \tau_{l} \notin \left\{ \nu_{l}, \dots, \nu_{l} + \ell_{l} - 1 \right\} \big| \mathcal{E}_{l-1} \right) + \mathbb{P} \left( \tau_{k} \leq \nu_{k} \big| \mathcal{E}_{k-1} \right)\nonumber\\
    &\quad\quad + \sum_{l = 1}^{k} \sum_{h=1}^{H}  \mathbb{P} \left( \tau_{l} \notin \left\{ \nu_{s}, \dots, \nu_{l} + \ell_{l} - 1 \right\}| \mathcal{E}_{l-1} \right)\nonumber\\
    &= \sum_{l = 1}^{k} \underbrace{\mathbb{P} \left( \tau_{l} < \nu_{l} \big| \mathcal{E}_{l-1} \right)}_{\Phi_{1}} + \sum_{l = 1}^{k-1} \underbrace{\mathbb{P} \left( \tau_{l} \geq \nu_{l} + \ell_{l} \big| \mathcal{E}_{l-1} \right)}_{\Phi_{2}} \nonumber\\
    &\quad\quad+ \sum_{l = 1}^{k} \sum_{h=1}^{H} \underbrace{ \mathbb{P} \left( \tau_{l} \notin \left\{ \nu_{s}, \dots, \nu_{l} + \ell_{l} - 1 \right\}| \mathcal{E}_{l-1} \right)}_{\Phi_{3}}\label{eq:bad_union_thm1_linear}.
\end{align}
where step $(a)$ is owing to union bound and step $(b)$ is due to the fact that $\mathbb{P} \left\{ \mathcal{E}_{k-1} \right\} \leq 1$. We then separately bound $\Phi_{1}$, $\Phi_{2}$, and $\Phi_{3}$

\textbf{Upper-Bounding $\Phi_1$.} Recall DARLING illustrated in Algorithm in \ref{alg:DARLING_lin_mdps}. Between the restart episodes $\tau_{k-1}$ and $\tau_{k}$, DARLING executes forced probing (change detection) every $\lceil 1/\alpha_{k} \rceil$ rounds, where $\alpha_{k} \in (0,1)$ is (adaptive) forced probing frequency. For each episode $t>\tau_{k-1}$, if
\begin{align}
(t-\tau_{k-1}-1)\ \mathrm{mod}\ \lceil 1/\alpha_{k} \rceil = 0,
\end{align}
then episode $t$ is a probing episode. Since the probe set is $\mathcal{P}$ is $\mathcal{S}\times\mathcal{A}\times[H]$, the agent chooses an action from the action set $\mathcal{A}$ uniformly at random, add the received reward into the reward history $\mathcal{H}^{(r)}_{(s^{t}_{h}, a^{t}_{h}, h)}$, and add the entries of  feature vector $\phi(s^{t}_{h+1},a')$ into the transition history $\mathcal{H}^{(P)}_{(s^{t}_{h}, a^{t}_{h}, h, j, a')}$ for each $j \in [d]$ and $a' \in \mathcal{A}$. Otherwise, the agent runs the stationary RL algorithm $\mathcal{L}$ for that episode. 
For any $h \in [H]$, $(s,a) \in \mathcal{P}_{h}$, and $u \in \mathbb{N}$, we define $t_{(s,a,h), u}$ to be the $u^{\mathrm{th}}$ episode after $\tau_{l-1}$ at which $(s_{h}^{t}, a_{h}^{t})  = (s, a)$ and $(t-\tau_{l-1}-1) \mod \left\lceil 1/\alpha_{l} \right\rceil = 0$, i.e.,
\begin{align}
    t_{(s,a,h), u} \coloneqq \inf \left\{ t > t_{(s,a,h), u-1}: (s_{h}^{t}, a_{h}^{t}) = (s, a), (t-\tau_{l-1}-1) \!\!\!\mod \left\lceil 1/\alpha_{l} \right\rceil = 0 \right\}\label{eq:t'cis_linear}
\end{align}
with $t_{(s,a,h), 0} = \tau_{l-1}$. Then, we define $n_{s,a,h}\left( t\right)$ to be the number of episodes between $\tau_{l-1}+1$ and $t$ at which $(s_{h}^{t}, a_{h}^{t}) = (s, a)$ and $(t-\tau_{l-1}-1) \mod \left\lceil 1/\alpha_{l} \right\rceil = 0$, which is the number of samples obtained due to force exploration and added in the reward history $\mathcal{H}_{(s, a,h)}^{(r)}$ and the transition history $\mathcal{H}^{(P)}_{(s^{t}_{h}, a^{t}_{h}, h, j, a')}$ given that there are no restarts after $\tau_{l-1}$, i.e.,
\begin{align}
    n_{(s, a, h)}\left( t\right)\coloneqq\sum_{s=\tau_{ l-1}+1}^{t}\mathds{1}\left\{(s_{h}^{t}, a_{h}^{t}) = (s, a), \left( t - \tau_{l-1} -1 \right) \!\!\!\mod \left\lceil 1/\alpha_{l} \right\rceil = 0\right\}.\label{eq:pull_linear}
\end{align}
Recall that $\mathcal{D}$ represents the change detector, which outputs \texttt{True} if $\mathcal{D}$ detects a change. Let $\tau_{(s,a,h)}^{(r)}$ denote the stopping time at which the change detector monitoring $\mathcal{H}_{(s,a,h)}^{(r)}$ declares a change after the $(l-1)^{\mathrm{th}}$ restart episode $\tau_{l-1}$, i.e., 
\begin{align}\label{eq:tau_r}
    \tau_{(s,a,h)}^{(r)} \coloneqq \inf\left\{ u \in \mathbb{N}: \mathcal{D} \left(\mathcal{H}_{(s,a,h)}^{(r)}\right) = \texttt{True}\:\textrm{at episode}\enspace t_{(s,a,h), u} \right\}. 
\end{align}
Similarly, let $\tau_{(s,a,h,j,a')}^{(P)}$ denote the stopping time at which the change detector monitoring $\mathcal{H}_{(s,a,h,j,a')}^{(P)}$ declares a change after the $(l-1)^{\mathrm{th}}$ restart episode $\tau_{l-1}$, i.e., 
\begin{align}\label{eq:tau_P_linear}
    \tau_{(s,a,h,j,a')}^{(P)} \coloneqq \inf\left\{ u \in \mathbb{N}: \mathcal{D} \left(\mathcal{H}_{(s,a,h,j,a')}^{(P)}\right) = \texttt{True}\:\textrm{at episode}\enspace t_{(s,a,h), u} \right\}. 
\end{align}
Let $\mathbb{P}_{\infty}$ denote the probability measure at which $f_{t} = f_{\nu_{l}}$ for all $t > \nu_{l}$, i.e., the probability measure under which the MDP becomes stationary after the $k^{\mathrm{th}}$ change-point.
Then, for all $l\in [N_{T}+1]$, we have
\begin{align}
\begin{aligned}
    &\mathbb{P} \left( \tau_{l} < \nu_{l} | \mathcal{E}_{l-1} \right) \\
    &= \mathbb{P} ( \{ \exists \left( s,a,h \right): h \in [H], (s, a) \in \mathcal{P}_{h},\, \tau_{(s,a,h)}^{(r)} \leq n_{(s,a,h)} \left( \nu_{l} - 1 \right) \} \\
    &\quad\quad \cup \{ \exists (s, a, h, j,a'): h \in [H], (s, a) \in \mathcal{P}_{h}, a' \in \mathcal{A}, j \in [d]\, \tau_{(s,a,h,j,a')}^{(P)} \leq n_{(s,a,h)} \left( \nu_{l} - 1 \right) \} \big| \mathcal{E}_{l-1} ) \\
    &\overset{(a)}{\leq}\sum_{h=1}^{H} \sum_{(s,a) \in \mathcal{P}_{h}} \left( \tau_{(s,a,h)}^{(r)} \leq  n_{(s,a,h)} \left( \nu_{l} - 1 \right) \Big| \mathcal{E}_{l-1} \right) \\
    &\quad\enspace+\sum_{h=1}^{H} \sum_{(s,a) \in \mathcal{P}_{h}} \sum_{j=1}^{d} \sum_{a' \in \mathcal{A}} \mathbb{P} \left( \tau_{(s,a,h,j,a')}^{(P)} \leq n_{(s,a,h)} \left( \nu_{l} - 1 \right) \Big| \mathcal{E}_{l-1} \right) \\
    &\overset{(b)}{\leq}\sum_{h=1}^{H} \sum_{(s,a) \in \mathcal{P}_{h}} \mathbb{P}_{\infty} \left( \tau_{(s,a,h)}^{(r)} \leq T \Big| \mathcal{E}_{l-1} \right) +\sum_{h=1}^{H} \sum_{(s,a) \in \mathcal{P}_{h}} \sum_{j=1}^{d} \sum_{a' \in \mathcal{A}} \mathbb{P}_{\infty} \left( \tau_{(s,a,h,j,a')}^{(P)} \leq T \Big| \mathcal{E}_{l-1} \right)\\
    &\overset{(c)}{\leq}\sum_{h=1}^{H} \sum_{(s,a) \in \mathcal{P}_{h}} \sum_{j=1}^{d} \delta_{\mathrm{F}} + \sum_{h=1}^{H} \sum_{(s,a) \in \mathcal{P}_{h}} \sum_{j=1}^{d} \sum_{a' \in \mathcal{A}} \delta_{\mathrm{F}}\\
    &=Hd^{2}(A+1) \delta_{\mathrm{F}}. \label{eq:false_alarm_linear}
\end{aligned}
\end{align}
where step $(a)$ results from a union bound. Due to the fact that the rewards at step $h$ conditioned on the same state-action pair between $\tau_{l-1}$ and $\nu_{l}$ are i.i.d. given the past event $\mathcal{E}_{l-1}$ (as there are no changes between $\tau_{l-1}$ and $\nu_{l}$), we can change the measure to $\mathbb{P}_{\infty}$ in step $(b)$. Similarly, the next state conditioned on the same current state-action pair between $\tau_{l-1}$ and $\nu_{l}$ are i.i.d. given the past event $\mathcal{E}_{l-1}$, which allows for changing measure to $\mathbb{P}_{\infty}$. In addition, because $n_{(s,a,h)} \left( \nu_{l} - 1 \right) \leq T $, we have $\{ \tau_{(s,a,h)}^{(r)} \leq n_{(s,a,h)} \left( \nu_{l} - 1 \right) \} \subseteq \{ \tau_{(s,a,h)}^{(r)} \leq T \}$ and $\{ \tau_{(s,a,h,j,a')}^{(P)} \leq n_{(s,a,h)} \left( \nu_{l} - 1 \right) \} \subseteq \{ \tau_{(s,a,h,j,a')}^{(P)} \leq T \}$. In step $(c)$, we can apply the false alarm probability upper bound for the change detectors in Section \ref{sec:detector}, as the sequence of rewards conditioned on the same state-action pair are i.i.d. sub-Gaussian, and so are the sequence of the $j^{\mathrm{th}}$ entries of the feature vector evaluated at $(s_{h+1}^{t},a')$ conditioned on the same current state-action pair.

\textbf{Upper Bounding $\Phi_2$.} Let $(s^{*}, a^{*}, h^{*})$ be the state-action-step triple at which the mean reward or the transition kernel shifts the most at $\nu_{l}$, i.e., 
\begin{align}
(s^{*}, a^{*}, h^{*}) &\coloneqq \underset{h\in[H], s \in \mathcal{S}_{\mathrm{e},h}, a \in \mathcal{A}_{\mathrm{e},h}^{s}}{\mathrm{argmax}} \max \{ | r^{(l+1)}_{h} (s, a) - r^{(l)}_{h} (s, a)|, \nonumber\\
&\quad\max_{j \in [d], a' \in \mathcal{A}} \{ | \mathbb{E}_{s_{h+1}^{t} \sim P_{h+1}^{(l+1)}} [[\phi (s_{h+1}^{t}, a')]_{j}] - \mathbb{E}_{s_{h+1}^{t} \sim P_{h+1}^{(l)}} [[\phi (s_{h+1}^{t}, a')]_{j}] | \} \}.
\end{align}
We define the events $\mathcal{M}_{l}$ and $\mathcal{L}_{l}$ as follows:
\begin{align}
    &\mathcal{M}_{l}\coloneqq\left\{ n_{(s^{*},a^{*},h^{*})}(\nu_{l} - 1)\geq m_{\mathcal{D}}\right\},\\
    &\mathcal{L}_{l}\coloneqq\left\{n_{(s^{*},a^{*},h^{*})}(\nu_{l} + \ell_{l} - 1) - n_{(s^{*},a^{*},h^{*})}(\nu_{l} - 1)\geq \ell_{\mathcal{D}}\!\right\}.
\end{align}
When $\tau_{l} \geq \nu_{l} + \ell_{l}$, there are at least $m_{\mathcal{D}}$ reward samples with mean $r^{(l)}_{h^{*}}(s^{*}, a^{*})$ in $\mathcal{H}_{(s^{*}, a^{*}, h^{*})}^{(r)}$ under the event $\mathcal{M}_{l}$, and there are at least $\ell_{\mathcal{D}}$ reward samples with mean $r^{(l)}_{h^{*}}(s^{*}, a^{*})$ in $\mathcal{H}_{(s^{*}, a^{*}, h^{*})}^{(r)}$ under the event $\mathcal{L}_{l}$. Similarly, given that $\tau_{l} \geq \nu_{l} + \ell_{l}$, there are at least $m_{\mathcal{D}}$  samples with mean $\mathbb{E}_{s_{h+1}^{t} \sim P_{h+1}^{(l)}} [[\phi (s_{h+1}^{t}, a')]_{j}]$ in $\mathcal{H}_{(s^{*}, a^{*}, h^{*}, j, a')}^{(P)}$ under the event $\mathcal{M}_{l}$, and there are at least $\ell_{\mathcal{D}}$ samples with mean $\mathbb{E}_{s_{h+1}^{t} \sim P_{h+1}^{(l+1)}} [[\phi (s_{h+1}^{t}, a')]_{j}]$ in $\mathcal{H}_{(s^{*}, a^{*}, h^{*}, j, a')}^{(P)}$ under the event $\mathcal{L}_{l}$. Then, we have,
\begin{align}
    &\mathbb{P} \left( \tau_{l} \geq \nu_{l} + \ell_{l} \big| \mathcal{E}_{l-1} \right) \nonumber\\
    &\leq\mathbb{P} \left( \left\{ \tau_{l} \geq \nu_{l} + \ell_{l} \right\} \cup \mathcal{M}_{l}^{c} \cup \mathcal{L}_{l}^{c} \big| \mathcal{E}_{l-1} \right) \nonumber\\
    &=\mathbb{P} \left(\mathcal{M}_{l}^{c} \cup \mathcal{L}_{l}^{c} \big| \mathcal{E}_{l-1} \right) + \mathbb{P} \left( \left\{ \tau_{l} \geq \nu_{l} + \ell_{l} \right\} \cap \mathcal{M}_{l} \cap \mathcal{L}_{l} \big| \mathcal{E}_{l-1} \right) \nonumber \\
    &=\mathbb{P} \left(\mathcal{M}_{l}^{c} \cup \mathcal{L}_{l}^{c} \big| \mathcal{E}_{l-1} \right) + \mathbb{P} \left( \mathcal{M}_{l} \cap \mathcal{L}_{l} \big| \mathcal{E}_{l-1} \right) \mathbb{P} \left( \tau_{l} \geq \nu_{l} + \ell_{l} \big| \mathcal{M}_{l} \cap \mathcal{L}_{l} \cap \mathcal{E}_{l-1} \right) \nonumber \\
    &\overset{(a)}{\leq}\mathbb{P} \left(\mathcal{M}_{l}^{c} \big| \mathcal{E}_{l-1} \right) + \mathbb{P} \left(\mathcal{L}_{l}^{c} \big| \mathcal{E}_{l-1} \right) + \mathbb{P} \left( \tau_{l} \geq \nu_{l} + \ell_{l} \big| \mathcal{M}_{l} \cap \mathcal{L}_{l} \cap \mathcal{E}_{l-1} \right) \label{eq:ld_bound_linear}
\end{align}
where step $(a)$ follows from a union bound and the fact that $\mathbb{P} \left( \mathcal{M}_{l} \cap \mathcal{L}_{l} \big| \mathcal{E}_{l-1} \right) \leq 1$. Recall that $n_{(s,a,h)} \left( t \right)$ is the number of episodes between $\tau_{l-1}+1$ and $t$ at which $s_{h}^{t}=s$ and $a_{h}^{t} = a$, and that $N_{\mathrm{e}} = \max_{h \in [H], s \in \mathcal{S}_{\mathrm{e},h}} N^{s}_{\mathrm{e},h}$ in Definition \ref{def:sep_scaling}. Then, we have
\begin{align}
    &\mathbb{E}\left[n_{(s^{*}, a^{*}, h^{*})} \left( \nu_{l} - 1 \right) - n_{(s^{*}, a^{*}, h^{*})} \left( \tau_{l-1} \right) | \mathcal{E}_{l-1} \right] \nonumber\\
    &\overset{(a)}{\geq}\mathbb{E}\left[n_{(s^{*}, a^{*}, h^{*})} \left( \nu_{l} - 1 \right) - n_{(s^{*}, a^{*}, h^{*})} \left( \nu_{l} - m_{l} - 1 \right) | \mathcal{E}_{l-1} \right] \nonumber\\
    &= \mathbb{E}\left[ \sum_{t=\nu_{l} - m_{l}}^{\nu_{l} - 1}\mathds{1}\left\{ (s_{h^{*}}^{t}, a_{h^{*}}^{t}) = (s^{*}, a^{*}), \left( t - \tau_{l-1} -1 \right) \!\!\!\mod \left\lceil 1/\alpha_{l} \right\rceil = 0 \right\} \Bigg| \mathcal{E}_{l-1}\right] \nonumber\\
    &=\sum_{t=\nu_{l} - m_{l}}^{\nu_{l} - 1} \mathbb{P}\left((s_{h^{*}}^{t}, a_{h^{*}}^{t}) = (s^{*}, a^{*}) | \mathcal{E}_{l-1}\right) \mathds{1}\left\{\left( t-\tau_{l}-1\right)\!\!\!\mod\left\lceil1/\alpha_{l}\right\rceil=0\right\}\nonumber\\
    &\overset{(b)}{=} \sum_{t=\nu_{l} - m_{l}}^{\nu_{l} - 1}\mathbb{P}\left((s_{h^{*}}^{t}, a_{h^{*}}^{t}) = (s^{*}, a^{*}) \right)\mathds{1}\left\{ \left( t-\tau_{l}-1\right)\!\!\!\mod\left\lceil1/\alpha_{l}\right\rceil=i-1\right\} \nonumber\\
    &\overset{(c)}{\geq} \frac{1}{2d} \sum_{t=\nu_{l} - m_{l}}^{\nu_{l} - 1}\mathds{1}\left\{ \left( t-\tau_{l}-1\right)\!\!\!\mod\left\lceil1/\alpha_{l}\right\rceil=i-1\right\} \nonumber\\
    &\overset{(d)}{=} \frac{1}{2d}\left\lfloor \frac{m_{l}}{\left\lceil 1 / \alpha_{l} \right\rceil} \right\rfloor \nonumber\\
    &= \frac{1}{2d}\left\lceil 2dm_{\mathcal{D}} + d^{2}\log T + \sqrt{4d^{3}m_{\mathcal{D}}\log T + d^{4}(\log T)^{2}}\right\rceil, \label{eq:enough_pre_change_1_linear}
\end{align}
and
\begin{align}
    &\mathbb{E}\left[n_{(s^{*}, a^{*}, h^{*})} \left( \nu_{l} + \ell_{l} - 1 \right) - n_{(s^{*}, a^{*}, h^{*})} \left( \nu_{l} - 1 \right) \right] \nonumber\\
    &= \mathbb{E}\left[ \sum_{t=\nu_{l}}^{\nu_{l} + \ell_{l} - 1}\mathds{1}\left\{(s_{h^{*}}^{t}, a_{h^{*}}^{t}) = (s^{*}, a^{*}), \left( t-\tau_{k}-1\right)\!\!\!\mod\left\lceil1/\alpha_{l}\right\rceil=0\right\} \right] \nonumber\\
    &=\sum_{t=\nu_{l}}^{\nu_{l} + \ell_{l} - 1} \mathbb{P}\left((s_{h^{*}}^{t}, a_{h^{*}}^{t}) = (s^{*}, a^{*}) | \mathcal{E}_{l-1}\right) \mathds{1}\left\{\left( t-\tau_{l}-1\right)\!\!\!\mod\left\lceil1/\alpha_{l}\right\rceil=0\right\}\nonumber\\
    &\overset{(e)}{=} \sum_{t=\nu_{l}}^{\nu_{l} + \ell_{l} - 1} \mathbb{P}\left((s_{h^{*}}^{t}, a_{h^{*}}^{t}) = (s^{*}, a^{*}) \right) \mathds{1}\left\{\left( t-\tau_{l}-1\right)\!\!\!\mod\left\lceil1/\alpha_{l}\right\rceil=0\right\}\nonumber\\
    &\overset{(f)}{\geq} \frac{1}{2d}  \sum_{t=\nu_{l}}^{\nu_{l} + \ell_{l} - 1}\mathds{1}\left\{ \left( t-\tau_{l}-1\right)\!\!\!\mod\left\lceil1/\alpha_{l}\right\rceil=0\right\} \nonumber\\
    &\overset{(g)}{=} \frac{1}{2d} \left\lfloor \frac{\ell_{l}}{\left\lceil 1/\alpha_{l} \right\rceil} \right\rfloor \nonumber\\
    &= \frac{1}{2d}\left\lceil 2d\ell_{\mathcal{D}} + d^{2}\log T + \sqrt{4d^{3}\ell_{\mathcal{D}}\log T + d^{4}(\log T)^{2}}\right\rceil. \label{eq:enough_post_change_1_linear}
\end{align}
In step $(a)$, since $\tau_{l-1} \leq \nu_{l-1} + \ell_{l-1} - 1$ given $\mathcal{E}_{l-1}$ and $\nu_{l} - \nu_{l-1} \geq \ell_{l-1} + m_{l}$ by Assumption \ref{assum:cp_assum_lin_mdps}, $\tau_{l-1} \leq \nu_{l} - m_{l} - 1$ and thus $n_{(s^{*}, a^{*}. h^{*})} \left( \nu_{l} - 1 \right) \leq n_{(s^{*}, a^{*}. h^{*})} \left( \nu_{l} - m_{l} - 1 \right)$. 
Steps $(b)$ and $(e)$ follow from the independence between $\{ (s_{h^{*}}^{t}, a_{h^{*}}^{t}) \}_{t>\tau_{l}: (t - \tau_{l} - 1) \!\!\!\mod \lceil 1/\alpha_{l} \rceil = 0}$ and $\mathcal{E}_{l-1}$.  
Steps $(c)$ and $(f)$ stem from the definition of $p_{\mathrm{m}}$ in Assumption \ref{assum:reachability_lin_mdps} and the fact that each action in the exploration action set is chosen uniformly at random. 
Steps $(d)$ and $(g)$ result from the fact that $m_{l}$ and $\ell_{l}$ are divisible by $\lceil 1/\alpha_{l} \rceil$. 
\begin{align}
    &\mathbb{E}\left[n_{(s^{*}, a^{*}, h^{*})} \left( \nu_{l} - 1 \right) - n_{(s^{*}, a^{*}, h^{*})} \left( \tau_{l-1} \right) | \mathcal{E}_{l-1} \right] \nonumber\\
    &\overset{(a)}{\geq}\mathbb{E}\left[n_{(s^{*}, a^{*}, h^{*})} \left( \nu_{l} - 1 \right) - n_{(s^{*}, a^{*}, h^{*})} \left( \nu_{l} - m_{l} - 1 \right) | \mathcal{E}_{l-1} \right] \nonumber\\
    &= \mathbb{E}\left[ \sum_{t=\nu_{l} - m_{l}}^{\nu_{l} - 1}\mathds{1}\left\{ (s_{h^{*}}^{t}, a_{h^{*}}^{t}) = (s^{*}, a^{*}), \left( t - \tau_{l-1} -1 \right) \!\!\!\mod \left\lceil 1/\alpha_{l} \right\rceil = 0 \right\} \Bigg| \mathcal{E}_{l-1}\right] \nonumber\\
    &=\sum_{t=\nu_{l} - m_{l}}^{\nu_{l} - 1} \mathbb{P}\left((s_{h^{*}}^{t}, a_{h^{*}}^{t}) = (s^{*}, a^{*}) | \mathcal{E}_{l-1}\right) \mathds{1}\left\{\left( t-\tau_{l}-1\right)\!\!\!\mod\left\lceil1/\alpha_{l}\right\rceil=0\right\}\nonumber\\
    &\overset{(b)}{=} \sum_{t=\nu_{l} - m_{l}}^{\nu_{l} - 1}\mathbb{P}\left((s_{h^{*}}^{t}, a_{h^{*}}^{t}) = (s^{*}, a^{*}) \right)\mathds{1}\left\{ \left( t-\tau_{l}-1\right)\!\!\!\mod\left\lceil1/\alpha_{l}\right\rceil=i-1\right\} \nonumber\\
    &\overset{(c)}{\geq} \frac{p_{\mathrm{m}}}{N_{\mathrm{e}}} \sum_{t=\nu_{l} - m_{l}}^{\nu_{l} - 1}\mathds{1}\left\{ \left( t-\tau_{l}-1\right)\!\!\!\mod\left\lceil1/\alpha_{l}\right\rceil=i-1\right\} \nonumber\\
    &\overset{(d)}{=} \frac{1}{2d}\left\lfloor \frac{m_{l}}{\left\lceil 1 / \alpha_{l} \right\rceil} \right\rfloor \nonumber\\
    &= \frac{1}{2d}\left\lceil 2dm_{\mathcal{D}} + d^{2}\log T + \sqrt{4d^{3}m_{\mathcal{D}}\log T + d^{4}(\log T)^{2}}\right\rceil, \label{eq:enough_pre_change_1_linear}
\end{align}
and
\begin{align}
    &\mathbb{E}\left[n_{(s^{*}, a^{*}, h^{*})} \left( \nu_{l} + \ell_{l} - 1 \right) - n_{(s^{*}, a^{*}, h^{*})} \left( \nu_{l} - 1 \right) \right] \nonumber\\
    &= \mathbb{E}\left[ \sum_{t=\nu_{l}}^{\nu_{l} + \ell_{l} - 1}\mathds{1}\left\{(s_{h^{*}}^{t}, a_{h^{*}}^{t}) = (s^{*}, a^{*}), \left( t-\tau_{k}-1\right)\!\!\!\mod\left\lceil1/\alpha_{l}\right\rceil=0\right\} \right] \nonumber\\
    &=\sum_{t=\nu_{l}}^{\nu_{l} + \ell_{l} - 1} \mathbb{P}\left((s_{h^{*}}^{t}, a_{h^{*}}^{t}) = (s^{*}, a^{*}) | \mathcal{E}_{l-1}\right) \mathds{1}\left\{\left( t-\tau_{l}-1\right)\!\!\!\mod\left\lceil1/\alpha_{l}\right\rceil=0\right\}\nonumber\\
    &\overset{(e)}{=} \sum_{t=\nu_{l}}^{\nu_{l} + \ell_{l} - 1} \mathbb{P}\left((s_{h^{*}}^{t}, a_{h^{*}}^{t}) = (s^{*}, a^{*}) \right) \mathds{1}\left\{\left( t-\tau_{l}-1\right)\!\!\!\mod\left\lceil1/\alpha_{l}\right\rceil=0\right\}\nonumber\\
    &\overset{(f)}{\geq} \frac{1}{2d}  \sum_{t=\nu_{l}}^{\nu_{l} + \ell_{l} - 1}\mathds{1}\left\{ \left( t-\tau_{l}-1\right)\!\!\!\mod\left\lceil1/\alpha_{l}\right\rceil=0\right\} \nonumber\\
    &\overset{(g)}{=} \frac{1}{2d}\left\lfloor \frac{\ell_{l}}{\left\lceil 1/\alpha_{l} \right\rceil} \right\rfloor \nonumber\\
    &= \frac{1}{2d}\left\lceil 2d\ell_{\mathcal{D}} + d^{2}\log T + \sqrt{4d^{3}\ell_{\mathcal{D}}\log T + d^{4}(\log T)^{2}}\right\rceil. \label{eq:enough_post_change_1_linear}
\end{align}
In step $(a)$, since $\tau_{l-1} \leq \nu_{l-1} + \ell_{l-1} - 1$ given $\mathcal{E}_{l-1}$ and $\nu_{l} - \nu_{l-1} \geq \ell_{l-1} + m_{l}$ by Assumption \ref{assum:cp_assum}, $\tau_{l-1} \leq \nu_{l} - m_{l} - 1$ and thus $n_{(s^{*}, a^{*}. h^{*})} \left( \nu_{l} - 1 \right) \leq n_{(s^{*}, a^{*}. h^{*})} \left( \nu_{l} - m_{l} - 1 \right)$. 
Steps $(b)$ and $(e)$ follow from the independence between $\{ (s_{h^{*}}^{t}, a_{h^{*}}^{t}) \}_{t>\tau_{l}: (t - \tau_{l} - 1) \!\!\!\mod \lceil 1/\alpha_{l} \rceil = 0}$ and $\mathcal{E}_{l-1}$.  
Steps $(c)$ and $(f)$ stem from the definition of $p_{\mathrm{m}}$ in Assumption \ref{assum:reachability} and the fact that each action in the exploration action set is chosen uniformly at random. 
Steps $(d)$ and $(g)$ result from the fact that $m_{l}$ and $\ell_{l}$ are divisible by $\lceil 1/\alpha_{l} \rceil$. 
Therefore,
\begin{align}
&\mathbb{P} \left(\mathcal{M}_{l}^{c} \big| \mathcal{E}_{l-1} \right) \nonumber\\
&= \mathbb{P} \left( n_{(s^{*},a^{*},h^{*})}(\nu_{l} - 1)< m_{\mathcal{D}} | \mathcal{E}_{l-1} \right) \nonumber\\
&\overset{(a)}{\leq} \exp\left(  \frac{-2\left(\mathbb{E}\left[n_{(s^{*},a^{*},h^{*})}(\nu_{l} - 1) \right] - m_{\mathcal{D}}\right)^{2}}{\sum_{t=\tau_{l}+1}^{\nu_{l}-1}\mathds{1}\left\{\left( t-\tau_{l}-1\right)\!\!\!\mod\!\left\lceil 1/\alpha_l\right\rceil=0\right\}}\right) \nonumber\\
&\overset{(b)}{\leq} \exp\left(  \frac{-\left(\left\lceil 2dm_{\mathcal{D}} + d^{2}\log T + \sqrt{4d^{3}m_{\mathcal{D}}\log T + d^{4}(\log T)^{2}}\right\rceil - m_{\mathcal{D}} \right)^{2}}{d\left\lceil 2dm_{\mathcal{D}} + d^{2}\log T + \sqrt{4d^{3}m_{\mathcal{D}}\log T + d^{4}(\log T)^{2}}\right\rceil}\right) \nonumber\\
&\leq T^{-1}, \label{eq:M_l_bound}
\end{align}
and
\begin{align}
&\mathbb{P} \left(\mathcal{L}_{l}^{c} \big| \mathcal{E}_{l-1} \right) \nonumber\\
&= \mathbb{P} \left( n_{(s^{*},a^{*},h^{*})}(\nu_{l} + \ell_{l} - 1) - n_{(s^{*},a^{*},h^{*})}(\nu_{l} - 1) < \ell_{\mathcal{D}} | \mathcal{E}_{l-1} \right) \nonumber\\
&\overset{(c)}{\leq} \exp\left(  \frac{-2\left(\mathbb{E}\left[n_{(s^{*},a^{*},h^{*})}(\nu_{l} + \ell_{l} - 1) - n_{(s^{*},a^{*},h^{*})}(\nu_{l} - 1) \right] - \ell_{\mathcal{D}}\right)^{2}}{\sum_{t=\nu_{l}}^{\nu_{l}+\ell_{l}-1}\mathds{1}\left\{\left( t-\tau_{l}-1\right)\!\!\!\mod\!\left\lceil 1/\alpha_l\right\rceil=0\right\}}\right) \nonumber\\
&\overset{(d)}{\leq} \exp\left(  \frac{-\left( p_{\mathrm{m}}\left\lceil 2d\ell_{\mathcal{D}} + d^{2}\log T + \sqrt{4d^{3}\ell_{\mathcal{D}}\log T + d^{4}(\log T)^{2}}\right\rceil- \ell_{\mathcal{D}} \right)^{2}}{ d\left\lceil 2d\ell_{\mathcal{D}} + d^{2}\log T + \sqrt{4d^{3}\ell_{\mathcal{D}}\log T + d^{4}(\log T)^{2}}\right\rceil}\right) \nonumber\\
&\leq T^{-1}. \label{eq:L_l_bound}
\end{align}
In steps $(a)$ and $(c)$, we apply Hoeffding's inequality, as $\{\mathds{1}\{s_{h^{*}}^{t} = s^{*}, a_{h^{*}}^{t} = a^{*}\}\}_{t>\tau_{l}: (t - \tau_{l} - 1) \!\!\!\mod \lceil 1/\alpha_{l} \rceil = 0}$ is a sequence of i.i.d. Bernoulli random variables with parameter greater than $p_{\mathrm{m}}/N_{\mathrm{e}}$. In steps $(b)$ and $(d)$, we apply \eqref{eq:enough_post_change_1}.

Before bounding the third term in \eqref{eq:ld_bound}, recall the definitions of the stopping times of the change detectors in \eqref{eq:tau_r} and \eqref{eq:tau_P}. Without loss of generality, we assume that $\nu_{l} \leq T - \ell_{l}$; otherwise, there is no need to detect the change because the horizon will end soon after the change occurs. Let $\Pr_{\nu}$ denote the probability measure whose distribution changes at the $\nu^{\mathrm{th}}$ sample. For the case where $| r^{(l+1)}_{h} (s, a) - r^{(l)}_{h} (s, a)| \geq \max_{s' \in \mathcal{S}} \{ | P_{h}^{(l+1)}(s'|s,a) - P_{h}^{(l)}(s'|s,a)| \}$, we can derive
\begin{align}\nonumber
    &\mathbb{P} \left( \tau_{l} \geq \nu_{l} + \ell_{l} | \mathcal{E}_{l-1} \cap \mathcal{M}_{l} \cap \mathcal{L}_{l} \right)\\\nonumber
    &=\mathbb{P}(\forall\,h \in [H], \forall\,s, s' \in \mathcal{S}, \forall\,a \in \mathcal{A}, \\\nonumber
    &\quad\quad\enspace\tau_{(s,a,h)}^{(r)} > n_{(s,a,h)}\left(\nu_{l} + \ell_{l} - 1 \right), \tau_{(s,a,h,s')}^{(P)} > n_{(s,a,h)}\left(\nu_{l} + \ell_{l} - 1 \right) |\mathcal{E}_{l-1} \cap \mathcal{M}_{l} \cap \mathcal{L}_{l} )\\\nonumber
    &\overset{(a)}{\leq}\mathbb{P}\left(\tau_{(s^{*}, a^{*}, h^{*})}^{(r)} > n_{(s^{*}, a^{*}, h^{*})}\left(\nu_{l} + \ell_{l} - 1 \right)\big|\mathcal{E}_{l-1} \cap \mathcal{M}_{l} \cap \mathcal{L}_{l} \right)\\\nonumber
    &\overset{(b)}{\leq} \mathbb{P}\left(\tau_{(s^{*}, a^{*}, h^{*})}^{(r)} > n_{(s^{*}, a^{*}, h^{*})}\left(\nu_{l} - 1 \right)+\ell_{\mathcal{D}}\big|\mathcal{E}_{l-1} \cap \mathcal{M}_{l} \cap \mathcal{L}_{l}\right) \nonumber\\
    &\overset{(c)}{\leq}\sup_{\nu \in \left\{m_{\mathcal{D}}+1, \dots, T - \ell_{\mathcal{D}}\right\}}\mathbb{P}_{\nu} \left(\tau_{(s^{*}, a^{*}, h^{*})}^{(r)}\geq\nu+\ell_{\mathcal{D}} \big| \mathcal{E}_{l-1} \cap \mathcal{M}_{l} \cap \mathcal{L}_{l}\right)\nonumber\\
    &\overset{(d)}{\leq}\delta_{\mathrm{D}}. \label{eq:late_detect_r}
\end{align}
For the other case where $| r^{(l+1)}_{h} (s, a) - r^{(l)}_{h} (s, a)| < \max_{s' \in \mathcal{S}} \{ | P_{h}^{(l+1)}(s'|s,a) - P_{h}^{(l)}(s'|s,a)| \}$, let $s^{\prime *} \coloneqq \arg\max_{s' \in \mathcal{S}} \{ | P_{h}^{(l+1)}(s'|s,a) - P_{h}^{(l)}(s'|s,a)| \}$. We can similarly obtain
\begin{align}\nonumber
    &\mathbb{P} \left( \tau_{l} \geq \nu_{l} + \ell_{l} | \mathcal{E}_{l-1} \cap \mathcal{M}_{l} \cap \mathcal{L}_{l} \right)\\\nonumber
    &=\mathbb{P}(\forall\,h \in [H], \forall\,s, s' \in \mathcal{S}, \forall\,a \in \mathcal{A}, \\\nonumber
    &\quad\quad\enspace\tau_{(s,a,h)}^{(r)} > n_{(s,a,h)}\left(\nu_{l} + \ell_{l} - 1 \right), \tau_{(s,a,h,s')}^{(P)} > n_{(s,a,h)}\left(\nu_{l} + \ell_{l} - 1 \right) |\mathcal{E}_{l-1} \cap \mathcal{M}_{l} \cap \mathcal{L}_{l} )\\\nonumber
    &\overset{(e)}{\leq}\mathbb{P}\left(\tau_{(s^{*}, a^{*}, h^{*}, s^{\prime *})}^{(P)} > n_{(s^{*}, a^{*}, h^{*})}\left(\nu_{l} + \ell_{l} - 1 \right)\big|\mathcal{E}_{l-1} \cap \mathcal{M}_{l} \cap \mathcal{L}_{l} \right)\\\nonumber
    &\overset{(f)}{\leq} \mathbb{P}\left(\tau_{(s^{*}, a^{*}, h^{*}, s^{\prime *})}^{(P)} > n_{(s^{*}, a^{*}, h^{*})}\left(\nu_{l} - 1 \right)+\ell_{\mathcal{D}}\big|\mathcal{E}_{l-1} \cap \mathcal{M}_{l} \cap \mathcal{L}_{l}\right) \nonumber\\
    &\overset{(g)}{\leq}\sup_{\nu \in \left\{m_{\mathcal{D}}+1, \dots, T - \ell_{\mathcal{D}}\right\}}\mathbb{P}_{\nu} \left(\tau_{(s^{*}, a^{*}, h^{*}, s^{\prime *})}^{(P)}\geq\nu+\ell_{\mathcal{D}} \big| \mathcal{E}_{l-1} \cap \mathcal{M}_{l} \cap \mathcal{L}_{l}\right)\nonumber\\
    &\overset{(h)}{\leq}\delta_{\mathrm{D}}. \label{eq:late_detect_P}
\end{align}
In steps $(a)$ and $(e)$, DARLING restarts at the minimum of the stopping time, leading to the inequalities. 
Steps $(b)$ and $(f)$ stem from the fact that $n_{(s^{*}, a^{*}, h^{*})}\left(\nu_{l} + \ell_{l} - 1 \right) - n_{(s^{*}, a^{*}, h^{*})}\left(\nu_{l} - 1 \right) \geq \ell_{\mathcal{D}}$ given $\mathcal{L}_{l}$. 
Steps $(c)$ and $(g)$ result from the fact that $n_{(s^{*}, a^{*}, h^{*})}\left(\nu_{l} - 1 \right) \geq m_{\mathcal{D}}$ given $\mathcal{M}_{l}$ and $\nu_{l} \leq T - \ell_{l}$. Recall the definition of $t_{(s,a,h),u}$ in \eqref{eq:t'cis}. Step $(d)$ follows from the definition of latency in Section \ref{sec:detector}, as the rewards at step $h^{*}$ conditioned on the state-action pair $(s^{*}, a^{*})$ are independent sub-Gaussian whose distribution changes at $\nu$, given $\mathcal{E}_{l-1}, \mathcal{L}_{l}$, and $\mathcal{M}_{l}$. 
Step $(h)$ also follows from the definition of latency in Section \ref{sec:detector}, as the sequence of the events $\{s^{t}_{h^{*}+1} = s^{\prime *}\}$ conditioned on the current state-action pair $(s^{*}, a^{*})$ are independent sub-Gaussian whose distribution changes at $\nu$, given $\mathcal{E}_{l-1}, \mathcal{L}_{l}$, and $\mathcal{M}_{l}$. Plugging \eqref{eq:M_l_bound}, \eqref{eq:L_l_bound},  \eqref{eq:late_detect_r}, and \eqref{eq:late_detect_P} into \eqref{eq:ld_bound}, we have
\begin{equation}\label{eq:late_detect}
    \mathbb{P} \left( \tau_{l} \geq \nu_{l} + \ell_{l} \big| \mathcal{E}_{l-1} \right) \leq 2T^{-1} + \delta_{\mathrm{D}}.
\end{equation}

\textbf{Upper Bounding $\Phi_{3}$.}
Fix $h\in[H]$ and let $\delta_{\mathrm{cal}} = T^{-1}$, $p_{\mathrm{m}} = 1/2d$, and
\[
P_{h,t}^{\star}=\{(s_1^{\star},a_1^{\star}),\dots,(s_d^{\star},a_d^{\star})\}\in\mathfrak{B}_h
\]
be the slice from Assumption~\ref{assum:reachability_lin_mdps}. For each $i\in[d]$,
\[
q_i^{\star}:=q_{h,t}(s_i^{\star},a_i^{\star})=\frac{p^{\pi_{\mathrm U}}_{h,t}(s_i^{\star})}{A}\ge \frac{p_m}{A}.
\]

\paragraph{Step 1: the good slice has empirical count at least $\beta$.}
For each $i$, the count $\widehat n_h(s_i^{\star},a_i^{\star})$ is binomial with mean
\[
\mu_i = n_0 q_i^{\star} \ge \frac{n_0p_m}{A} = 2\beta.
\]
Then we have that,
\[
\mathbb{P}\!\left(\widehat n_h(s_i^{\star},a_i^{\star})<\beta\right)
\le
\mathbb{P}\!\left(\widehat n_h(s_i^{\star},a_i^{\star})<\frac{\mu_i}{2}\right)
\le e^{-\mu_i/8}
\le e^{-\beta/4}.
\]
Since $n_0\ge \frac{16A}{p_m}L$, we have $\beta\ge 8L$, hence $e^{-\beta/4}\le e^{-2L}$. Therefore
\[
\mathbb{P}\!\left(\exists i\in[d],\exists h\in[H]: \widehat n_h(s_i^{\star},a_i^{\star})<\beta\right)
\le dHe^{-2L}
\le \frac{\delta_{\mathrm{cal}}}{2}.
\]
Call this event $\mathcal E_1$.

\paragraph{Step 2: no low-occupancy pair can reach count $\beta$.}
Define the bad set
\[
\mathcal B_h:=\Bigl\{(s,a): q_{h,t}(s,a)<\frac{p_m}{8A}\Bigr\}.
\]
Partition it into dyadic bins
\[
\mathcal B_{h,m}:=\Bigl\{(s,a): 2^{-(m+1)}\frac{p_m}{8A}\le q_{h,t}(s,a)<2^{-m}\frac{p_m}{8A}\Bigr\},\qquad m\ge 0.
\]
Because $\sum_{(s,a)}q_{h,t}(s,a)=1$, every pair in $\mathcal B_{h,m}$ has mass at least $2^{-(m+1)}p_m/(8A)$, so
\[
|\mathcal B_{h,m}|\le \frac{2^{m+4}A}{p_m}.
\]
Fix $(s,a)\in\mathcal B_{h,m}$ and let $X_{s,a,h}:=\widehat n_h(s,a)\sim \mathrm{Bin}(n_0,q_{h,t}(s,a))$. Its mean satisfies
\[
\mu_{s,a,h}=n_0q_{h,t}(s,a)<2^{-m}\frac{n_0p_m}{8A}=2^{-m}\frac{\beta}{4}.
\]
Using the standard binomial upper-tail bound
\[
\mathbb{P}(X\ge \beta)\le \left(\frac{e\mu}{\beta}\right)^{\beta},
\]
we get
\[
\mathbb{P}\!\left(X_{s,a,h}\ge \beta\right)
\le
\left(\frac{e}{2^{m+2}}\right)^{\beta}.
\]
Therefore, for fixed $h$,
\begin{align*}
\mathbb{P}\!\left(\exists (s,a)\in\mathcal B_h: \widehat n_h(s,a)\ge \beta\right)
&\le \sum_{m=0}^{\infty}|\mathcal B_{h,m}|\left(\frac{e}{2^{m+2}}\right)^{\beta} \\
&\le \frac{16A}{p_m}\left(\frac{e}{4}\right)^{\beta}\sum_{m=0}^{\infty}2^{m(1-\beta)}.
\end{align*}
Since $\beta\ge 8L\ge 2$, the geometric series is at most $2$, so
\[
\mathbb{P}\!\left(\exists (s,a)\in\mathcal B_h: \widehat n_h(s,a)\ge \beta\right)
\le \frac{32A}{p_m}\left(\frac{e}{4}\right)^{\beta}.
\]
Also, $\log(4/e)>1/3$, hence $(e/4)^{\beta}\le e^{-\beta/3}\le e^{-8L/3}$. Using the definition of $L$,
\[
\frac{32A}{p_m}e^{-8L/3}
\le \frac{\delta_{\mathrm{cal}}}{2H}.
\]
Taking a union bound over $h\in[H]$ yields an event $\mathcal E_2$ with
\[
\mathbb{P}(\mathcal E_2^c)\le \frac{\delta_{\mathrm{cal}}}{2},
\]
on which every pair with empirical count at least $\beta$ satisfies $q_{h,t}(s,a)\ge p_m/(8A)$.

\paragraph{Step 3: the greedy slice is complete and well reachable.}
Assume $\mathcal E_1\cap\mathcal E_2$ holds. Fix $h$. After $j-1<d$ greedy selections, the current span has dimension $j-1$, so some pair in $P_{h,t}^{\star}$ still lies outside that span. By $\mathcal E_1$, that pair has empirical count at least $\beta$. Since the greedy rule picks the highest-count pair outside the current span, the $j$-th selected pair also has count at least $\beta$. By $\mathcal E_2$, that selected pair satisfies $q_{h,t}(s,a)\ge p_m/(8A)$.

Repeating for $j=1,\dots,d$ shows that $|\widehat P_h|=d$ and every selected pair in $\widehat P_h$ satisfies
\[
q_{h,t}(s,a)\ge \frac{p_m}{8A},
\qquad
p^{\pi_{\mathrm U}}_{h,t}(s)\ge \frac{p_m}{8}.
\]
Finally,
\[
\mathbb{P}((\mathcal E_1\cap\mathcal E_2)^c)\le \mathbb{P}(\mathcal E_1^c)+\mathbb{P}(\mathcal E_2^c)\le \delta_{\mathrm{cal}},
\]
which completes the proof.

This completes bounding $\Phi_1$ and $\Phi_2$. Plugging \eqref{eq:false_alarm} and \eqref{eq:late_detect} into \eqref{eq:bad_union_thm1}, we obtain
\begin{equation}
\mathbb{P}\left\{\mathcal{G}^{c}_{k}\right\}\leq kHS(S+1)A \delta_{\mathrm{F}} + \left( k-1 \right)\left(2T^{-1} + \delta_{\mathrm{D}} \right)
\label{eq:bad_event}.
\end{equation}
This bounds the first term in \eqref{eq:reg_decomp}.

For convenience in bounding the second term in \eqref{eq:reg_decomp}, we define $\bar{\alpha} \coloneqq \max_{k = 1, \dots, N_{T} + 1} \alpha_{k}$. For any $k\in\left[ N_{T} + 1\right]$, if $\left( t-\tau_{k-1}-1\mod\left\lceil 1/\alpha_{k}\right\rceil\right) \neq 0$, then $A_t$ follows the stationary RL algorithm $\mathcal{L}$. Thus, the second term in \eqref{eq:reg_decomp} can then be decomposed as follows:
\begin{align}
    &\mathbb{E} \left[ \mathbf{1}\left\{\mathcal{G}_{k}\right\} \sum_{t=\nu_{k-1}}^{\nu_{k}-1} \sum_{h=1}^{H} r^t_{h}\left(s^{t}_{h}, (\pi^{*})_{h}^{t}\left(s_{h}\right)\right) - r^t_{h}\left(s_{h}, \pi_{h}^t\left(s_{h}\right)\right) \right] \nonumber \\
    &\overset{(a)}{\leq} n_{0} + \ell_{k-1} + \left\lceil\frac{\nu_{k}-\nu_{k-1}}{\left\lceil 1/\alpha_{k}\right\rceil}\right\rceil \nonumber\\
    &\quad\enspace+ \mathbb{E}\left[\mathds{1}\{\mathcal{G}_{k}\}\sum_{t=\tau_{k-1} + 1: \left( t-\tau_{k-1}-1 \right)\!\!\!\mod \lceil 1/\alpha_{k} \rceil \neq 0}^{\nu_{k}-1} \sum_{h=1}^{H} r^t_{h}\left(s_{h}, (\pi^{*})_{h}^{t}\left(s_{h}\right)\right) - r^t_{h}\left(s_{h}, \pi_{h}^t\left(s_{h}\right)\right)\right] \nonumber \\
    &\overset{(b)}{\leq} n_{0} + \ell_{k-1} +  \left[ \alpha_{k} \left( \nu_{k}-\nu_{k-1}\right) + 1 \right] + R_{\mathcal{L}}\left( \nu_{k}-\nu_{k-1}\right) \nonumber \\
    &\leq n_{0} + \ell_{k-1} + \left[ \bar{\alpha} \left( \nu_{k}-\nu_{k-1}\right) + 1 \right] +R_{\mathcal{L}} \left( \nu_{k}-\nu_{k-1} \right)
    \label{eq:bound_inter}
\end{align}
where in step $(a)$, the first term bounds the regret due to the delay of the change detector, and the second term bounds the regret incurred due to probing. In step $(b)$, as the rewards and the trajectories in the history of the  $\mathcal{L}$ are independent of those in $\mathcal{H}_{s,a,h}^{(r)}$ and $\mathcal{H}_{s,a,h,j,a'}^{(P)}$, and that $\mathcal{G}_{k}$ only depends on samples in $\mathcal{H}_{s,a,h}^{(r)}$ and $\mathcal{H}_{s,a,h,j,a'}^{(P)}$, the regret bound of $\mathcal{L}$ applies. 
We also apply the fact that $R_{\mathcal{L}}\left( T \right)$ is increasing with $T$. For the tabular MDP case, we can plug \eqref{eq:bound_inter} and \eqref{eq:bad_event} into \eqref{eq:reg_decomp} and obtain:
\begin{align}
&\mathcal{R}(\pi, T) \nonumber\\
&\leq \sum_{k = 1}^{N_{T} + 1}
    H\left( \nu_{k} - \nu_{k-1} \right) \left(kHd^{2}A \delta_{\mathrm{F}} + \left( k-1 \right)\left(2T^{-1} + \delta_{\mathrm{D}} \right) \right)\nonumber\\
    &\quad\enspace+ \sum_{k = 1}^{N_{T} + 1} \left(  H\ell_{k-1} +  H\left[ \bar{\alpha} \left( \nu_{k}-\nu_{k-1}\right) + 1 \right] +R_{\mathcal{L}} \left( \nu_{k}-\nu_{k-1} \right) \right) \nonumber\\
    &\leq \sum_{k = 1}^{N_{T} + 1}
    H\left( \nu_{k} - \nu_{k-1} \right) \left((N_{T} + 1)Hd^{2}A \delta_{\mathrm{F}} + N_{T}\left(2T^{-1} + \delta_{\mathrm{D}} \right) \right) \nonumber\\
    &\quad\enspace+ \sum_{k = 1}^{N_{T} + 1} \left( H\ell_{k-1} + H\left[ \bar{\alpha} \left( \nu_{k}-\nu_{k-1}\right) + 1 \right] +R_{\mathcal{L}} \left( \nu_{k}-\nu_{k-1} \right) \right) \nonumber\\
    &=T H^{2}d^{2}A\left( N_{T} + 1\right)\delta_{\mathrm{F}} + 2HN_{T} + THN_{T} \delta_{\mathrm{D}}  + H\sum_{k=1}^{N_{T}} \ell_{k} + H\left( \bar{\alpha} T + 1 \right) \nonumber\\
    &\quad+ H\sum_{k = 1}^{N_{T} + 1} R_{\mathcal{L}} \left( \nu_{k}-\nu_{k-1} \right) \nonumber\\
    &\overset{(a)}{\leq} T H^{2}d^{2}A\left( N_{T} + 1\right)\delta_{\mathrm{F}} + 2HN_{T} + THN_{T} \delta_{\mathrm{D}}  + H\sum_{k=1}^{N_{T}} \ell_{k} + H\left( \bar{\alpha} T + 1 \right) \nonumber\\
    &\quad+ \left( N_{T} + 1 \right) R_{\mathcal{L}}\left( \frac{T}{N_{T}+1} \right) \nonumber\\
    &\overset{(b)}{=}\tilde{O}(d\sqrt{H^{3}TN_{T}}).
\end{align}
In step $(a)$, we apply Jensen's inequality to the concave function $R_{\mathcal{L}}$. In step $(b)$, we use the fact that $\mathcal{R}_{\mathcal{L}}(T) = \tilde{O}(d\sqrt{H^{3}T})$, $\bar{\alpha} = \tilde{O}(\sqrt{dN_{T}/T})$, $\sum_{k=1}^{N_{T}} 1/\alpha_{k} = \tilde{O}(\sqrt{TN_{T}}/d)$, and $\delta_{\mathrm{F}} = \delta_{\mathrm{D}} = T^{-\gamma}$ for some $\gamma>1$. This completes the proof.

\end{proof}

\section{Experimental Details}

\subsection{General Formulations of GLR and GSR}
\label{app:glr-gsr}

\begin{figure*}[h]
\centering
\begin{minipage}{0.48\textwidth}
\begin{algorithm}[H]
\caption{\textbf{G}eneralized \textbf{L}ikelihood \textbf{R}atio Test}
\label{alg:GLR}
\begin{algorithmic}[1]
\small
\STATE {\bf Input}: history $\mathcal{H}=\{X_1,\dots,X_n\}$, $\delta_\mathrm{F}$, $\delta_\mathrm{D}$, divergence $\mathrm{kl}(\cdot,\cdot)$
\FOR{$t = 1$ to $n-1$}
    \STATE Compute $\hat{\mu}_{1:t}$, $\hat{\mu}_{t+1:n}$, $\hat{\mu}_{1:n}$
    \STATE $\mathrm{GLR}_t \leftarrow t\,\mathrm{kl}(\hat{\mu}_{1:t}, \hat{\mu}_{1:n}) + (n-t)\,\mathrm{kl}(\hat{\mu}_{t+1:n}, \hat{\mu}_{1:n})$
    \STATE\textbf{if} $\mathrm{GLR}_t \ge \beta_{\mathrm{GLR}}(n,\delta_{\mathrm{F}})$ \textbf{then return} \texttt{True}
\ENDFOR
\end{algorithmic}
\end{algorithm}
\end{minipage}
\hfill
\begin{minipage}{0.48\textwidth}
\begin{algorithm}[H]
\caption{\textbf{G}eneralized \textbf{S}hiryaev--\textbf{R}oberts Test}
\label{alg:GSR}
\begin{algorithmic}[1]
\small
\STATE {\bf Input}: history $\mathcal{H}=\{X_1,\dots,X_n\}$, $\delta_\mathrm{F}$, $\delta_\mathrm{D}$, divergence $\mathrm{kl}(\cdot,\cdot)$, $\mathrm{GSR}\leftarrow 0$
\FOR{$t = 1$ to $n-1$}
    \STATE Compute $\hat{\mu}_{1:t}$, $\hat{\mu}_{t+1:n}$, $\hat{\mu}_{1:n}$
    %\STATE $\mathrm{GLR}_t \leftarrow t\,\mathrm{kl}(\hat{\mu}_{1:t}, \hat{\mu}_{1:n}) + (n-t)\,\mathrm{kl}(\hat{\mu}_{t+1:n}, \hat{\mu}_{1:n})$
    \STATE $\mathrm{GSR} \leftarrow \mathrm{GSR} + \exp(\mathrm{GLR}_t)$\hfill(Alg \ref{alg:GLR})
    \STATE\textbf{if} $\log(\mathrm{GSR}) \ge \beta_{\mathrm{GSR}}(n,\delta_{\mathrm{F}})+\log n$ \textbf{then return} \texttt{True}
\ENDFOR
\end{algorithmic}
\end{algorithm}
\end{minipage}
\end{figure*}

For completeness, we summarize the general forms of the Generalized Likelihood Ratio (GLR) and Generalized Shiryaev--Roberts (GSR) tests for sequential change detection.
Let $(X_i)_{i\ge1}$ be a sequence of real-valued observations generated from a parametric family $\{f_\theta:\theta\in\mathbb{R}\}$.
Both tests compare the no-change hypothesis (a single parameter $\theta$ for all samples) against the single change-point alternative (parameters $\theta_0$ before the change and $\theta_1$ after).

\paragraph{GLR test.}
The GLR stopping time is defined as
\begin{align*}
\tau_{\mathrm{GLR}}
\;\coloneqq\;
\inf\Big\{n\in\mathbb{N}: G_n \ge \beta(n,\delta_{\mathrm{F}})\Big\},
\end{align*}
where the GLR statistic is
\begin{align*}
G_n
\;\coloneqq\;
\sup_{t\in[n]}\;
\log\!\left(
\frac{\ \sup_{\theta_0\in\mathbb{R}}\sup_{\theta_1\in\mathbb{R}}
\prod_{i=1}^{t} f_{\theta_0}(X_i)\prod_{i=t+1}^{n} f_{\theta_1}(X_i)\ }
{\ \sup_{\theta\in\mathbb{R}}\prod_{i=1}^{n} f_{\theta}(X_i)\ }
\right).
\end{align*}

\paragraph{GSR test.}
The GSR stopping time is
\begin{align*}
\tau_{\mathrm{GSR}}
\;\coloneqq\;
\inf\Big\{n\in\mathbb{N}:\ \log W_n \ge \beta(n,\delta_{\mathrm{F}})+\log n\Big\},
\end{align*}
with statistic
\begin{align*}
W_n
\;\coloneqq\;
\frac{1}{n}\sum_{t=1}^{n}
\left(
\frac{\ \sup_{\theta_0\in\mathbb{R}}\sup_{\theta_1\in\mathbb{R}}
\prod_{i=1}^{t} f_{\theta_0}(X_i)\prod_{i=t+1}^{n} f_{\theta_1}(X_i)\ }
{\ \sup_{\theta\in\mathbb{R}}\prod_{i=1}^{n} f_{\theta}(X_i)\ }
\right).
\end{align*}

\paragraph{Anytime-valid threshold.}
In the general case, for any target false-alarm level $\delta_{\mathrm{F}}\in(0,1)$, we use the threshold
\begin{equation}
\label{eq:GLR_beta}
\beta(n,\delta_{\mathrm{F}})
=
6\log\!\big(1+\log n\big)
+\frac{5}{2}\log\!\left(\frac{4n^{3/2}}{\delta_{\mathrm{F}}}\right)
+11.
\end{equation}

\paragraph{Empirical-mean (Bernoulli / Gaussian) specialization.}
For the families used in our experiments and implementations (Algorithms~\ref{alg:GLR}--\ref{alg:GSR}), following \cite{besson2022efficient,huang2025cdbppsmabs}, the log-likelihood ratio at a candidate split $t\in\{1,\ldots,n-1\}$ admits the closed form
\begin{align}
\label{eq:lr-kl}
&\log\!\left(
\frac{\ \sup_{\theta_0\in\mathbb{R}}\prod_{i=1}^{t} f_{\theta_0}(X_i)\;\sup_{\theta_1\in\mathbb{R}}\prod_{i=t+1}^{n} f_{\theta_1}(X_i)\ }
{\ \sup_{\theta\in\mathbb{R}}\prod_{i=1}^{n} f_{\theta}(X_i)\ }
\right)
\nonumber\\
&\qquad=
t\,\mathrm{kl}\!\big(\hat{\mu}_{1:t},\hat{\mu}_{1:n}\big)
+(n-t)\,\mathrm{kl}\!\big(\hat{\mu}_{t+1:n},\hat{\mu}_{1:n}\big),
\end{align}
where $\hat{\mu}_{a:b}$ denotes the empirical mean of $\{X_a,\ldots,X_b\}$.
For sub-Bernoulli observations we use
\[
\mathrm{kl}(x,y)=x\ln\!\frac{x}{y}+(1-x)\ln\!\frac{1-x}{1-y},
\]
and for $\sigma^2$-sub-Gaussian observations (Gaussian mean-shift proxy),
\[
\mathrm{kl}(x,y)=\frac{(x-y)^2}{2\sigma^2}.
\]

In our experiments, we use the Bernoulli variants for sub-Bernoulli rewards (with $\mathrm{kl}$ as above). For $\sigma^2$-sub-Gaussian observations we use the Gaussian proxy divergence $\mathrm{kl}(x,y)=(x-y)^2/(2\sigma^2)$.

\subsection{Experimental Environments}
\label{app:exp_envs}

\subsubsection{Tabular MDP Environments}

\paragraph{Bidirectional Diabolical Combination Lock.}
We follow the Bidirectional Diabolical Combination Lock construction \cite{mao2022restartqucb} in which each episode starts from a fixed initial state. The first action routes the agent into one of two ``locks'' (paths), each a chain of length $H$; along each chain, at every step there is a \emph{unique} correct action that advances to the next state on that path, whereas any of the other $A-1$ actions sends the agent to an absorbing sinking state. The MDP is mildly stochastic: even when the agent selects the correct action, the intended transition succeeds with probability $0.98$, and with probability $0.02$ the agent still falls into the sink. Rewards are sparse on the optimal behavior: taking correct actions yields reward $0$, and the only large reward occurs at the endpoint of a path (one endpoint gives reward $1$, the other gives $0.25$). By contrast, entering the sink yields a small reward of $\frac{1}{8H}$ at the transition step and then a per-step reward of $\frac{1}{8H}$ thereafter, making the sink a tempting locally-optimal attractor. 

\emph{Non-stationarity.} For drifting experiments we use the original gradual protocol in \cite{mao2022restartqucb}: the routing dynamics at the initial state are linearly morphed over time so that an action that initially reaches path~1 with probability $0.98$ (and path~2 with probability $0.02$) is gradually transformed to reach path~1 with probability $0.02$ (and path~2 with probability $0.98$), with the symmetric change applied to the other action. For abrupt PS experiments, we use the same endpoint-swap mechanism as \cite{mao2022restartqucb}, but replace their fixed-period switching with a geometric change-point model \cite{gerogiannis2024blackboxfeas}: segment lengths are i.i.d.\ geometric with parameter $T^{-\xi}$ for $\xi\in\{0.4,0.6,0.8\}$ over a horizon of $T=50000$ episodes, and at each change-point the two endpoints swap identities (the $1$ and $0.25$ rewards exchange). Unless otherwise stated, we use the benchmark parameterization \cite{mao2022restartqucb} $H=5$, $S=10$, $A=2$, and report cumulative reward averaged over multiple random seeds.

\paragraph{DeepSea.}
We use a compact finite-horizon DeepSea-style exploration benchmark inspired by the DeepSea task in the Behaviour Suite \cite{osband2020deepsea}. Each episode starts from a fixed initial state, and the state records the agent's current depth along a sparse-reward chain. There are two actions. At every depth, one action is the ``correct'' action and advances the agent one level deeper with probability $0.98$, whereas the other action advances with probability $0.02$; when the advance fails, the agent is reset to the initial state. Rewards are sparse: in the stationary template the agent receives reward only at the terminal step after successfully reaching the deepest state. Thus, high reward requires repeatedly selecting the correct action across the entire horizon, while mistakes destroy progress and force the agent to begin again from the start of the chain. Unless otherwise stated, we use $H=5$, $S=11$, and $A=2$.

\emph{Non-stationarity.} For abrupt PS experiments, we alternate between two DeepSea templates. In one phase, action $1$ is the correct action at every depth; in the other phase, action $0$ is the correct action at every depth. Instead of switching after fixed windows, segment lengths are drawn from the same geometric change-point model used throughout the paper \cite{gerogiannis2024blackboxfeas}, with parameter $T^{-\xi}$ for $\xi\in\{0.4,0.6,0.8\}$ over $T=50000$ episodes. For drifting experiments, we keep the transition dynamics fixed and drift only the terminal reward structure: the terminal reward associated with action $1$ is smoothly decreased from $1$ to $0.25$, while the terminal reward associated with action $0$ is smoothly increased from $0.25$ to $1$. The interpolation uses a smooth monotone schedule over the full run, so the gradual experiment changes the reward landscape without changing which actions advance the chain.

\paragraph{FourRoom.}
We use a finite-horizon FourRoom gridworld based on the classical four-room navigation domain \cite{sutton1999fourroom}. The state space is a $7\times 7$ grid with a cross-shaped wall dividing the grid into four rooms and four doorways connecting adjacent rooms. Removing the wall cells leaves $S=40$ reachable states. Each episode begins from a fixed start state in the lower doorway, and the agent has four actions corresponding to left, right, up, and down. The transition dynamics are mildly stochastic: with probability $0.95$ the intended action is executed, and the remaining probability is spread uniformly over the other feasible primitive actions. Invalid moves leave the agent in place. There are two absorbing goal states in the upper-left and upper-right rooms. In each stationary phase, one goal has reward $1$ and the other has reward $0.25$, making the environment a sparse navigation problem in which the best target changes over time. Unless otherwise stated, we use $H=10$, $S=40$, and $A=4$.

\emph{Non-stationarity.} For abrupt PS experiments, the transition kernel is fixed and only the goal rewards change. At each change-point, the two goal identities are swapped: the goal with reward $1$ becomes the goal with reward $0.25$, and vice versa. Change-points are generated by the same geometric model \cite{gerogiannis2024blackboxfeas}, with segment parameter $T^{-\xi}$ for $\xi\in\{0.4,0.6,0.8\}$ over $T=50000$ episodes. For drifting experiments, the transition kernel again remains stationary, while the two goal rewards are smoothly interpolated between the two endpoint reward maps over the run. Thus the optimal goal gradually moves from one room to the other without modifying the navigation dynamics.

\paragraph{NRoom.}
We use a finite-horizon NRoom navigation benchmark following the room-chain layout used in the rlberry research environments \cite{rlberry2021nroom}. The environment consists of $5$ rooms of side length $4$ arranged in a chain, with walls separating neighboring rooms and doorways connecting the rooms. The resulting grid has $S=84$ reachable states and $A=4$ primitive actions corresponding to left, right, down, and up. The agent starts in the middle room. There is an easy reward state near the beginning of the room chain and a terminal reward state in the final room; the terminal state is absorbing. The dynamics are stochastic but mostly controlled: with probability $0.95$ the requested action is executed, while the remaining probability is assigned to the other valid neighboring moves; invalid moves keep the agent in place. The reward at the start state is a small background value $0.01$. In the first phase, the terminal state gives reward $1$ and the easy state gives reward $0.1$; in the second phase, the terminal state gives reward $0.6$ and the easy state gives reward $1$. This creates a benchmark in which the agent must trade off a nearby high reward and a longer-horizon terminal reward whose value changes over time. Unless otherwise stated, we use $H=18$, $S=84$, and $A=4$.

\emph{Non-stationarity.} For abrupt PS experiments, the room layout and transition kernel are fixed, and the rewards at the easy and terminal states switch according to the geometric change-point model \cite{gerogiannis2024blackboxfeas}. In particular, at each change-point the easy state becomes the high-reward target and the terminal state drops to its floor value, or conversely the terminal state becomes optimal again. For drifting experiments, we keep the transitions fixed and linearly interpolate the reward map from the first phase to the second phase across the full run. The drifting case therefore gradually changes the relative attractiveness of the easy and terminal targets while preserving the same navigation problem.

\paragraph{Forked RiverSwim.}
We use a finite-horizon Forked RiverSwim benchmark inspired by the hard-exploration Forked RiverSwim construction in \cite{russo2023forkedriverswim}. The MDP has a shared root state and two RiverSwim branches. With $N=6$ states per branch including the shared root, the total number of states is $S=2N-1=11$. There are three actions: a left action, a right action, and a switch action. At the root, the right action enters the first branch, the switch action enters the second branch, and the left action stays at the root. Away from the root, the left action deterministically moves one step back toward the root, the switch action moves to the corresponding depth on the other branch, and the right action is stochastic: it moves forward with probability $0.35$, stays in place with probability $0.55$, and slips backward with probability $0.10$. The root gives a small reward $0.05$ for taking the left action, while large rewards are only available by reaching a branch endpoint and taking the right action. Unless otherwise stated, we use $H=12$, $S=11$, and $A=3$.

\emph{Non-stationarity.} For abrupt PS experiments, the transition kernel is fixed and the reward identities of the two branch endpoints are swapped at geometric change-points \cite{gerogiannis2024blackboxfeas}. In one phase, the endpoint of the first branch gives reward $1$ and the endpoint of the second branch gives reward $0.95$; in the other phase, these rewards are exchanged. For drifting experiments, the same endpoint rewards are interpolated linearly over the run, while the transition dynamics remain stationary. This creates a non-stationary hard-exploration problem in which the two long branches have similar rewards, but the identity of the slightly better branch changes over time.

\subsubsection{Linear MDP Environments}

\paragraph{Synthetic Chain Combination Lock.}
We follow the synthetic linear-MDP ``chain lock'' construction \cite{zhou2022restartlsviucb} with $S=15$ states, $A=7$ actions, horizon $H=10$, feature dimension $d=10$, and $5$ special candidate chains. The MDP is linear: transitions factor as
$P^{(k)}_h(s'\mid s,a)=\langle \phi(s,a),\mu_{h,k}(s')\rangle$,
where the known feature map $\phi:\mathcal{S}\times\mathcal{A}\to\mathbb{R}^d$ is \emph{one-hot}, so each $(s,a)$ deterministically selects a latent index in $[d]$. The feature map is constructed so that for each special chain index $i\in\{1,\dots,5\}$, there is a designated ``correct'' action $a_i$ at state $s_i$ with $\phi(s_i,a_i)=e_i$; taking any other action at $s_i$ maps to a random latent coordinate in $[d]\setminus\{i\}$ (uniformly). For all remaining (``normal'') states $s_i$ with $i\ge 6$, every action maps to a uniformly random latent coordinate in $[d]$. Given this $\phi$, the vectors $\mu_{h,k}$ are chosen so that in each episode there is exactly one \emph{connected} special chain $g\in[5]$ that behaves like a combination lock: the latent index $g$ induces transitions that keep the agent on the corresponding chain with high probability (e.g., $0.99$ vs.\ $0.01$), while the other special chains are ``broken'' by reversing these probabilities; the remaining (normal) latent indices transition to randomly chosen states (e.g., a $0.8/0.2$ split between two random next states). Rewards are also linear, $r^{(k)}_h(s,a)=\langle \phi(s,a),\theta_{h,k}\rangle$: for the good-chain coordinate $g$, the reward is $0$ for steps $h\le H-1$ and $1$ at the terminal step $h=H$, whereas all other coordinates receive small dense rewards (e.g., i.i.d.\ in $[0.005,0.008]$), again creating a strong local optimum away from the rare terminal reward.

\emph{Non-stationarity.} For drifting experiments we use the original gradual protocol in \cite{zhou2022restartlsviucb}, which continuously shifts the identity of the good chain by interpolating (via convex combinations) between successive base MDPs over fixed windows (e.g., $100$ episodes). For abrupt PS experiments, we use the same abrupt switching mechanism as \cite{zhou2022restartlsviucb}, the identity of the good chain $g\in[5]$ changes at change-points, but we draw segment lengths i.i.d.\ from the same geometric change-point model used in the tabular benchmark (parameter $T^{-\xi}$ with $\xi\in\{0.4,0.6,0.8\}$ over $T=50000$ episodes). Performance is reported as cumulative reward averaged over multiple random seeds.

\paragraph{Simplex.}
We construct an exact finite-horizon linear MDP in the standard linear factorization model \cite{jin2020lsviucb,zhou2022restartlsviucb}. The feature map $\phi:\mathcal{S}\times\mathcal{A}\to\mathbb{R}^d$ is dense: for every state-action pair $(s,a)$, $\phi(s,a)$ is sampled from the probability simplex over $d$ latent coordinates. Thus every action mixes several latent transition and reward components rather than selecting a single coordinate. For each base MDP and each horizon step, the latent transition measures $\mu_h(\cdot,j)$ are independently sampled from a simplex over the $S$ states, so transitions are dense but exactly linear in $\phi(s,a)$. Rewards are also linear. At nonterminal steps all latent reward coordinates are small, while at the terminal step one latent coordinate associated with the current base MDP is assigned reward $1$ and the remaining coordinates receive smaller background rewards. Unless otherwise stated, we use $S=25$, $A=10$, $H=10$, $d=10$, and $5$ base MDPs.

\emph{Non-stationarity.} The non-stationarity follows the same base-MDP protocol as in the synthetic linear chain benchmark \cite{zhou2022restartlsviucb}. For abrupt PS experiments, the active base MDP changes at geometric change-points with parameter $N_{\mathrm{ep}}^{-\xi}$ for $\xi\in\{0.4,0.6,0.8\}$ over $N_{\mathrm{ep}}=50000$ episodes; at every change-point, the active base index advances cyclically through the $5$ base MDPs. For drifting experiments, we interpolate by convex combinations between consecutive base MDPs over windows of $100$ episodes. Both the linear rewards $\theta_{h,k}$ and linear transition measures $\mu_{h,k}$ drift under this interpolation, so the optimal latent coordinate changes gradually over time.

\paragraph{Linear GARNET.}
We use a structured sparse-mixture linear MDP inspired by GARNET-style random MDP benchmarks \cite{archibald1995garnet,bhatnagar2009garnet} and implemented within the linear MDP model. Each state-action feature vector has only a small number of active latent coordinates. One deterministic anchor coordinate is given by $(s+3a)\bmod d$, and one additional latent coordinate is sampled at random; the two active coordinates are assigned simplex weights. For each base MDP, each latent coordinate transitions only to a sparse support of $5$ next states, with transition probabilities sampled from a simplex over that support. Rewards are small at nonterminal steps, while at the terminal step the latent coordinate associated with the current base MDP receives reward $1$ and the remaining coordinates receive smaller background rewards. Unless otherwise stated, we use $S=25$, $A=10$, $H=10$, $d=10$, branching factor $5$, and $5$ base MDPs.

\emph{Non-stationarity.} For abrupt PS experiments, the sparse transition supports and terminal reward coordinate are changed by switching among the $5$ base MDPs at geometric change-points \cite{gerogiannis2024blackboxfeas}. The change-point parameter is $T^{-\xi}$ for $\xi\in\{0.4,0.6,0.8\}$ over $T=50000$ episodes. For drifting experiments, the environment follows the gradual linear-MDP protocol of \cite{zhou2022restartlsviucb}: over each $100$-episode window, both the latent reward vectors and the latent transition measures are convexly interpolated from one base MDP to the next. This makes the sparse GARNET structure non-stationary while preserving the exact linear factorization.

\paragraph{Anchor.}
We construct an exact anchor-feature linear MDP, following the anchor/separability viewpoint commonly used in linear MDP analysis \cite{yang2019anchor}. The first $d$ state-action pairs are explicit anchors: each one has feature vector equal to a standard basis vector $e_j$. All remaining state-action features are convex combinations of the anchors, biased toward a deterministic anchor coordinate $(s+2a)\bmod d$ with weight $0.85$ and completed by a small random simplex component. This creates a feature geometry in which a subset of state-action pairs directly identifies the latent coordinates, while all other pairs are mixtures of those anchors. For each base MDP, the transition measure for the good anchor is concentrated toward a moving target state, while the other latent transition measures are dense simplex distributions. Rewards are small except near the terminal step, where the good anchor receives reward $1$ and a neighboring decoy anchor receives reward $0.15$. Unless otherwise stated, we use $S=20$, $A=5$, $H=10$, $d=10$, and $5$ base MDPs.

\emph{Non-stationarity.} For abrupt PS experiments, the identity of the good anchor and the associated concentrated transition target change at geometric change-points \cite{gerogiannis2024blackboxfeas}. The active base index cycles through the $5$ anchor MDPs, with segment parameter $T^{-\xi}$ for $\xi\in\{0.4,0.6,0.8\}$ over $T=50000$ episodes. For drifting experiments, we use the gradual convex-interpolation protocol of \cite{zhou2022restartlsviucb}: both $\theta_{h,k}$ and $\mu_{h,k}$ are linearly interpolated between successive anchor MDPs over windows of $100$ episodes. Consequently, the anchor responsible for high terminal reward and directed transitions changes smoothly rather than abruptly.

\paragraph{Block Low-Rank.}
We construct a block-structured low-rank linear MDP in the same finite-horizon linear factorization model, motivated by low-rank transition models for reinforcement learning \cite{agarwal2020lowrank}. States are partitioned into latent blocks, and the feature vector of a state-action pair is mostly concentrated on a block coordinate determined by the current state block and the action. A small simplex noise component is added so that features are not exactly one-hot, but the dominant coordinate still encodes the block-level action effect. For each base MDP and each latent coordinate, the transition measure sends most mass to states in a destination block determined by the source block and the base index, with a small amount of global noise over all states. Rewards are low throughout most of the episode. Near the end of the horizon, the current good block receives reward $0.5$ at the penultimate step and reward $1$ at the terminal step, while a neighboring block receives a smaller decoy reward. Unless otherwise stated, we use $S=24$, $A=5$, $H=10$, $d=12$, $12$ latent blocks, and $5$ base MDPs.

\emph{Non-stationarity.} For abrupt PS experiments, the active block dynamics and good reward block switch among the $5$ base MDPs according to the geometric change-point model \cite{gerogiannis2024blackboxfeas}, with parameter $T^{-\xi}$ for $\xi\in\{0.4,0.6,0.8\}$ over $T=50000$ episodes. For drifting experiments, the reward vectors and transition measures are convexly interpolated between consecutive base MDPs over $100$-episode windows, as in the gradual protocol of \cite{zhou2022restartlsviucb}. This produces a smooth movement of the favorable latent block and the corresponding block-to-block transition pattern.

\subsection{Hardware Specifications}
All experiments were employed on a desktop using an Intel(R) Xeon(R) W-2245 processor with 128 GB RAM.

\subsection{Enhanced Experimental Plots}
\label{app:enhanced_plots}

\begin{figure}[!htbp]
\centering

\begin{subfigure}{0.85\linewidth}
  \centering
  \includegraphics[width=\linewidth]{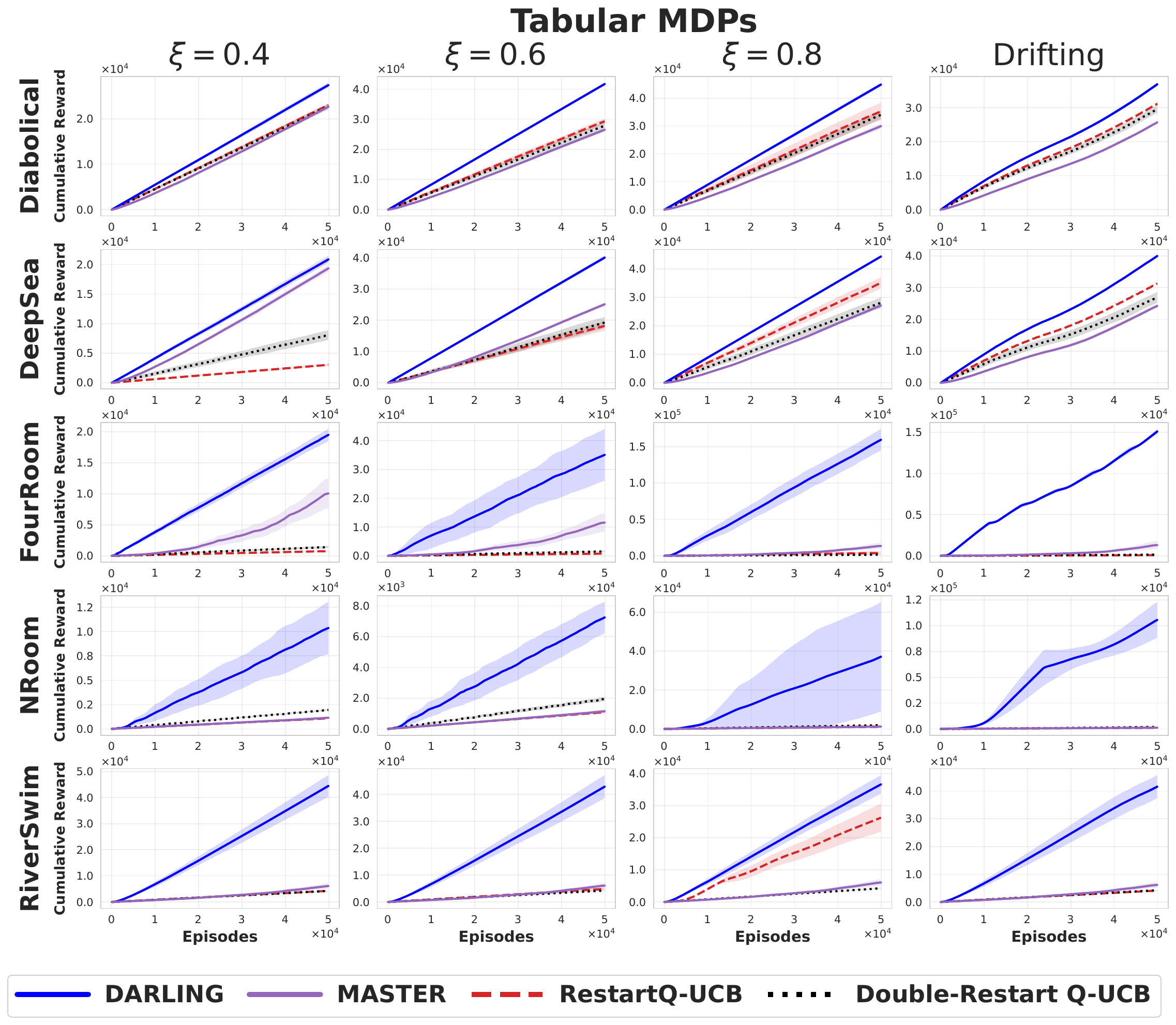}
  \caption{Tabular MDPs.}
  \label{fig:experiments_tabular}
\end{subfigure}

\vspace{0.8em}

\begin{subfigure}{0.85\linewidth}
  \centering
  \includegraphics[width=\linewidth]{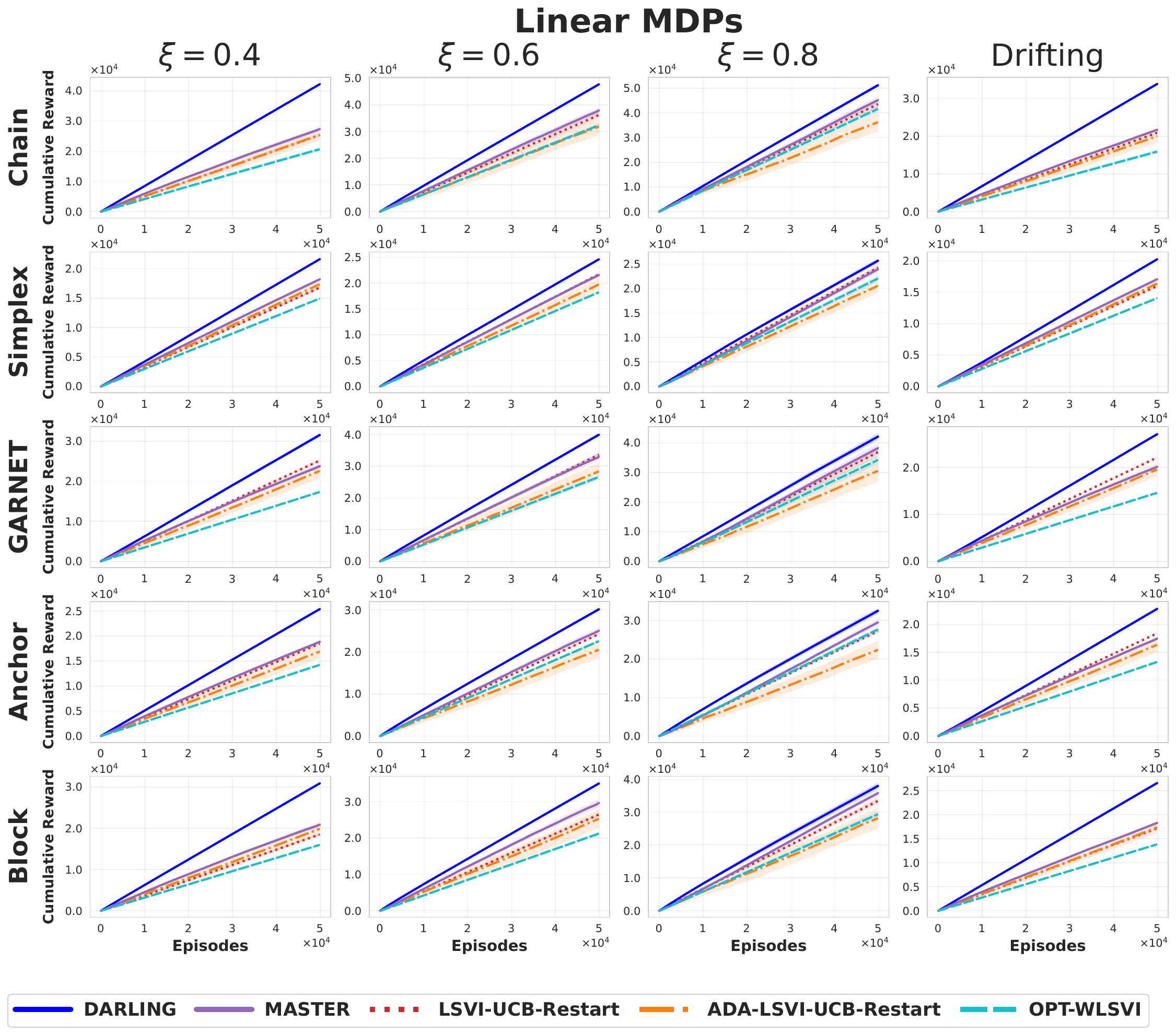}
  \caption{Linear MDPs.}
  \label{fig:experiments_linear}
\end{subfigure}

\caption{Cumulative reward results for the experiments (higher is better). DARLING outperforms all state-of-the-art baselines across the considered tabular and linear settings.}
\label{fig:enhanced_experiments}
\end{figure}

\end{document}